\newcommand{\psd}{\succcurlyeq}
\newcommand{\tr}{\textrm{tr}}
\newcommand{\half}{\frac{1}{2}}
\newcommand{\nhalf}{\nicefrac{1}{2}}
\newtheorem{assumption}{Assumption}[section]
\newtheorem{theorem}{Theorem}[section]
\newtheorem{inftheorem}{Informal Theorem}[section]
\newtheorem{corollary}{Corollary}[section]
\newtheorem{remark}{Remark}[section]
\newtheorem{lemma}[theorem]{Lemma}
\newtheorem{lemmano}{}{}
\renewcommand\thelemmano{\unskip}
\newtheorem{prop}[theorem]{Proposition}
\newtheorem{claim}[theorem]{Claim}
\newtheorem{definition}{Definition}[section]
\newtheorem{fact}{Fact}[section]
\newtheorem{algo}{Algorithm}[section]
\newtheorem*{rep@theorem}{\rep@title}
\newcommand{\newreptheorem}[2]{%
\newenvironment{rep#1}[1]{%
 \def\rep@title{#2 \ref{##1}}%
 \begin{rep@theorem}}%
 {\end{rep@theorem}}}
\newcommand{\ex}{\mathop{\mathbb{E}}\limits}
\newcommand{\ep}{\mathop{\mathbb{E}}}
\newcommand{\revise}[1]{\textcolor{red}{[#1]}}
\newenvironment{itemize*}%
{\begin{itemize}[leftmargin=*,topsep=0pt]%
		\setlength{\itemsep}{0pt}%
		\setlength{\parskip}{0pt}}%
	{\end{itemize}}
\newenvironment{enumerate*}%
{\begin{enumerate}[leftmargin=*,topsep=0pt]%
		\setlength{\itemsep}{0pt}%
		\setlength{\parskip}{0pt}}%
	{\end{enumerate}}
\newcommand{\figleft}{{\em (Left)}}
\newcommand{\figcenter}{{\em (Center)}}
\newcommand{\figright}{{\em (Right)}}
\newcommand{\figtop}{{\em (Top)}}
\newcommand{\figbottom}{{\em (Bottom)}}
\newcommand{\captiona}{{\em (a)}}
\newcommand{\captionb}{{\em (b)}}
\newcommand{\captionc}{{\em (c)}}
\newcommand{\captiond}{{\em (d)}}
\newcommand{\newterm}[1]{{\bf #1}}
\def\figref#1{figure~\ref{#1}}
\def\Figref#1{Figure~\ref{#1}}
\def\twofigref#1#2{figures \ref{#1} and \ref{#2}}
\def\quadfigref#1#2#3#4{figures \ref{#1}, \ref{#2}, \ref{#3} and \ref{#4}}
\def\secref#1{section~\ref{#1}}
\def\Secref#1{Section~\ref{#1}}
\def\apxref#1{appendix~\ref{#1}}
\def\Apxref#1{Appendix~\ref{#1}}
\def\twosecrefs#1#2{sections \ref{#1} and \ref{#2}}
\def\Twosecrefs#1#2{Sections \ref{#1} and \ref{#2}}
\def\secrefs#1#2#3{sections \ref{#1}, \ref{#2} and \ref{#3}}
\def\eqref#1{equation~(\ref{#1})}
\def\Eqref#1{Equation~(\ref{#1})}
\def\plaineqref#1{\ref{#1}}
\def\chapref#1{chapter~\ref{#1}}
\def\Chapref#1{Chapter~\ref{#1}}
\def\rangechapref#1#2{chapters\ref{#1}--\ref{#2}}
\def\algref#1{algorithm~\ref{#1}}
\def\Algref#1{Algorithm~\ref{#1}}
\def\twoalgref#1#2{algorithms \ref{#1} and \ref{#2}}
\def\Twoalgref#1#2{Algorithms \ref{#1} and \ref{#2}}
\def\partref#1{part~\ref{#1}}
\def\Partref#1{Part~\ref{#1}}
\def\twopartref#1#2{parts \ref{#1} and \ref{#2}}
\def\thmref#1{theorem~\ref{#1}}
\def\twothmrefs#1#2{theorems~\ref{#1} and \ref{#2}}
\def\Twothmrefs#1#2{Theorems~\ref{#1} and \ref{#2}}
\def\Thmref#1{Theorem~\ref{#1}}
\def\lemref#1{lemma~\ref{#1}}
\def\Lemref#1{Lemma~\ref{#1}}
\def\defref#1{definition~\ref{#1}}
\def\Defref#1{Definition~\ref{#1}}
\def\asmpref#1{assumption~\ref{#1}}
\def\Asmpref#1{Assumption~\ref{#1}}
\def\twoEqref#1#2{Equations (\ref{#1}) and (\ref{#2})}
\def\corref#1{corollary~\ref{#1}}
\def\Corref#1{Corollary~\ref{#1}}
\def\propref#1{proposition~\ref{#1}}
\def\Propref#1{Proposition~\ref{#1}}
\def\claimref#1{claim~\ref{#1}}
\def\Claimref#1{Claim~\ref{#1}}
\def\tableref#1{table~\ref{#1}}
\def\Tableref#1{Table~\ref{#1}}
\def\ceil#1{\lceil #1 \rceil}
\def\floor#1{\lfloor #1 \rfloor}
\def\1{\bm{1}}
\newcommand{\train}{\mathcal{D}}
\newcommand{\valid}{\mathcal{D_{\mathrm{valid}}}}
\newcommand{\test}{\mathcal{D_{\mathrm{test}}}}
\def\eps{{\epsilon}}
\def\reta{{\textnormal{$\eta$}}}
\def\ra{{\textnormal{a}}}
\def\rb{{\textnormal{b}}}
\def\rc{{\textnormal{c}}}
\def\rd{{\textnormal{d}}}
\def\re{{\textnormal{e}}}
\def\rf{{\textnormal{f}}}
\def\rg{{\textnormal{g}}}
\def\rh{{\textnormal{h}}}
\def\ri{{\textnormal{i}}}
\def\rj{{\textnormal{j}}}
\def\rk{{\textnormal{k}}}
\def\rl{{\textnormal{l}}}
\def\rn{{\textnormal{n}}}
\def\ro{{\textnormal{o}}}
\def\rp{{\textnormal{p}}}
\def\rq{{\textnormal{q}}}
\def\rr{{\textnormal{r}}}
\def\rs{{\textnormal{s}}}
\def\rt{{\textnormal{t}}}
\def\ru{{\textnormal{u}}}
\def\rv{{\textnormal{v}}}
\def\rw{{\textnormal{w}}}
\def\rx{{\textnormal{x}}}
\def\ry{{\textnormal{y}}}
\def\rz{{\textnormal{z}}}
\def\rvepsilon{{\mathbf{\epsilon}}}
\def\rvtheta{{\mathbf{\theta}}}
\def\rva{{\mathbf{a}}}
\def\rvb{{\mathbf{b}}}
\def\rvc{{\mathbf{c}}}
\def\rvd{{\mathbf{d}}}
\def\rve{{\mathbf{e}}}
\def\rvf{{\mathbf{f}}}
\def\rvg{{\mathbf{g}}}
\def\rvh{{\mathbf{h}}}
\def\rvu{{\mathbf{i}}}
\def\rvj{{\mathbf{j}}}
\def\rvk{{\mathbf{k}}}
\def\rvl{{\mathbf{l}}}
\def\rvm{{\mathbf{m}}}
\def\rvn{{\mathbf{n}}}
\def\rvo{{\mathbf{o}}}
\def\rvp{{\mathbf{p}}}
\def\rvq{{\mathbf{q}}}
\def\rvr{{\mathbf{r}}}
\def\rvs{{\mathbf{s}}}
\def\rvt{{\mathbf{t}}}
\def\rvu{{\mathbf{u}}}
\def\rvv{{\mathbf{v}}}
\def\rvw{{\mathbf{w}}}
\def\rvx{{\mathbf{x}}}
\def\rvy{{\mathbf{y}}}
\def\rvz{{\mathbf{z}}}
\def\erva{{\textnormal{a}}}
\def\ervb{{\textnormal{b}}}
\def\ervc{{\textnormal{c}}}
\def\ervd{{\textnormal{d}}}
\def\erve{{\textnormal{e}}}
\def\ervf{{\textnormal{f}}}
\def\ervg{{\textnormal{g}}}
\def\ervh{{\textnormal{h}}}
\def\ervi{{\textnormal{i}}}
\def\ervj{{\textnormal{j}}}
\def\ervk{{\textnormal{k}}}
\def\ervl{{\textnormal{l}}}
\def\ervm{{\textnormal{m}}}
\def\ervn{{\textnormal{n}}}
\def\ervo{{\textnormal{o}}}
\def\ervp{{\textnormal{p}}}
\def\ervq{{\textnormal{q}}}
\def\ervr{{\textnormal{r}}}
\def\ervs{{\textnormal{s}}}
\def\ervt{{\textnormal{t}}}
\def\ervu{{\textnormal{u}}}
\def\ervv{{\textnormal{v}}}
\def\ervw{{\textnormal{w}}}
\def\ervx{{\textnormal{x}}}
\def\ervy{{\textnormal{y}}}
\def\ervz{{\textnormal{z}}}
\def\rmA{{\mathbf{A}}}
\def\rmB{{\mathbf{B}}}
\def\rmC{{\mathbf{C}}}
\def\rmD{{\mathbf{D}}}
\def\rmE{{\mathbf{E}}}
\def\rmF{{\mathbf{F}}}
\def\rmG{{\mathbf{G}}}
\def\rmH{{\mathbf{H}}}
\def\rmI{{\mathbf{I}}}
\def\rmJ{{\mathbf{J}}}
\def\rmK{{\mathbf{K}}}
\def\rmL{{\mathbf{L}}}
\def\rmM{{\mathbf{M}}}
\def\rmN{{\mathbf{N}}}
\def\rmO{{\mathbf{O}}}
\def\rmP{{\mathbf{P}}}
\def\rmQ{{\mathbf{Q}}}
\def\rmR{{\mathbf{R}}}
\def\rmS{{\mathbf{S}}}
\def\rmT{{\mathbf{T}}}
\def\rmU{{\mathbf{U}}}
\def\rmV{{\mathbf{V}}}
\def\rmW{{\mathbf{W}}}
\def\rmX{{\mathbf{X}}}
\def\rmY{{\mathbf{Y}}}
\def\rmZ{{\mathbf{Z}}}
\def\ermA{{\textnormal{A}}}
\def\ermB{{\textnormal{B}}}
\def\ermC{{\textnormal{C}}}
\def\ermD{{\textnormal{D}}}
\def\ermE{{\textnormal{E}}}
\def\ermF{{\textnormal{F}}}
\def\ermG{{\textnormal{G}}}
\def\ermH{{\textnormal{H}}}
\def\ermI{{\textnormal{I}}}
\def\ermJ{{\textnormal{J}}}
\def\ermK{{\textnormal{K}}}
\def\ermL{{\textnormal{L}}}
\def\ermM{{\textnormal{M}}}
\def\ermN{{\textnormal{N}}}
\def\ermO{{\textnormal{O}}}
\def\ermP{{\textnormal{P}}}
\def\ermQ{{\textnormal{Q}}}
\def\ermR{{\textnormal{R}}}
\def\ermS{{\textnormal{S}}}
\def\ermT{{\textnormal{T}}}
\def\ermU{{\textnormal{U}}}
\def\ermV{{\textnormal{V}}}
\def\ermW{{\textnormal{W}}}
\def\ermX{{\textnormal{X}}}
\def\ermY{{\textnormal{Y}}}
\def\ermZ{{\textnormal{Z}}}
\def\vzero{{\bm{0}}}
\def\vone{{\bm{1}}}
\def\vmu{{\bm{\mu}}}
\def\vtheta{{\bm{\theta}}}
\def\va{{\bm{a}}}
\def\vb{{\bm{b}}}
\def\vc{{\bm{c}}}
\def\vd{{\bm{d}}}
\def\ve{{\bm{e}}}
\def\vf{{\bm{f}}}
\def\vg{{\bm{g}}}
\def\vh{{\bm{h}}}
\def\vi{{\bm{i}}}
\def\vj{{\bm{j}}}
\def\vk{{\bm{k}}}
\def\vl{{\bm{l}}}
\def\vm{{\bm{m}}}
\def\vn{{\bm{n}}}
\def\vo{{\bm{o}}}
\def\vp{{\bm{p}}}
\def\vq{{\bm{q}}}
\def\vr{{\bm{r}}}
\def\vs{{\bm{s}}}
\def\vt{{\bm{t}}}
\def\vu{{\bm{u}}}
\def\vv{{\bm{v}}}
\def\vw{{\bm{w}}}
\def\vx{{\bm{x}}}
\def\vy{{\bm{y}}}
\def\vz{{\bm{z}}}
\def\evalpha{{\alpha}}
\def\evbeta{{\beta}}
\def\evepsilon{{\epsilon}}
\def\evlambda{{\lambda}}
\def\evomega{{\omega}}
\def\evmu{{\mu}}
\def\evpsi{{\psi}}
\def\evsigma{{\sigma}}
\def\evtheta{{\theta}}
\def\eva{{a}}
\def\evb{{b}}
\def\evc{{c}}
\def\evd{{d}}
\def\eve{{e}}
\def\evf{{f}}
\def\evg{{g}}
\def\evh{{h}}
\def\evi{{i}}
\def\evj{{j}}
\def\evk{{k}}
\def\evl{{l}}
\def\evm{{m}}
\def\evn{{n}}
\def\evo{{o}}
\def\evp{{p}}
\def\evq{{q}}
\def\evr{{r}}
\def\evs{{s}}
\def\evt{{t}}
\def\evu{{u}}
\def\evv{{v}}
\def\evw{{w}}
\def\evx{{x}}
\def\evy{{y}}
\def\evz{{z}}
\def\mA{{\bm{A}}}
\def\mB{{\bm{B}}}
\def\mC{{\bm{C}}}
\def\mD{{\bm{D}}}
\def\mE{{\bm{E}}}
\def\mF{{\bm{F}}}
\def\mG{{\bm{G}}}
\def\mH{{\bm{H}}}
\def\mI{{\bm{I}}}
\def\mJ{{\bm{J}}}
\def\mK{{\bm{K}}}
\def\mL{{\bm{L}}}
\def\mM{{\bm{M}}}
\def\mN{{\bm{N}}}
\def\mO{{\bm{O}}}
\def\mP{{\bm{P}}}
\def\mQ{{\bm{Q}}}
\def\mR{{\bm{R}}}
\def\mS{{\bm{S}}}
\def\mT{{\bm{T}}}
\def\mU{{\bm{U}}}
\def\mV{{\bm{V}}}
\def\mW{{\bm{W}}}
\def\mX{{\bm{X}}}
\def\mY{{\bm{Y}}}
\def\mZ{{\bm{Z}}}
\def\mBeta{{\bm{\beta}}}
\def\mPhi{{\bm{\Phi}}}
\def\mLambda{{\bm{\Lambda}}}
\def\mSigma{{\bm{\Sigma}}}
\DeclareMathAlphabet{\mathsfit}{\encodingdefault}{\sfdefault}{m}{sl}
\SetMathAlphabet{\mathsfit}{bold}{\encodingdefault}{\sfdefault}{bx}{n}
\newcommand{\tens}[1]{\bm{\mathsfit{#1}}}
\def\tA{{\tens{A}}}
\def\tB{{\tens{B}}}
\def\tC{{\tens{C}}}
\def\tD{{\tens{D}}}
\def\tE{{\tens{E}}}
\def\tF{{\tens{F}}}
\def\tG{{\tens{G}}}
\def\tH{{\tens{H}}}
\def\tI{{\tens{I}}}
\def\tJ{{\tens{J}}}
\def\tK{{\tens{K}}}
\def\tL{{\tens{L}}}
\def\tM{{\tens{M}}}
\def\tN{{\tens{N}}}
\def\tO{{\tens{O}}}
\def\tP{{\tens{P}}}
\def\tQ{{\tens{Q}}}
\def\tR{{\tens{R}}}
\def\tS{{\tens{S}}}
\def\tT{{\tens{T}}}
\def\tU{{\tens{U}}}
\def\tV{{\tens{V}}}
\def\tW{{\tens{W}}}
\def\tX{{\tens{X}}}
\def\tY{{\tens{Y}}}
\def\tZ{{\tens{Z}}}
\def\gA{{\mathcal{A}}}
\def\gB{{\mathcal{B}}}
\def\gC{{\mathcal{C}}}
\def\gD{{\mathcal{D}}}
\def\gE{{\mathcal{E}}}
\def\gF{{\mathcal{F}}}
\def\gG{{\mathcal{G}}}
\def\gH{{\mathcal{H}}}
\def\gI{{\mathcal{I}}}
\def\gJ{{\mathcal{J}}}
\def\gK{{\mathcal{K}}}
\def\gL{{\mathcal{L}}}
\def\gM{{\mathcal{M}}}
\def\gN{{\mathcal{N}}}
\def\gO{{\mathcal{O}}}
\def\gP{{\mathcal{P}}}
\def\gQ{{\mathcal{Q}}}
\def\gR{{\mathcal{R}}}
\def\gS{{\mathcal{S}}}
\def\gT{{\mathcal{T}}}
\def\gU{{\mathcal{U}}}
\def\gV{{\mathcal{V}}}
\def\gW{{\mathcal{W}}}
\def\gX{{\mathcal{X}}}
\def\gY{{\mathcal{Y}}}
\def\gZ{{\mathcal{Z}}}
\def\sA{{\mathbb{A}}}
\def\sB{{\mathbb{B}}}
\def\sC{{\mathbb{C}}}
\def\sD{{\mathbb{D}}}
\def\sF{{\mathbb{F}}}
\def\sG{{\mathbb{G}}}
\def\sH{{\mathbb{H}}}
\def\sI{{\mathbb{I}}}
\def\sJ{{\mathbb{J}}}
\def\sK{{\mathbb{K}}}
\def\sL{{\mathbb{L}}}
\def\sM{{\mathbb{M}}}
\def\sN{{\mathbb{N}}}
\def\sO{{\mathbb{O}}}
\def\sP{{\mathbb{P}}}
\def\sQ{{\mathbb{Q}}}
\def\sR{{\mathbb{R}}}
\def\sS{{\mathbb{S}}}
\def\sT{{\mathbb{T}}}
\def\sU{{\mathbb{U}}}
\def\sV{{\mathbb{V}}}
\def\sW{{\mathbb{W}}}
\def\sX{{\mathbb{X}}}
\def\sY{{\mathbb{Y}}}
\def\sZ{{\mathbb{Z}}}
\def\emLambda{{\Lambda}}
\def\emA{{A}}
\def\emB{{B}}
\def\emC{{C}}
\def\emD{{D}}
\def\emE{{E}}
\def\emF{{F}}
\def\emG{{G}}
\def\emH{{H}}
\def\emI{{I}}
\def\emJ{{J}}
\def\emK{{K}}
\def\emL{{L}}
\def\emM{{M}}
\def\emN{{N}}
\def\emO{{O}}
\def\emP{{P}}
\def\emQ{{Q}}
\def\emR{{R}}
\def\emS{{S}}
\def\emT{{T}}
\def\emU{{U}}
\def\emV{{V}}
\def\emW{{W}}
\def\emX{{X}}
\def\emY{{Y}}
\def\emZ{{Z}}
\def\emSigma{{\Sigma}}
\newcommand{\etens}[1]{\mathsfit{#1}}
\def\etLambda{{\etens{\Lambda}}}
\def\etA{{\etens{A}}}
\def\etB{{\etens{B}}}
\def\etC{{\etens{C}}}
\def\etD{{\etens{D}}}
\def\etE{{\etens{E}}}
\def\etF{{\etens{F}}}
\def\etG{{\etens{G}}}
\def\etH{{\etens{H}}}
\def\etI{{\etens{I}}}
\def\etJ{{\etens{J}}}
\def\etK{{\etens{K}}}
\def\etL{{\etens{L}}}
\def\etM{{\etens{M}}}
\def\etN{{\etens{N}}}
\def\etO{{\etens{O}}}
\def\etP{{\etens{P}}}
\def\etQ{{\etens{Q}}}
\def\etR{{\etens{R}}}
\def\etS{{\etens{S}}}
\def\etT{{\etens{T}}}
\def\etU{{\etens{U}}}
\def\etV{{\etens{V}}}
\def\etW{{\etens{W}}}
\def\etX{{\etens{X}}}
\def\etY{{\etens{Y}}}
\def\etZ{{\etens{Z}}}
\newcommand{\pdata}{p_{\rm{data}}}
\newcommand{\ptrain}{\hat{p}_{\rm{data}}}
\newcommand{\Ptrain}{\hat{P}_{\rm{data}}}
\newcommand{\pmodel}{p_{\rm{model}}}
\newcommand{\Pmodel}{P_{\rm{model}}}
\newcommand{\ptildemodel}{\tilde{p}_{\rm{model}}}
\newcommand{\pencode}{p_{\rm{encoder}}}
\newcommand{\pdecode}{p_{\rm{decoder}}}
\newcommand{\precons}{p_{\rm{reconstruct}}}
\newcommand{\laplace}{\mathrm{Laplace}} 
\newcommand{\E}{\mathbb{E}}
\newcommand{\Ls}{\mathcal{L}}
\newcommand{\R}{\mathbb{R}}
\newcommand{\emp}{\tilde{p}}
\newcommand{\lr}{\alpha}
\newcommand{\reg}{\lambda}
\newcommand{\rect}{\mathrm{rectifier}}
\newcommand{\softmax}{\mathrm{softmax}}
\newcommand{\sigmoid}{\sigma}
\newcommand{\softplus}{\zeta}
\newcommand{\KL}{D_{\mathrm{KL}}}
\newcommand{\Var}{\mathrm{Var}}
\newcommand{\standarderror}{\mathrm{SE}}
\newcommand{\Cov}{\mathrm{Cov}}
\newcommand{\normlzero}{L^0}
\newcommand{\normlone}{L^1}
\newcommand{\normltwo}{L^2}
\newcommand{\normlp}{L^p}
\newcommand{\normmax}{L^\infty}
\newcommand{\parents}{Pa} 
\DeclareMathOperator*{\argmax}{arg\,max}
\DeclareMathOperator*{\argmin}{arg\,min}
\DeclareMathOperator{\sign}{sign}
\DeclareMathOperator{\Tr}{Tr}
\let\ab\allowbreak
\def\twoproprefs#1#2{propositions \ref{#1} and \ref{#2}}
\newcommand{\TV}{\mathrm{TV}}
\newcommand{\plmp}{p_{\cdot|s}}
\newcommand{\lmp}{{\bm{p}_{\cdot|s}}}
\newcommand{\ptrueparg}[1]{p^*_{\cdot|#1}}
\newcommand{\trueparg}[1]{\bm{p}^*_{\cdot|#1}}
\newcommand{\ptruep}{\ptrueparg{s}}
\newcommand{\truep}{\trueparg{s}}
\newcommand{\psmp}{p_{\embfn(s)}}
\newcommand{\smp}{\bm{p}_{\embfn(s)}}
\newcommand{\psmphip}{p_{\embfn(s),\Phi}}
\newcommand{\smphip}{\bm{p}_{\embfn(s),\Phi}}
\newcommand{\psmoptp}{p_{\embfnopt(s)}}
\newcommand{\smoptp}{\bm{p}_{\embfnopt(s)}}
\newcommand{\pthetaphip}{p_{\theta,\Phi}}
\newcommand{\pthetap}{p_{\theta}}
\newcommand{\thetap}{\bm{p}_{\theta}}
\newcommand{\thetaphip}{\bm{p}_{\theta,\Phi}}
\newcommand{\srev}[1]{\textcolor{violet}{[#1]}}
\newcommand{\clf}{\mW}
\newcommand{\clfv}{\vv}
\newcommand{\biasb}{b}
\newcommand{\numwords}{\alpha}
\newcommand{\objname}{Quad}
\newcommand{\expnumber}[2]{{#1}\mathrm{e}{#2}}
\newcommand{\vstar}{{\vv^*}}
\newcommand{\pstar}{p_{L}}
\newcommand{\pT}{p_{\gT}}
\newcommand{\lxent}{\ell_{\textrm{xent}}}
\newcommand{\lxents}{\ell_{\textrm{xent},s}}
\newcommand{\epsxent}{\epsilon_{\textrm{xent}}}
\newcommand{\lclf}{\ell_{\gT}}
\newcommand{\lquad}{\ell_{quad}}
\newcommand{\lquads}{\ell_{quad,s}}
\newcommand{\phip}[1]{\Phi p_{#1}}
\newcommand{\embfn}{f}
\newcommand{\embfnopt}{f^*}
\newcommand{\embfnclass}{\gF}
\newcommand{\thetaopt}{\embfnopt(s)}
\newcommand{\trf}[1]{\gamma_{\embfn}(#1)}
\newcommand{\subst}{\Omega^*}
\newcommand{\id}{\mI}
\newcommand{\rank}{\textrm{rank}}
\newcommand{\sm}{\textrm{softmax}}
\newcommand{\vphi}{{\bm{\phi}}}
\newcommand{\vPhi}{{\bm{\Phi}}}
\renewcommand{\vtheta}{{\bm{\theta}}}
\begin{document}

\title{A Mathematical Exploration of Why Language Models\\ Help Solve Downstream Tasks}
\date{}
\author[1]{{\large Nikunj Saunshi}}
\author[1]{\large Sadhika Malladi}
\author[1,2]{\large Sanjeev Arora}

\affil[1]{\small Department of Computer Science, Princeton University}
\affil[ ]{\texttt {\{nsaunshi, smalladi, arora\}@cs.princeton.edu}}
\affil[2]{\small Institute for Advanced Study}
\maketitle

\begin{abstract}

Autoregressive language models, pretrained using large text corpora to do well on next word prediction, have been successful at solving many downstream tasks, even with zero-shot usage.
However, there is little theoretical understanding of this success. 
This paper initiates a mathematical study of this phenomenon for the downstream task of text classification by considering the following questions: (1) What is the intuitive connection between the pretraining task of next word prediction and text classification? (2) How can we mathematically formalize this connection and quantify the benefit of language modeling?
For (1), we hypothesize, and verify empirically, that classification tasks of interest can be reformulated as sentence completion tasks, thus making language modeling a meaningful pretraining task.
With a mathematical formalization of this hypothesis, we make progress towards (2) and show that language models that are $\epsilon$-optimal in cross-entropy (log-perplexity) learn features that can {\em linearly solve} such classification tasks with $\gO(\sqrt{\epsilon})$ error, thus demonstrating that doing well on language modeling can be beneficial for downstream tasks.
We experimentally verify various assumptions and theoretical findings, and also use insights from the analysis to design a new objective function that performs well on some classification tasks.

\end{abstract}

\section{Introduction}
\label{sec:intro}

The construction of increasingly powerful language models has revolutionized natural language processing (NLP).
Using gigantic text corpora and a cross-entropy objective, language models are trained to predict a distribution over the {\em next word} to follow a given context (piece of text).
Pretrained language models are useful for many downstream NLP tasks, either as 
initializations \citep{ramachandran2017unsupervised,howard2018universal} or as a source of contextual word embeddings \citep{mccann2017learned, peters2018deep}.
Recent models~\citep{radford2019language,brown2020language} have even bypassed the need for careful fine-tuning and have demonstrated strong performance on downstream tasks without fine-tuning.
This work aims to understand this incredible success of language models.

Since next word prediction is a powerful test of language understanding, at an intuitive level it is believable that doing well on language modeling can help with many diverse NLP tasks.
At the same time, it is quite intriguing how improvements in the test perplexity of language models translate to better downstream performance.
Attempting to understand this phenomenon naturally raises the following questions: {\em (a) why should training on the next-word prediction task, with the cross-entropy objective, result in useful features for downstream tasks? (b) what role do inductive biases of the model architecture and training algorithms play in this empirical success?}
Given the nascency of deep learning theory, it is very challenging to say anything mathematically precise about (b) for deep networks.
Given these difficulties, this paper focusses on the mathematical study of (a) by exploring if and how quantitative improvements on downstream NLP tasks can be {\em mathematically guaranteed} for language models that do well on the cross-entropy objective.
%
As a first cut analysis, we restrict attention to {\em text classification tasks} and the striking observation that they can be solved fairly well with {\em linear classifiers} on top of fixed language models features, i.e. without finetuning (\Tableref{tab:zeroshot}).
Although we treat models as black boxes, just first-order optimality conditions of the cross-entropy objective reveal interesting properties of learned features, leading to an understanding of their success on classification tasks.
Insights from the analysis help us construct a simple objective ({\objname}), that provably learns useful features for classification tasks, as also verified empirically.
We summarize our contributions along with an overview of the paper below.

In \Secref{sec:setting}, we set up notation and formally describe language modeling and the ubiquitous low-dimensional softmax parametrization, along with a description of the cross-entropy objective and properties of its optimal solutions.
We then describe the observation, in \Secref{subsec:sentence_completion}, that text classification tasks of interest can be reformulated as sentence completion tasks.
Amenability to such a reformulation is mathematically formalized (\Secref{subsec:natural_tasks_main}) as the classification task being a {\em natural task}: tasks that can be solved {\em linearly} using conditional distribution over words following an input text.
\Secref{sec:guarantees} presents our main results, \twothmrefs{thm:robust_unconstrained_lm}{thm:robust_softmax_lm}, that use the above formalization to mathematically quantify the utility of language model features on natural tasks: $\epsilon$-optimal language model (in cross-entropy) will do $\gO(\sqrt{\epsilon})$-well on such tasks.
\Thmref{thm:robust_softmax_lm} shows a stronger result for low-dimensional softmax models by leveraging a new tool, {\em conditional mean features} (\Defref{def:conditional_mean_features}), which we show (\Secref{sec:exps}) to be effective in practice. 
The usefulness of the language model features themselves is demonstrated by arguing a weak linear relationship between them and conditional mean features.
In \Secref{subsec:quad}, we present a new mathematically motivated objective ({\em \objname}) that has formal guarantees.
Experiments in \Secref{sec:exps} verify the sentence completion reformulation idea and the good performance of conditional mean features on standard benchmarks.
\subsection{Related work}
\textbf{Text embedding methods:}
Prior to language models, large text corpora like Wikipedia \citep{merity2016pointer} were used to learn low-dimensional embeddings for words \citep{mikolov2013distributed, mikolov2013efficient, pennington2014glove} and subsequently for sentences \citep{kiros2015skip, arora2017simple, pagliardini2018unsupervised, logeswaran2018efficient} for downstream task usage.
These methods were inspired by the distributional hypothesis \citep{firth1957synopsis, harris1954distributional}, which posits that meaning of text is determined in part by the surrounding context.
Recent methods like BERT \citep{devlin2018bert} and variants \citep{lan2019albert,yang2019xlnet,liu2019roberta} learn models from auxiliary tasks, such as sentence completion, and are among the top performers on downstream tasks. 
In this work we consider autoregressive models and make a distinction from masked language models like BERT; \Tableref{tab:appendix_zeroshot} shows that language model and BERT features have comparable performances.

\textbf{Language models for downstream tasks:}
We are interested in language models \citep{chen1999empirical}, especially those that use neural networks to compute low-dimensional features for contexts and parametrize the next word distribution using softmax \citep{xu2000can, bengio2003neural}.
Language models have shown to be useful for downstream tasks as initializations \citep{ramachandran2017unsupervised,howard2018universal} or as learned feature maps \citep{radford2017learning,mccann2017learned, peters2018deep}.
The idea of phrasing classification tasks as sentence completion problems to use language models is motivated by recent works \citep{radford2019language, puri2019zeroshot, schick2020just} that show that many downstream tasks can be solved by next word prediction for an appropriately conditioned language model.
This idea also shares similarities with work that phrase a suite of downstream tasks as question-answering tasks \citep{mccann2018natural} or text-to-text tasks \citep{raffel2019exploring} and symbolic reasoning as fill-in-the-blank tasks \citep{talmor2019olmpics}.
Our work exploits this prevalent idea of task rephrasing to theoretically analyze why language models succeed on downstream tasks.

\textbf{Relevant theory:}
Since the success of early word embedding algorithms like word2vec \citep{mikolov2013efficient} and GloVe \citep{pennington2014glove}, there have been attempts to understand them theoretically.
\cite{levy2014neural} argue that word2vec algorithm implicitly factorizes the PMI matrix.
Noise Contrastive Estimation (NCE) theory is used to understand word embeddings \citep{dyer2014notes} and to show parameter recovery for negative sampling based conditional models \citep{ma2018noise}.
A latent variable model \citep{arora2016latent} is used to explain and unify various word embedding algorithms.
Theoretical justification is provided for sentence embedding methods either by using a latent variable model \citep{arora2017simple} or through the lens of compressed sensing \citep{arora2018compressed}.
Also relevant is recent work on theory for contrastive learning \citep{arora2019theoretical,tosh2020contrastive,tosh2020constrastive1,wang2020understanding} and reconstruction-based methods \citep{lee2020predicting}, which analyze the utility of self-supervised representations learned for downstream tasks.
Our work is the first to analyze the efficacy of language model features on downstream tasks.

\section{Language modeling and optimal solutions}
\label{sec:setting}

We use $\gS$ to denote the discrete set of all contexts, i.e. complete or partial sentences (prefixes), $\gW$ to denote the vocabulary of words, with $V=|\gW|$ being the vocabulary size.
For a discrete set $A$, let $\Delta_A$ denote the set of distributions on $A$.
We use $p,\pstar\in\Delta_{\gS}$ to denote probability distributions over $\gS$, and $\plmp,\ptruep\in\Delta_{\gW}$ to denote conditional distributions, where $\plmp(w)$ is the predicted probability of word $w$ following context $s$ and $\ptruep(w)$ denotes the true conditional probability.
Boldface $\lmp,\truep\in\R^V$ denote vectors of probabilities for $\plmp,\ptruep\in\Delta_{\gW}$.
For $\vv\in\R^V$, $\vv(w)$ indexes the coordinate for $w\in\gW$; $\lmp(w)$ is the probability of $w$ according to $\plmp$.
We use $\phi_w\in\R^d$ to denote a $d$-dimensional embedding for word $w$; word embeddings are stacked into the columns $\Phi\in\R^{d\times V}$.
We use $\embfn:\gS\rightarrow\R^d$ for a feature map from contexts to $d$-dimensional embeddings, e.g. $\embfn(s)$ can be the output of a Transformer model for input context $s\in\gS$.
For embeddings $\{\theta_s\}_{s\in\gS}$ with $\theta_s\in\R^D$ (any $D$), we use $\{\theta_s\}$ to denote $g:\gS\rightarrow\R^D$ such that $g(s) = \theta_s$.

\subsection{Language modeling using cross-entropy}\label{subsec:unconstrained_lm}
Language model aims to learn the true distribution of a text corpus and a popular approach to do so is through next word prediction.
Given a context (e.g., a sentence $s\in\gS$), it predicts a distribution $\plmp$ over the word to follow, e.g. for the context ``The food was '', the model could place high probabilities on words ``delicious'', ``expensive'',  ``bland'', etc.
We use $\pstar$ to denote the true distribution over the context set $\gS$ in the language modeling corpus.
A standard approach is to minimize the expected cross-entropy loss between the true distribution $\ptruep$ and the model prediction $\plmp$.
We define the cross-entropy loss for a language model with output vector of probabilities $\{\lmp\}_{s\in\gS}$ as
\begin{equation}
	\lxent(\{\lmp\}) = \ex_{s\sim \pstar}\ex_{w\sim \ptruep} \left[-\log(\lmp(w))\right] = \ex_{s\sim \pstar} \left[\lxents(\lmp)\right]
	\label{eqn:xent}
\end{equation}
To understand what language models learn, we look at the optimal solution of the cross-entropy objective.
While one cannot practically hope to learn the optimal solution due to optimization, statistical and expressivity limitations, the optimal solution at least tells us the best that language modeling can hope to do.
A well-known property of cross-entropy objective is that its optimal solution is $\truep$, which can be proved by noting that $\lxents(\lmp) = D_{\textrm{KL}}(\ptruep, \plmp) + C$.
\begin{prop}[Cross-entropy recovers $\truep$]\label{prop:unconstrained_lm}
	The unique minimizer of $\lxent(\{\lmp\})$ is $\lmp = \truep$ for every $s\in\text{support}(\pstar)$.
\end{prop}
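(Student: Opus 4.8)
The plan is to prove this via the standard decomposition of cross-entropy into KL divergence plus a constant, exactly as the excerpt hints. First I would fix a context $s\in\text{support}(\pstar)$ and work with the per-context loss $\lxents(\lmp) = \ex_{w\sim\ptruep}[-\log(\lmp(w))] = -\sum_{w\in\gW}\ptruep(w)\log\lmp(w)$. Adding and subtracting $\sum_w \ptruep(w)\log\ptruep(w)$ gives $\lxents(\lmp) = \KL(\ptruep, \plmp) + H(\ptruep)$, where $H(\ptruep) = -\sum_w \ptruep(w)\log\ptruep(w)$ is the entropy of the true conditional distribution and does not depend on $\lmp$. Since $\KL(\ptruep,\plmp)\ge 0$ for all $\plmp\in\Delta_\gW$, with equality if and only if $\plmp = \ptruep$ (by strict convexity of $-\log$, i.e. Gibbs' inequality), the per-context loss is minimized uniquely at $\lmp = \truep$.

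Next I would lift this pointwise statement to the full objective. Writing $\lxent(\{\lmp\}) = \ex_{s\sim\pstar}[\lxents(\lmp)] = \sum_{s\in\gS}\pstar(s)\,\lxents(\lmp)$, each term with $\pstar(s) > 0$ is independently minimized by choosing $\lmp = \truep$, and terms with $\pstar(s) = 0$ contribute nothing regardless of the choice of $\lmp$. Hence any minimizer of $\lxent$ must satisfy $\lmp = \truep$ for every $s\in\text{support}(\pstar)$, and conversely this choice attains the minimum; this gives both existence and the claimed uniqueness on the support. One should note the loss could be $+\infty$ if some $\lmp$ puts zero mass where $\ptruep$ is positive, but this does not affect the argument since we are minimizing and the value at $\lmp = \truep$ is finite (equal to $\ex_{s\sim\pstar}[H(\ptruep)]$).

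I do not anticipate a genuine obstacle here — this is a textbook fact. The only points requiring a little care are: (i) justifying the equality condition in Gibbs' inequality (strict convexity of $t\mapsto -\log t$, or equivalently that $\log x \le x - 1$ with equality only at $x=1$); and (ii) being precise that "unique minimizer" is a statement about the restriction of $\{\lmp\}$ to $\text{support}(\pstar)$, since the objective places no constraint on $\lmp$ for $s\notin\text{support}(\pstar)$. Both are handled by the two paragraphs above, so the full proof is just a clean writeup of this decomposition.
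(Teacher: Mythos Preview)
Your proposal is correct and follows exactly the approach the paper indicates: decompose the per-context cross-entropy as $\lxents(\lmp) = \KL(\ptruep,\plmp) + H(\ptruep)$, use nonnegativity of KL with equality iff the distributions coincide, and then take the expectation over $s\sim\pstar$. The paper states only the key identity and leaves the rest implicit; your writeup fills in those details (including the careful remark that uniqueness is only on $\text{support}(\pstar)$) without deviating from the intended argument.
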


\subsection{Softmax parametrized language modeling}\label{subsec:softmax_lm}
Unlike traditional language models like $n$-gram models, neural language models parametrize the conditional distribution $\plmp$ as a softmax computed using {\em low dimensional} embeddings.
For an embedding $\theta\in\R^d$, the softmax distribution over $\gW$ using word embeddings $\Phi\in\R^{d\times V}$ is $\pthetaphip(w) = {e^{\theta^\top\phi_w}}/{Z_{\theta}}$, where $Z_{\theta}=\sum_{w'\in\gW} e^{\theta^\top\phi_{w'}}$ is the partition function.
While $\pthetaphip$ depends on $\Phi$, we will use $\pthetap$ instead whenever $\Phi$ is clear from context.
Just like $\truep$, we can interpret $\thetap\in\R^V$ as a vector of probabilities for the distribution $\pthetap$.

We now describe the abstraction for softmax models that is applicable to most neural models.
A language model first embeds a context $s$ into $\embfn(s)\in\R^d$ using a feature map $\embfn:\gS\to\sR^d$ that is parametrized by an architecture of choice (e.g. Transformer~\citep{vaswani2017attention}).
The output conditional distribution is set to be the softmax distribution induced by the context embedding $\embfn(s)$ and word embeddings $\Phi$, i.e. $\plmp = \psmp$.
The cross-entropy in its familiar form is presented below
\begin{align}
	\lxent(\embfn, \Phi) = \ex_{s\sim \pstar}\ex_{w\sim \ptruep} \left[-\log(\smp(w))\right] = \ex_{s\sim \pstar}\left[\ex_{w\sim \ptruep}  [-\embfn(s)^\top\phi_w] + \log(Z_{\embfn(s)})\right]\label{eqn:xent_emb}
\end{align}
We rewrite it as $\lxent(\embfn, \Phi) = \ex_{s\sim \pstar} \left[\lxents(\embfn(s), \Phi)\right]$, where $\lxents(\theta,\Phi)=\lxents(\thetaphip)$ is the cross-entropy loss for a context $s$ that uses embedding $\theta$.
Analogous to \Propref{prop:unconstrained_lm}, we would like to know the optimal $d$-dimensional feature map $\embfnopt$ and the induced conditional distribution $\smoptp$\footnote{A finite minimizer may not always exist. This is handled in \Secref{sec:guarantees} that deals with $\epsilon$-optimal solutions.}.
\begin{prop}[Softmax models recover $\truep$ on a subspace]\label{prop:softmax_lm}
	Fix a fixed $\Phi$, if $\embfnopt\in\argmin_{\embfn:\gS\rightarrow\R^d} \lxent(\embfn, \Phi)$ exists, then $\Phi \smoptp = \Phi \truep$ for every $s\in\text{support}(\pstar)$.
\end{prop}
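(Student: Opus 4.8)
The plan is to exploit the fact that the feature map $\embfn:\gS\to\R^d$ is completely unconstrained, so the objective $\lxent(\embfn,\Phi) = \ex_{s\sim\pstar}[\lxents(\embfn(s),\Phi)]$ decouples across contexts: the value assigned to one context has no bearing on the loss contributed by any other context. Hence if $\embfnopt$ is a global minimizer, then for every $s$ with $\pstar(s) > 0$ the vector $\theta^* := \embfnopt(s)$ must be a global minimizer of the one-context loss $\theta\mapsto\lxents(\theta,\Phi)$ over $\theta\in\R^d$; otherwise we could strictly decrease the objective by modifying $\embfnopt$ at that single context, contradicting optimality. For $s\notin\text{support}(\pstar)$ the claim is vacuous since such contexts do not affect the loss.

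Next I would write the one-context loss explicitly as
\begin{equation}
 \lxents(\theta,\Phi) = \ex_{w\sim\ptruep}[-\theta^\top\phi_w] + \log Z_\theta = -\theta^\top\Phi\truep + \log\!\left(\sum_{w'\in\gW} e^{\theta^\top\phi_{w'}}\right),
\end{equation}
which is a smooth convex function of $\theta$ (a linear term plus a log-sum-exp of affine functions). Its gradient is
\begin{equation}
 \nabla_\theta \lxents(\theta,\Phi) = -\Phi\truep + \sum_{w'\in\gW} \frac{e^{\theta^\top\phi_{w'}}}{Z_\theta}\,\phi_{w'} = -\Phi\truep + \Phi\thetaphip,
\end{equation}
using the standard exponential-family identity that $\nabla_\theta\log Z_\theta$ equals the mean word embedding $\sum_{w'}\pthetaphip(w')\phi_{w'} = \Phi\thetaphip$ under the softmax distribution. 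Since $\theta^* = \embfnopt(s)$ minimizes this differentiable function on the open domain $\R^d$, the first-order optimality condition forces $\nabla_\theta\lxents(\theta^*,\Phi) = 0$, i.e. $\Phi\,\bm{p}_{\embfnopt(s),\Phi} = \Phi\truep$. Recalling that $\plmp = \psmp$, so that $\smoptp$ is precisely $\bm{p}_{\embfnopt(s),\Phi}$, this reads $\Phi\smoptp = \Phi\truep$ for every $s\in\text{support}(\pstar)$, as claimed.

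The only point requiring genuine care is the pointwise-optimization step: one should state explicitly that because $\embfn$ ranges over all functions $\gS\to\R^d$ with no parameters shared across distinct contexts, minimization commutes with the expectation over $s$, and that the assumed existence of $\embfnopt$ is exactly what lets us invoke first-order optimality without worrying about the cases where the infimum of $\lxents$ is not attained (e.g. when $\truep$ assigns zero probability to some word, which can drive $\|\theta\|\to\infty$ along certain directions). The gradient computation for the log-partition function is routine, and convexity — which additionally shows that every such stationary point is in fact a global minimizer — is immediate. I do not expect any serious obstacle here; the content is essentially the exponential-family moment-matching identity combined with the separability of the objective.
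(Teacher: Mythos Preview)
Your proof is correct and follows essentially the same approach as the paper: decouple the expectation over $s$ using the fact that $\embfn$ is unconstrained, then apply the first-order optimality condition $\nabla_\theta\lxents(\embfnopt(s),\Phi)=0$ together with the identity $\nabla_\theta\log Z_\theta = \Phi\thetap$. Your write-up is in fact more careful than the paper's sketch, making explicit the separability argument and the role of the existence hypothesis.
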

Unlike \Propref{prop:unconstrained_lm}, $\smoptp\in\R^V$ is only guaranteed to be equal to $\truep\in\R^V$ on the $d$-dimensional subspace spanned by rows of $\Phi\in\R^{d\times V}$.
We may not learn $\truep$ exactly when $d<V$, but this result at least guarantees learning $\truep$ on a {\em linear subspace} determined by word embeddings $\Phi$.
This forms the basis for our main results later and is proved by using the first-order optimality condition, i.e. $\nabla_\theta \lxents(\embfnopt(s)) = 0,~\forall s\in\gS$.
The gradient of cross-entropy is $\nabla_\theta \lxents(\theta) = -\Phi\truep + \nicefrac{\nabla_\theta Z_\theta}{Z_\theta} = -\Phi \truep + \Phi \thetap$.
Setting it to 0 completes the proof.
We use the properties of optimal solutions to understand why language models help with classification tasks.

\section{Using language models for classification tasks}
\label{sec:intuition}
\Twosecrefs{subsec:unconstrained_lm}{subsec:softmax_lm} suggest that language models aim to learn $\truep$, or a low-dimensional projection $\Phi \truep$.
Thus to understand why language models help with downstream tasks, a natural starting point is to understand how access to $\truep$ can help with downstream tasks.
In a thought experiment, we use oracle access to $\truep$ for any $s$ and demonstrate that sentence classification task can be solved by reformulating it as a sentence completion problem and using $\truep$ to get completions to predict the label.
This sentence completion reformulation is mathematically formalized as {\em natural tasks}.

\subsection{Sentence completion reformulation}
\label{subsec:sentence_completion}
For exposition, we consider the sentence classification task of sentiment analysis, where the inputs are movie reviews (subset of $\gS$) and labels belongs to $\{\pm1\}$, denoting positive and negative reviews.
\label{subsec:sentence_completion}\\
\textbf{Classification task as sentence completion:}
\label{subsec:task_intuition}
Can we predict the label for a movie review $s$ by using $\truep$?
One way is to use $\truep$ to compare probabilities of ``:)'' and ``:('' following a movie review and to predict sentiment based on which is higher.
This seems like a reasonable strategy, since ``:)'' is likelier than ``:('' to follow a positive movie review.
One issue, however, is that $\truep$ will place much higher probability on words that start sentences, like ``The'', rather than discriminative words useful for the task.
To allow a larger set of grammatically correct completions, we can append a prompt like ``This movie is '' at the end of all movie reviews and query probabilities of indicative adjectives like good, bad, interesting, boring etc. that are better indicators of sentiment.
This approach of adding a prompt can also work for other classification tasks.
For the AG news dataset \citep{zhang2015character} containing news articles from 4 categories (world, science/tech., sports, business), a prompt like ``This article is about '' can help solve the task.
The theoretical and practical relevance of prompts is discussed in \Thmref{thm:robust_unconstrained_lm}, and \Secref{subsec:few_words} respectively.
We note that the choice of prompts and completion words is less important than the underlying idea of sentence completion reformulation and its formalization.

\textbf{Solving tasks using a linear function of $\truep$:}\label{subsec:linear_func}
The above process is actually a sub-case of using a linear classifier on top of $\truep\in\R^V$.
For sentiment analysis, if $w_+ = \text{``:)''}$ and $w_- = \text{``:(''}$, then the sign of $\truep(w_+) - \truep(w_-)$ can predict the sentiment.
This strategy can be expressed as $\vv^\top\truep$, where the linear classifier $\vv\in\R^{V}$ has $\vv(w_+)=1$, $\vv(w_-)=-1$ and $\vv(w')=0$ for $w'\in\gW\backslash\{w_+,w_-\}$.
Similarly with the prompt, we can assign positive weights in $\vv$ to adjectives like ``good'' and negative weights to adjectives like ``boring''.
Strength of sentiment in different adjectives (e.g., ``good'' vs ``amazing'') can be captured through different weights.
This equivalence between sentence completion reformulation and linear classifier on $\truep$ is further explored in \Secref{asubsec:lin_clf_pstars}.
Other tasks can be similarly solved with a different set of words for each class.
We verify experimentally that SST and AG news tasks can be solved by a linear function of probabilities of just a small subset of words in \Secref{subsec:few_words} and for many other classification tasks in \Secref{asubsec:f_Phi_pf}, thus lending credibility to the sentence completion view.

\subsection{Natural classification tasks}\label{subsec:natural_tasks_main}
We now translate the above sentence completion reformulation into a reasonable mathematical characterization for classification tasks of interest.
Firstly we formally define text classification tasks and the standard metric for performance of linear classification on fixed features.
A binary classification task\footnote{Extending to $k$-way tasks is straightforward.} $\gT$ is characterized by a distribution $\pT$ over $\gS \times \{\pm1\}$, where the input $s$ is a piece of text from $\gS$ and the label $y$ is in $\{\pm1\}$.
Given a feature map $g:\gS\rightarrow\R^D$ (arbitrary $D$), $\gT$ is solved by fitting a linear classifier $\clfv\in\R^{D}$ on top of $g(s)$ and the metric of classification loss is
\begin{align}
	\lclf(g, \vv) = \E_{(s,y)\sim \pT} \left[\ell(\clfv^\top g(s), y)\right]; ~~
	\lclf(g)
	&= \inf_{\clfv\in\sR^{D}} \lclf(g, \vv)
	\label{eqn:clf_loss}
\end{align}
where $\ell$ is a 1-Lipschitz surrogate to the 0-1 loss, like the hinge loss $\ell(\hat{y}, y) = (1-y\hat{y})_+$ or the logistic loss $\ell(\hat{y}, y) = \log(1+e^{-y\hat{y}})$.
For given embeddings $\{\theta_s\}_{s\in\gS}$, the classification loss is written as $\lclf(\{\theta_s\}, \vv)=\E_{(s,y)\sim\pT} [\ell(\vv^\top\theta_s,y)]$.

We now formalize classification tasks amenable to sentence completion reformulation, from \Secref{subsec:sentence_completion}), as $(\tau, B)$-natural tasks, i.e. tasks that achieve a small classification loss of $\tau$ by using a linear classifier with $\ell_\infty$-norm bounded\footnote{$\ell_\infty$ makes sense since $\|\truep\|_1 = 1$ \& $\|\cdot\|_{\infty}$ is dual norm of $\|\cdot\|_1$.} by $B$ on top of features $\truep\in\R^V$.
\begin{definition}\label{def:natural_task}
	A classification task $\gT$ is $(\tau, B)$-natural if $\min\limits_{\substack{\vv\in\sR^{V},\|\clfv\|_{\infty}\le B}} \lclf(\{\truep\}, \vv)\le\tau$.
\end{definition}
While we motivated this formalization of linear classification over $\truep$ in \Secref{subsec:linear_func}, we provide a mathematical justification in \Secref{asubsec:lin_clf_pstars}, along with interpretations for $\tau$ and $B$ that relate them to the Bayes optimal predictor and probability mass of indicative words respectively.
Low dimensional softmax models, however, only learn $\truep$ in the subspace of $\Phi$, per \Propref{prop:softmax_lm}. 
Thus we are also interested in subset of tasks that this subspace can solve.
\begin{definition}\label{def:natural_task_phi}
	Task $\gT$ is $(\tau, B)$-natural w.r.t. $\Phi\in\R^{d\times V}$ if $\min\limits_{\substack{\vv\in\textrm{row-span}(\Phi),\|\vv\|_{\infty}\le B}} \lclf(\{\truep\}, \vv)\le\tau$. 
\end{definition}
Note that every $(\tau, B)$-natural task w.r.t. $\Phi$ is trivially $(\tau, B)$-natural, though the converse may not hold.
However it can be argued that if $\Phi$ has some ``nice properties'', then $(\tau, B)$-natural tasks of interest will roughly also be $(\tau, B)$-natural w.r.t. $\Phi$.
Capturing the synonym structure of words can be such a nice property, as discussed in \Secref{asubsec:nice_word_embeddings}.
A better understanding of these properties of word embeddings $\Phi$ can potentially enable better performance of language models on downstream tasks.
In fact, \Secref{subsec:quad} describes a carefully designed objective that can learn word embeddings with desirable properties like synonyms having similar embeddings.
In the subsequent sections, we use the above formalization to show guarantees for language models on natural tasks.


\section{Guarantees for language models on natural tasks}
\label{sec:guarantees}

We now show guarantees for features from language models on natural tasks in two cases: 1) for an arbitrary language model $\{\plmp\}$ where we use $V$-dimensional features $\lmp\in\R^V$ for downstream tasks and 2) for softmax language model $(\embfn, \Phi)$ where we use new $d$-dimensional features $\Phi \smp\in\R^d$.
Since we cannot practically hope to learn the optimal solutions described in \twoproprefs{prop:unconstrained_lm}{prop:softmax_lm}, we only assume that the language models are $\epsilon$-optimal in cross-entropy.
We first define $\lxent^*$ to be the minimum achievable cross-entropy and $\lxent^*(\Phi)$ to be the minimum achievable cross-entropy by a $d$-dimensional softmax language model using $\Phi$; clearly $\lxent^* \le \lxent^*(\Phi)$.
\begin{align}
	\lxent^* = \lxent(\{\truep\}),~~\lxent^*(\Phi) = \ex_{s\sim\pstar} \left[\inf_{\theta\in\R^d} \lxents(\theta, \Phi)\right]\label{eqn:l_star_xent}
\end{align}
We first present the results for arbitrary language models with a proof sketch that describes the main ideas, following which we present our main results for softmax language models.

\subsection{Arbitary language models}
\label{subsec:unconstrained_lm_clf}
We show guarantees for a language model that is $\epsilon$-optimal, i.e. $\lxent(\{\lmp\}) - \lxent^* \le \epsilon$, on $(\tau,B)$-natural tasks.
An important consideration is that the language model distribution $\pstar$ of contexts is often a diverse superset of the downstream distribution $\pT$ (defined in \Secref{subsec:softmax_lm}) over sentences, thus requiring us to show how guarantees of $\lmp\approx\truep$ {\em on average} over the distribution $s\sim\pstar$ transfer to guarantees on a subset $\pT$.
In the worst case, all of the $\epsilon$ error in cross-entropy by $\{\lmp\}$ is incurred on sentences from the subset $\pT$, leading to pessimistic bounds\footnote{For instance if $\pT$ is 0.001 fraction of $\pstar$, $\{\plmp\}$ could have $1000\epsilon$ error on $\pT$ and 0 error on rest of $\pstar$.}.
In practice, however, the errors might be more evenly distributed across $\pstar$, thus bypassing this worst case bound.
As a first step, we present the worst case bound here; stronger guarantees are in \Secref{subsec:dist_shift}.
The worst-case coefficient $\gamma(\pT)$, defined below, captures that $\pT$ is a $\gamma(\pT)$-fraction of $\pstar$.
\begin{align}
	\gamma(\pT) = \sup\{\gamma\in(0,1] : \pstar(s) \ge \gamma\pT(s)~\forall s\in\gS\}
\end{align}
We now present our results that applies to any language model, regardless of the parametrization (e.g., $n$-gram models, softmax models).
The result suggests that small test cross-entropy (hence test perplexity) is desirable to guarantee good classification performance, thus formalizing the intuition that better language models will be more useful for downstream tasks.
\begin{theorem}\label{thm:robust_unconstrained_lm}
	Let $\{\lmp\}$ be a language model that is $\epsilon$-optimal, i.e. $\lxent(\{\lmp\}) - \lxent^* \le \epsilon$, for some $\epsilon>0$.
	For a classification task $\gT$ that is $(\tau, B)$-natural, we have
	\begin{align*}
		\lclf\left(\{\lmp\}\right) \le \tau + \sqrt{2B^2\epsilon\left(\gamma(\pT)\right)^{-1}}
	\end{align*}
\end{theorem}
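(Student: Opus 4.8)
The plan is to bound the classification loss $\lclf(\{\lmp\})$ by comparing $\lmp$ with $\truep$ and controlling the discrepancy via the cross-entropy gap. First I would use the naturalness assumption: there exists $\vv$ with $\|\vv\|_\infty \le B$ and $\lclf(\{\truep\}, \vv) \le \tau$. Since $\lclf(\{\lmp\}) = \inf_{\vv'} \lclf(\{\lmp\}, \vv')$, it suffices to bound $\lclf(\{\lmp\}, \vv)$ for this particular $\vv$. Using that $\ell$ is $1$-Lipschitz in its first argument, for every $(s,y)$ we have $\ell(\vv^\top\lmp, y) \le \ell(\vv^\top\truep, y) + |\vv^\top(\lmp - \truep)| \le \ell(\vv^\top\truep, y) + \|\vv\|_\infty \|\lmp - \truep\|_1 \le \ell(\vv^\top\truep, y) + B\|\lmp - \truep\|_1$. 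Taking expectation over $(s,y)\sim\pT$ gives $\lclf(\{\lmp\}, \vv) \le \tau + B\, \ex_{s\sim\pT}\|\lmp - \truep\|_1$.

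The next step is to relate $\ex_{s\sim\pT}\|\lmp - \truep\|_1$ to the cross-entropy gap. By Pinsker's inequality, $\|\lmp - \truep\|_1^2 \le 2 D_{\textrm{KL}}(\ptruep \| \plmp) = 2(\lxents(\lmp) - \lxents(\truep))$, using the identity noted before \Propref{prop:unconstrained_lm}. Then I would convert the $\pT$-expectation to a $\pstar$-expectation using $\gamma(\pT)$: since $\pstar(s) \ge \gamma(\pT)\pT(s)$ pointwise and the integrand $\lxents(\lmp) - \lxents(\truep)$ is nonnegative (as $\truep$ is the pointwise minimizer by \Propref{prop:unconstrained_lm}), we get $\ex_{s\sim\pT}[\lxents(\lmp) - \lxents(\truep)] \le \gamma(\pT)^{-1} \ex_{s\sim\pstar}[\lxents(\lmp) - \lxents(\truep)] = \gamma(\pT)^{-1}(\lxent(\{\lmp\}) - \lxent^*) \le \gamma(\pT)^{-1}\epsilon$. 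Finally, combining via Jensen's inequality (concavity of $\sqrt{\cdot}$): $\ex_{s\sim\pT}\|\lmp - \truep\|_1 \le \ex_{s\sim\pT}\sqrt{2(\lxents(\lmp)-\lxents(\truep))} \le \sqrt{2\,\ex_{s\sim\pT}[\lxents(\lmp)-\lxents(\truep)]} \le \sqrt{2\epsilon/\gamma(\pT)}$. Plugging back yields $\lclf(\{\lmp\}) \le \tau + B\sqrt{2\epsilon/\gamma(\pT)} = \tau + \sqrt{2B^2\epsilon(\gamma(\pT))^{-1}}$.

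I do not anticipate a serious obstacle here; the argument is a clean chain of Lipschitz continuity, Pinsker, a change-of-measure bound, and Jensen. The one point requiring a little care is the nonnegativity of $\lxents(\lmp) - \lxents(\truep)$ needed for the change-of-measure step — this follows because $\lxents(\plmp) = D_{\textrm{KL}}(\ptruep, \plmp) + H(\ptruep)$ so the difference equals $D_{\textrm{KL}}(\ptruep, \plmp) \ge 0$. A second minor subtlety is ensuring $\lxent^* = \ex_{s\sim\pstar}\lxents(\truep)$ is finite so the subtraction is well-defined; this holds as long as the true entropy is finite, which is implicit in working with $\epsilon$-optimality. Everything else is routine.
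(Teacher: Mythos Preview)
Your proof is correct and uses the same ingredients as the paper --- Lipschitzness of $\ell$, Pinsker's inequality, Jensen for $\sqrt{\cdot}$, and the change-of-measure via $\gamma(\pT)$ --- just in a slightly different order. The paper first applies Jensen to obtain $\sqrt{\ex_{s\sim\pT}[(\vv^\top(\lmp-\truep))^2]}$, then transfers $\pT\to\pstar$ at the level of the error covariance $\Sigma_{p}(\Delta_{\{\lmp\}})$, and finally applies Pinsker; you instead apply H\"older and Pinsker pointwise to pass through $\|\lmp-\truep\|_1$, then Jensen, then change measure on the scalar KL expectation. For this theorem the two orderings give the identical bound, and yours is arguably the more direct route. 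The paper's covariance-based organization is chosen because it also yields the strengthened result with the refined transfer coefficient $\gamma(\pT;\{\lmp\})$ (which depends on the direction of the error, not just its magnitude), a refinement that your $\ell_1$-based argument does not immediately give.
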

This upper bounds classification loss on task $\gT$ for $V$-dimensional features $\{\lmp\}$ from an $\epsilon$-optimal language model.
We discuss factors that lead to small upper bound and corresponding intuitions.\\
$\bullet$ $\epsilon$ is small: learned language model has smaller cross-entropy (log-perplexity)\\
$\bullet$ $\tau$ is small: task can be solved well through a sentence completion reformulation with a set of indicative words as completions, as in \Secref{subsec:sentence_completion}, and has small Bayes error (cf. \Secref{asubsec:interpretations})\\
$\bullet$ $B$ is small: set of indicative words has high probability mass in $\truep$ (cf. \Secref{asubsec:interpretations}). This could potentially explain the superior performance when prompts are added (\Secref{subsec:few_words}).\\
$\bullet$ $\gamma(\pT)$ is large: $\pT$ is closer to $\pstar$; note that $\gamma(\pT)\le1$ with equality if and only if $\pT = \pstar$

Thus the bound captures meaningful intuitions about good performance of language models on downstream tasks.
We provide a detailed proof sketch in \Secref{asubsec:proof_sketch} and a strengthened version of this (\Thmref{thm:robust_unconstrained_lm_gen}) is presented in \Secref{asubsec:unconstrained_lm}.
Proving this result requires connecting the classification loss with language modeling cross-entropy loss and dealing with distribution mismatch; we present a rough outline to do so below.
Since $\gT$ is $(\tau,B)$-natural, let $\vstar$ be the classifier with $\|\vstar\|_{\infty}\le B$ and $\lclf(\{\truep\},\vstar)\le\tau$.
The result follows from the following 3 inequalities:
\begin{align*}
	&\lclf\left(\{\lmp\}, \vstar\right) - \lclf(\{\truep\}, \vstar)
	\le \sqrt{\ex_{s\sim\pT}[(\vstar^{\top}(\lmp - \truep))^2]}
	&&\text{... Lipschitzness + Jensen's}\\
	&\ex_{s\sim\pT} [(\vstar^{\top}(\lmp - \truep))^2]
	\le \gamma(\pT)^{-1} \ex_{s\sim\pstar} [(\vstar^{\top}(\lmp - \truep))^2]
	&&\text{... Transfer $\pT$ to $\pstar$}\\
	&\forall \vv\in\R^V, ~(\vv^{\top}(\lmp - \truep))^2
	\le 2\|\vv\|_{\infty}^2 (\lxents(\lmp) - \lxents(\truep))
	&&\text{... Pinsker's inequality}
\end{align*}
The first and third inequalities (\Lemref{lem:lclf_upper_bound} and \Lemref{lem:pinskers}) connect the classification loss to the cross-entropy loss in language modeling, while the second inequality deals with distribution mismatch between $\pstar$ and $\pT$.
We now present a stronger result for softmax models.
\subsection{Softmax language model with conditional mean features}\label{subsec:eps_optimal}
We now consider a softmax language model with feature map $\embfn$ that satisfies $\lxent(\embfn, \Phi) - \lxent^*(\Phi) \le \epsilon$; suboptimality is measured w.r.t. the best $d$-dimensional model, unlike \Thmref{thm:robust_unconstrained_lm},.
Note that \Thmref{thm:robust_unconstrained_lm} can be invoked here to give a bound of $\lclf(\{\smp\}) \le \tau + \gO(B\sqrt{\epsilon+\epsilon^*_{\Phi}})$ on $(\tau, B)$-natural tasks, where $\epsilon^*_{\Phi} = \lxent^*(\Phi) - \lxent^*$ is the suboptimality of the best $d$-dimensional model.
The fixed error of $\gO(B\sqrt{\epsilon^*_{\Phi}})$ (even when $\epsilon=0$), however, is undesirable.
We improve on this by proving a stronger result specifically for softmax models. 
Inspired by \Propref{prop:softmax_lm}, our guarantees are for features $\Phi \smp\in\R^d$ called conditional mean features.
\begin{definition}[Conditional Mean Features]\label{def:conditional_mean_features}
	For a feature map $f:\gS\rightarrow\R^d$ and $\Phi\in\R^{d\times V}$, we define conditional mean features $\phip{\embfn}:\gS\rightarrow\R^d$, where $\phip{\embfn}(s) = \Phi \smp$, where $\smp\in\R^V$.
\end{definition}
We now present the result for softmax language models that has similar implications as \Thmref{thm:robust_unconstrained_lm}, but with above-mentioned subtle differences.
\begin{theorem}\label{thm:robust_softmax_lm}
	For a fixed $\Phi$, let $\embfn$ be features from an $\epsilon$-optimal $d$-dimensional softmax language model, i.e. $\lxent(\embfn, \Phi) - {\color{blue}\lxent^*(\Phi)} \le \epsilon$. 
	For a classification task $\gT$ that is $(\tau, B)$-natural {\color{blue}w.r.t. $\Phi$},
	\begin{align*}
		\lclf({\color{blue}\phip{\embfn}}) \le \tau + \sqrt{2B^2\epsilon\left(\gamma(\pT)\right)^{-1}}
	\end{align*}
\end{theorem}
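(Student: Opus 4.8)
\textbf{Proof proposal for Theorem~\ref{thm:robust_softmax_lm}.}
The plan is to mimic the three-step argument behind \Thmref{thm:robust_unconstrained_lm}, but replacing $V$-dimensional features $\lmp$ with the $d$-dimensional conditional mean features $\phip{\embfn}(s)=\Phi\smp$, and replacing the $\epsilon$-optimality against the unconstrained optimum $\lxent^*$ with $\epsilon$-optimality against the best $d$-dimensional softmax model $\lxent^*(\Phi)$. Since $\gT$ is $(\tau,B)$-natural w.r.t. $\Phi$, fix $\vstar\in\textrm{row-span}(\Phi)$ with $\|\vstar\|_\infty\le B$ and $\lclf(\{\truep\},\vstar)\le\tau$. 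The key observation is that for such a $\vstar$, the linear functional $s\mapsto\vstar^\top\truep$ can be realized as a linear functional of $\phip{\embfn}(s)$: writing $\vstar=\Phi^\top\vu$ for some $\vu\in\R^d$, we get $\vstar^\top\truep=\vu^\top\Phi\truep$ and $\vstar^\top\smp=\vu^\top\Phi\smp=\vu^\top\phip{\embfn}(s)$, so fitting $\vu$ on top of conditional mean features exactly reproduces the behavior of $\vstar$ on top of $\lmp$. Hence $\lclf(\phip{\embfn})\le\lclf(\phip{\embfn},\vu)=\lclf(\{\smp\},\vstar)$, and it remains to bound $\lclf(\{\smp\},\vstar)-\lclf(\{\truep\},\vstar)$.

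For that difference I would run the same chain of three inequalities. First, by $1$-Lipschitzness of $\ell$ and Jensen's inequality (this is \Lemref{lem:lclf_upper_bound} applied with the feature maps $\{\smp\}$ and $\{\truep\}$ and classifier $\vstar$),
\begin{align*}
\lclf(\{\smp\},\vstar)-\lclf(\{\truep\},\vstar)\le\sqrt{\ex_{s\sim\pT}\left[\left(\vstar^\top(\smp-\truep)\right)^2\right]}.
\end{align*}
Second, the distribution-transfer step uses $\pstar(s)\ge\gamma(\pT)\pT(s)$ pointwise to get $\ex_{s\sim\pT}[(\vstar^\top(\smp-\truep))^2]\le\gamma(\pT)^{-1}\ex_{s\sim\pstar}[(\vstar^\top(\smp-\truep))^2]$; this is identical to the arbitrary-language-model case and does not depend on the softmax structure. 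Third, I need the Pinsker-type bound $(\vstar^\top(\smp-\truep))^2\le 2\|\vstar\|_\infty^2\,(\lxents(\smp)-\lxents(\truep))$ (a form of \Lemref{lem:pinskers}), using that $\|\vstar\|_1$-weighted total variation is controlled by $\|\vstar\|_\infty$ since $\truep$ and $\smp$ are both probability vectors. Taking expectations over $s\sim\pstar$ and chaining,
\begin{align*}
\ex_{s\sim\pstar}\left[\left(\vstar^\top(\smp-\truep)\right)^2\right]\le 2B^2\left(\lxent(\embfn,\Phi)-\ex_{s\sim\pstar}[\lxents(\truep)]\right).
\end{align*}

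The one genuinely new point — and the place I expect the main subtlety — is that $\ex_{s\sim\pstar}[\lxents(\truep)]=\lxent^*$, not $\lxent^*(\Phi)$, so naively the cross-entropy gap appearing on the right is $\lxent(\embfn,\Phi)-\lxent^*=\epsilon+\epsilon^*_\Phi$, which reintroduces the unwanted $\gO(B\sqrt{\epsilon^*_\Phi})$ term. The resolution is to not compare $\smp$ against $\truep$ directly in the Pinsker step, but against the optimal $d$-dimensional softmax prediction; by \Propref{prop:softmax_lm} the optimal $\embfnopt$ satisfies $\Phi\smoptp=\Phi\truep$, hence $\vstar^\top\smoptp=\vu^\top\Phi\smoptp=\vu^\top\Phi\truep=\vstar^\top\truep$ — the optimal $d$-dimensional model has \emph{exactly the same} linear-functional value as $\truep$ along every direction in $\textrm{row-span}(\Phi)$. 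So I would instead bound $(\vstar^\top(\smp-\truep))^2=(\vstar^\top(\smp-\smoptp))^2\le 2\|\vstar\|_\infty^2(\lxents(\smp)-\lxents(\smoptp))$ via Pinsker applied to the pair $(\smp,\smoptp)$, then take expectations to get $2B^2(\lxent(\embfn,\Phi)-\lxent^*(\Phi))\le 2B^2\epsilon$. (One should handle the footnote caveat that a finite minimizer $\embfnopt$ may not exist by working with a near-optimal $\embfn_\delta$ and letting $\delta\to0$, or by directly arguing the relevant inequality holds in the limit.) Combining the three inequalities with $\|\vstar\|_\infty\le B$ yields $\lclf(\phip{\embfn})\le\tau+\sqrt{2B^2\epsilon(\gamma(\pT))^{-1}}$, as claimed.
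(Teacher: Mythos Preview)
Your overall architecture matches the paper's: reduce to bounding $\lclf(\{\smp\},\vstar)-\lclf(\{\truep\},\vstar)$ for a $\vstar\in\textrm{row-span}(\Phi)$, then chain Lipschitz/Jensen, distribution transfer via $\gamma(\pT)$, and a Pinsker-type inequality. Your observation that $\vstar=\Phi^\top\vu$ implies $\vstar^\top\smp=\vu^\top\phip{\embfn}(s)$, and hence $\lclf(\phip{\embfn})\le\lclf(\{\smp\},\vstar)$, is exactly right.

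The gap is in your third step. You write
\[
(\vstar^\top(\smp-\smoptp))^2 \le 2\|\vstar\|_\infty^2\bigl(\lxents(\smp)-\lxents(\smoptp)\bigr)
\]
and justify it as ``Pinsker applied to the pair $(\smp,\smoptp)$.'' But ordinary Pinsker on that pair gives a bound in terms of $\KL(\psmoptp\,\|\,\psmp)=\sum_w \psmoptp(w)\log\frac{\psmoptp(w)}{\psmp(w)}$, whereas $\lxents(\smp)-\lxents(\smoptp)=\sum_w \ptruep(w)\log\frac{\psmoptp(w)}{\psmp(w)}$: the same log-ratio, but weighted by $\ptruep$ rather than $\psmoptp$. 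These coincide only when $\psmoptp=\ptruep$, i.e.\ when the $d$-dimensional softmax family already realizes the truth --- precisely the regime the theorem is designed \emph{not} to assume. So Pinsker on $(\smp,\smoptp)$ does not yield the cross-entropy suboptimality you need, and the step as written does not go through.

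The paper closes this gap with a different mechanism (its \Lemref{lem:softmax_pinskers}): view $\rho(\theta)\coloneqq\lxents(\thetaphip)$ as a convex function of $\theta$, compute $\nabla_\theta\rho(\theta)=\Phi(\thetap-\truep)$ so that $\vstar^\top(\smp-\truep)=\vu^\top\nabla\rho(\embfn(s))$, and observe that $\nabla^2\rho(\theta)=\mathrm{Cov}_{w\sim\pthetap}[\phi_w]$ satisfies $\vu^\top\nabla^2\rho(\theta)\vu\le\|\Phi^\top\vu\|_\infty^2$. The standard smooth-convex inequality $|\vu^\top\nabla\rho(\theta)|^2\le 2L(\rho(\theta)-\inf_{\theta^*}\rho(\theta^*))$ with $L=\|\Phi^\top\vu\|_\infty^2=\|\vstar\|_\infty^2$ then gives exactly
\[
(\vstar^\top(\smp-\truep))^2 \le 2\|\vstar\|_\infty^2\Bigl(\lxents(\embfn(s),\Phi)-\inf_{\theta^*}\lxents(\theta^*,\Phi)\Bigr),
\]
whose expectation over $s\sim\pstar$ is $2B^2(\lxent(\embfn,\Phi)-\lxent^*(\Phi))\le 2B^2\epsilon$. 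Note this argument never compares $\smp$ to $\smoptp$ via KL; your identity $\vstar^\top\smoptp=\vstar^\top\truep$ (when the minimizer exists) is true but not needed, and the smoothness route also sidesteps the existence-of-minimizer issue you flagged, since $\inf_{\theta^*}\rho(\theta^*)$ appears directly.
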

This result guarantees good performance of conditional mean features $\phip{\embfn}$ on some natural tasks, thereby suggesting a novel way to extract features for downstream tasks.
We empirically verify the good performance of $\phip{\embfn}(s)$ on classifications tasks (\Secref{subsec:newobj_exps}) and also find a $\gO(\sqrt{\epsilon})$-like behavior (\Secref{asubsec:verify_thm}).
The proof (\Secref{asubsec:sofmax_lm}) is similar to that of \Thmref{thm:robust_unconstrained_lm}, the main difference being the use of the following inequality, proved using a {\em softmax variant of Pinsker's inequality} (\Lemref{lem:softmax_pinskers}).
\begin{align*}
	{\color{blue}\forall \vv\in\text{row-span}(\Phi)},~(\vv^{\top}(\smp - \truep))^2
	\le 2\|\vv\|_{\infty}^2 (\lxents(\smp) - {\color{blue}\inf_{\embfnopt(s)\in\R^d}\lxents(\smoptp))}
\end{align*}
The more general result (\Thmref{thm:robust_softmax_lm_gen}) replaces $\gamma(\pT)$ with a more refined coefficient (\Secref{subsec:dist_shift}).
While guarantees are only for natural tasks {\color{blue} w.r.t. $\Phi$}, \Secref{asubsec:nice_word_embeddings} discusses why this might be enough for {\em tasks of interest} if word embeddings $\Phi$ satisfy {\em nice properties}.
\subsection{$\phip{\embfn}(s)$ is a linear function of $\embfn(s)$}\label{subsec:linear}
\Thmref{thm:robust_softmax_lm} shows that $\phip{\embfn}$ is useful for linear classification. 
However, using feature map $\embfn$ directly is more standard and performs better in practice (\Secref{subsec:newobj_exps}).
Here we argue that there is a linear relation between $\embfn$ and $\phip{\embfn}$ if word embeddings $\Phi$ satisfy a certain Gaussian-like property, which we show implies that tasks solvable linearly with $\phip{\embfn}$ are also solvable linearly using $\embfn$.
\begin{assumption}\label{asmp:partition_quadratic}
	There exists a symmetric positive semidefinite matrix $\mA\in\R^{d\times d}$, a vector $\vb\in\R^d$ and a constant $c\in\R$ such that $\log(Z_{\theta}) = \half \theta^\top \mA \theta + \theta^\top \vb + c$ for any $\theta\in\R^d$.
\end{assumption}
If word embeddings were distributed as Gaussians, i.e. $V$ columns of $\Phi$ are sampled from $\gN(\mu,\Sigma)$ independently, it is not hard to show (\Lemref{lem:assumption}) that $\log(Z_{\theta}) \approx \half \theta^\top \Sigma \theta + \theta^\top \mu + \log(V)$.
While some papers \citep{arora2016latent,mu2018all} have noted that word embeddings are fairly random-like in the bulk to argue that the log partition function is constant for $\|\theta\|_2=1$, our quadratic assumption is a bit stronger.
However, empirically we find the fit to be very good, as evident in \Figref{fig:logZ_fit}.
Under the above assumption, we can show a linear relation between $\embfn$ and $\Phi p_{\embfn}$.
\begin{lemma}\label{lem:theta_star}
	Under \Asmpref{asmp:partition_quadratic}, feature map $\embfn$ satisfies $\phip{\embfn}(s) = \mA\embfn(s) + \vb, \forall s\in\gS$.
\end{lemma}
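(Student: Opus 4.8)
The plan is to compute the conditional mean feature $\phip{\embfn}(s) = \Phi\smp$ directly in terms of the log-partition function and then plug in the quadratic form posited by \Asmpref{asmp:partition_quadratic}. First I would expand, using the definition of the softmax distribution,
$$\Phi\smp \;=\; \sum_{w\in\gW}\phi_w\,\psmp(w) \;=\; \frac{1}{Z_{\embfn(s)}}\sum_{w\in\gW}\phi_w\,e^{\embfn(s)^\top\phi_w},$$
and observe that the right-hand side is exactly $\nabla_\theta Z_\theta / Z_\theta = \nabla_\theta\log Z_\theta$ evaluated at $\theta=\embfn(s)$. More generally, for any $\theta\in\R^d$ the identity $\Phi\thetap = \nabla_\theta\log Z_\theta$ holds; this is precisely the computation already carried out in the paper when deriving the cross-entropy gradient $\nabla_\theta\lxents(\theta) = -\Phi\truep + \Phi\thetap$ in the proof of \Propref{prop:softmax_lm}, so I can simply invoke it.

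The second step is to differentiate the assumed closed form. Under \Asmpref{asmp:partition_quadratic}, $\log Z_\theta = \half\,\theta^\top\mA\theta + \theta^\top\vb + c$, and since $\mA$ is symmetric its gradient is $\nabla_\theta\log Z_\theta = \mA\theta + \vb$. Combining the two steps and substituting $\theta = \embfn(s)$ yields
$$\phip{\embfn}(s) \;=\; \Phi\smp \;=\; \mA\,\embfn(s) + \vb$$
for every $s\in\gS$, which is the claim.

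There is essentially no obstacle here, as both steps are one-line calculations; the only points to be careful about are using the symmetry of $\mA$ when differentiating the quadratic term, and noting that the identity $\Phi\thetap = \nabla_\theta\log Z_\theta$ is valid for a general embedding $\theta$ (not just an optimal one). It is also worth remarking, for context, that this argument shows the quadratic assumption on $\log Z_\theta$ is exactly what makes the map $\theta\mapsto\Phi\thetap$ affine, which connects back to \Lemref{lem:assumption}: in the Gaussian word-embedding model one gets $\mA\approx\Sigma$ and $\vb\approx\mu$, so the affine relation is approximately realized in that regime.
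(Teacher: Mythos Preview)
Your proof is correct and follows essentially the same approach as the paper: both compute $\nabla_\theta\log Z_\theta$ in two ways---once via the softmax identity $\nabla_\theta\log Z_\theta=\Phi\thetap$ and once by differentiating the assumed quadratic form---and equate them at $\theta=\embfn(s)$. Your additional remark linking $\mA,\vb$ to $\Sigma,\mu$ in the Gaussian regime is a nice touch not present in the paper's proof.
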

\begin{corollary}\label{cor:f_star_good}
	Under same setting as \Lemref{lem:theta_star} and \Thmref{thm:robust_softmax_lm}, $\lclf(\embfn) \le \tau + \gO(B\sqrt{\epsilon})$.
\end{corollary}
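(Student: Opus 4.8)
The plan is to deduce the corollary directly from \Thmref{thm:robust_softmax_lm}, using \Lemref{lem:theta_star} to transport any linear predictor built on the conditional mean features $\phip{\embfn}$ to a predictor built on $\embfn$ with the same classification loss. The key observation is that \Lemref{lem:theta_star} exhibits $\phip{\embfn}$ as an \emph{affine} image of $\embfn$, and the family of (affine) linear predictors is closed under precomposition with an affine map; hence solving $\gT$ linearly on top of $\phip{\embfn}$ is never harder than solving it linearly on top of $\embfn$, i.e. $\lclf(\embfn) \le \lclf(\phip{\embfn})$, and then \Thmref{thm:robust_softmax_lm} finishes the job.

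Concretely, I would proceed as follows. First invoke \Lemref{lem:theta_star} (which is where \Asmpref{asmp:partition_quadratic} enters) to write $\phip{\embfn}(s) = \mA\embfn(s) + \vb$ for all $s \in \gS$. Then, for any classifier $\vv \in \R^d$ and any $s$, using that $\mA$ is symmetric, $\vv^\top \phip{\embfn}(s) = (\mA\vv)^\top \embfn(s) + \vv^\top\vb$; that is, the $\phip{\embfn}$-predictor $\vv$ produces exactly the same scores on every input $s$ as the affine $\embfn$-predictor $z \mapsto (\mA\vv)^\top z + \vv^\top\vb$. Consequently this affine $\embfn$-predictor achieves classification loss equal to $\lclf(\phip{\embfn},\vv)$. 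Taking the infimum over $\vv$ on both sides, and noting that the infimum defining $\lclf(\embfn)$ ranges over a class of predictors at least as rich as $\{\,z\mapsto(\mA\vv)^\top z + \vv^\top\vb : \vv\in\R^d\,\}$, yields $\lclf(\embfn) \le \lclf(\phip{\embfn})$. Finally apply \Thmref{thm:robust_softmax_lm}, which is available because $\embfn$ comes from an $\epsilon$-optimal $d$-dimensional softmax model and $\gT$ is $(\tau,B)$-natural w.r.t.\ $\Phi$, to get $\lclf(\phip{\embfn}) \le \tau + \sqrt{2B^2\epsilon(\gamma(\pT))^{-1}}$; absorbing the $\epsilon$-independent factor $\sqrt{2/\gamma(\pT)}$ into $\gO(\cdot)$ gives $\lclf(\embfn) \le \tau + \gO(B\sqrt{\epsilon})$.

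The one genuinely delicate point is the constant offset $\vv^\top\vb$: a strictly homogeneous linear map $z \mapsto \vw^\top z$ on $\embfn(s)$ cannot reproduce it, since a constant shift does change a surrogate loss like the hinge loss. So this step must be stated carefully, either by reading $\lclf$ in the usual convention that a linear classifier carries an intercept, or by noting that $\embfn$ may be taken to include a coordinate identically equal to $1$; in either reading the offset is free. I would also remark that $\mA$ being merely positive semidefinite (hence possibly singular) causes no difficulty here, because we only ever need the one-sided inequality $\lclf(\embfn) \le \lclf(\phip{\embfn})$ and never need to invert $\mA$ to pass back from $\embfn$-predictors to $\phip{\embfn}$-predictors.
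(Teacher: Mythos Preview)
Your proposal is correct and follows essentially the same approach as the paper: use \Lemref{lem:theta_star} to write $\phip{\embfn}(s)=\mA\embfn(s)+\vb$, observe that any linear (affine) predictor on $\phip{\embfn}$ is realized by an affine predictor on $\embfn$, conclude $\lclf(\embfn)\le\lclf(\phip{\embfn})$, and then invoke \Thmref{thm:robust_softmax_lm}. Your careful remark about the intercept is exactly the point the paper handles via a footnote (assuming the classifier carries a bias term), and your observation that singularity of $\mA$ is irrelevant for the one-sided inequality is a nice addition.
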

This shows that $\embfn$ itself is good for natural classification tasks.
However, in practice, the linearity between $\embfn$ and $\phip{\embfn}$ only weakly holds on features from pretrained GPT-2 \citep{radford2018improving}.
The fractional residual norm of the best linear fit, i.e. $r = \frac{\ex_{s\sim p} \|\phip{\embfn}(s) - \mA\embfn(s) - \vb\|^2}{\ex_{s\sim p} \|\phip{\embfn}(s)\|^2}$, measured for different distributions ($r=0$ is perfect fit) are 0.28 for SST, 0.39 for AG News, and 0.18 for IMDb contexts.
This non-trivial linear relationship, although surprising, might not completely explain the success of $\embfn$, which usually performs better than $\phip{\embfn}$; we leave exploring this to future work.

%
%

\section{Extensions}
\label{sec:extensions}

\subsection{Better handling of distributional shift}\label{subsec:dist_shift}
The bounds in the previous section use the coefficient $\gamma(\pT)$ to transfer guarantees from $\pstar$ to $\pT$ and we define a more refined notion of transferability here.
The coefficient $\gamma(\pT)$ is independent of the learned model and assumes a worst case distribution of errors.
For the refined coefficient, we first define the error made in predicted probabilities by a softmax language model $\embfn$ as $\Delta_{\{\smp\}}(s) = \smp - \truep$.
For any distribution $p\in\Delta_S$, we define uncentered covariance of a function $g:\gS\rightarrow\R^D$ as $\Sigma_{p}(g) = \E_{s\sim p} \left[g(s)g(s)^\top\right]$.
The refined transferability coefficient is then defined as
\begin{align*}
    	\gamma(p; \phip{\embfn})
	&\coloneqq \left(\left\|\Sigma_{\pstar}(\Phi\Delta_{\{\smp\}})^{-\half} \Sigma_{p}(\Phi\Delta_{\{\smp\}}) \Sigma_{\pstar}(\Phi\Delta_{\{\smp\}})^{-\half}\right\|_2\right)^{-1}
\end{align*}
We state the refined result for softmax language models; detailed results are deferred to \Secref{asec:gamma}.
\begin{theorem}[Simplified]\label{thm:robust_softmax_lm_gen}
	In the same setting as \Thmref{thm:robust_softmax_lm},
	$\lclf(\phip{\embfn}) \le \tau + \sqrt{\frac{2B^2\epsilon}{\gamma(\pT; \phip{\embfn})}}$

\end{theorem}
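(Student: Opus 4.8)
\textbf{Proof proposal for \Thmref{thm:robust_softmax_lm_gen}.}
The plan is to run exactly the argument behind \Thmref{thm:robust_softmax_lm}, but to replace its coarse ``transfer $\pT$ to $\pstar$'' step (which costs a scalar factor $\gamma(\pT)^{-1}$) by a one-sided PSD inequality between the two uncentered covariances of $\Phi\Delta_{\{\smp\}}$, whose ratio is precisely controlled by $\gamma(\pT;\phip{\embfn})$. Concretely: since $\gT$ is $(\tau,B)$-natural w.r.t.\ $\Phi$, fix a classifier $\vstar\in\textrm{row-span}(\Phi)$ with $\|\vstar\|_{\infty}\le B$ and $\lclf(\{\truep\},\vstar)\le\tau$, and write $\vstar=\Phi^\top\va$ for some $\va\in\R^d$. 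Then $\va^\top\phip{\embfn}(s)=\va^\top\Phi\smp=\vstar^\top\smp$ and similarly $\va^\top\Phi\truep=\vstar^\top\truep$, so $\va$ is a legitimate (unconstrained) linear predictor on the conditional mean features computing $\vstar^\top\smp$. It therefore suffices to upper bound $\lclf(\phip{\embfn},\va)$, since $\lclf(\phip{\embfn})\le\lclf(\phip{\embfn},\va)$.

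The bound follows from three inequalities chained together. First, by $1$-Lipschitzness of $\ell$ in its first argument together with Jensen's inequality (this is \Lemref{lem:lclf_upper_bound} applied to $\vstar$),
\[
\lclf(\phip{\embfn},\va)-\lclf(\{\truep\},\vstar)\;\le\;\ex_{s\sim\pT}\big[\,\lvert\vstar^\top(\smp-\truep)\rvert\,\big]\;\le\;\sqrt{\ex_{s\sim\pT}\big[(\vstar^\top(\smp-\truep))^2\big]}.
\]
Second, the refined transfer step: writing $\vstar^\top(\smp-\truep)=\va^\top\Phi\Delta_{\{\smp\}}(s)$ gives $\ex_{s\sim p}[(\vstar^\top(\smp-\truep))^2]=\va^\top\Sigma_{p}(\Phi\Delta_{\{\smp\}})\va$ for any $p$, and by definition of $\gamma(\pT;\phip{\embfn})$ we have $\Sigma_{\pstar}(\Phi\Delta_{\{\smp\}})^{-\half}\Sigma_{\pT}(\Phi\Delta_{\{\smp\}})\Sigma_{\pstar}(\Phi\Delta_{\{\smp\}})^{-\half}\preceq \gamma(\pT;\phip{\embfn})^{-1}\,\id$, i.e.\ $\Sigma_{\pT}(\Phi\Delta_{\{\smp\}})\preceq \gamma(\pT;\phip{\embfn})^{-1}\,\Sigma_{\pstar}(\Phi\Delta_{\{\smp\}})$; conjugating by $\va$ yields $\ex_{s\sim\pT}[(\vstar^\top(\smp-\truep))^2]\le \gamma(\pT;\phip{\embfn})^{-1}\,\ex_{s\sim\pstar}[(\vstar^\top(\smp-\truep))^2]$. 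Third, the softmax Pinsker inequality (\Lemref{lem:softmax_pinskers}) applied with $\vv=\vstar\in\textrm{row-span}(\Phi)$, $\|\vstar\|_{\infty}\le B$:
\[
(\vstar^\top(\smp-\truep))^2\;\le\;2B^2\Big(\lxents(\smp)-\inf_{\theta\in\R^d}\lxents(\theta,\Phi)\Big),
\]
and taking $\ex_{s\sim\pstar}$ of both sides gives $\ex_{s\sim\pstar}[(\vstar^\top(\smp-\truep))^2]\le 2B^2\big(\lxent(\embfn,\Phi)-\lxent^*(\Phi)\big)\le 2B^2\epsilon$ by $\epsilon$-optimality. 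Combining the three displays,
\[
\lclf(\phip{\embfn})\;\le\;\lclf(\phip{\embfn},\va)\;\le\;\tau+\sqrt{\frac{2B^2\epsilon}{\gamma(\pT;\phip{\embfn})}},
\]
which is the claim.

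The main obstacle is the second step: everything else is inherited essentially verbatim from \Thmref{thm:robust_softmax_lm}. The delicate points there are (i) correctly reducing a distribution-shift statement to the one-sided operator inequality $\Sigma_{\pT}(\Phi\Delta_{\{\smp\}})\preceq \gamma(\pT;\phip{\embfn})^{-1}\Sigma_{\pstar}(\Phi\Delta_{\{\smp\}})$ and noting it is exactly what the definition of $\gamma(\pT;\phip{\embfn})$ encodes, and (ii) handling degeneracy when $\Sigma_{\pstar}(\Phi\Delta_{\{\smp\}})$ is singular — I would restrict to its range and read $\Sigma_{\pstar}(\Phi\Delta_{\{\smp\}})^{-\half}$ as a pseudoinverse, observing that any $\va$ in the kernel makes $\ex_{s\sim\pstar}[(\vstar^\top(\smp-\truep))^2]=0$, hence $\vstar^\top(\smp-\truep)=0$ $\pstar$-a.s., so (by the genericity implicit in the coefficient being well-defined, or by reducing to the common support subspace of the two covariances) the corresponding term vanishes on the $\pT$ side as well. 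The technically heaviest single ingredient overall remains \Lemref{lem:softmax_pinskers} (the softmax variant of Pinsker's inequality bounding a squared linear functional of $\smp-\truep$ by the per-context cross-entropy gap against the best $d$-dimensional predictor), but that lemma is already established and can be invoked as a black box here.
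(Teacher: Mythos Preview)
Your proposal is correct and follows essentially the same approach as the paper: the paper's proof instantiates \Lemref{lem:lclf_upper_bound_main} (which packages your first two steps, Lipschitz/Jensen plus the covariance transfer via the operator-norm definition of $\gamma_{\Phi}(\pT;\{\smp\})=\gamma(\pT;\phip{\embfn})$) and then applies \Lemref{lem:error_phi_cov_upper_bound} (which packages your third step, the softmax Pinsker inequality averaged over $\pstar$). The only cosmetic difference is that the paper takes a supremum over all admissible $\vv=\Phi^\top\lambda$ with $\|\vv\|_\infty\le B$ rather than fixing a single witness $\vstar$, and it does not explicitly discuss the singular-covariance edge case you flag.
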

It is easy show that $\gamma(\pT; \phip{\embfn}) \ge \gamma(\pT)$, so this is indeed a stronger bound.
The coefficient $\gamma(\pT; \phip{\embfn})$ measures how average error on $\embfn$ on $\pstar$ can propagate to $\pT$.
This can potentially be much smaller than $\gamma(\pT)$ due to some inductive biases of $\embfn$.
For instance, if errors made by the model are random-like, i.e. $\Delta_{\{\smp\}}(s) \sim \rho$, {\em independently of $s$}, then $\Sigma_{\pstar}(\Phi\Delta_{\{\smp\}}) \approx \Sigma_{p}(\Phi\Delta_{\{\smp\}}) \approx \E_{\eta\sim\rho}[\eta\eta^\top]$, making $\gamma(p;\phip{\embfn}) \approx 1$.
Independence prevents accumulation of language modeling error on contexts from $\pT$, bypassing the worst case transfer of $\gamma(\pT)$.

\begin{figure}[!t]
\centering
	\includegraphics[scale=0.25]{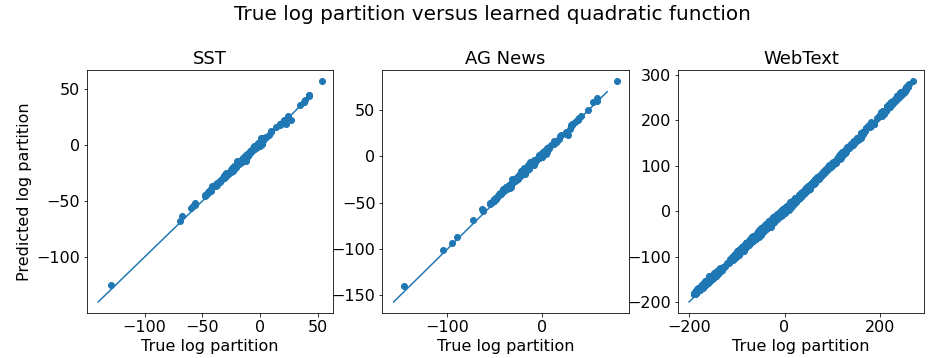}
	\caption{Learned quadratic function v/s log partition function on various datasets for features computed from pre-trained GPT-2 to verify \Asmpref{asmp:partition_quadratic}. We also plot the $y=x$ line for reference.}
	\label{fig:logZ_fit}
\end{figure}

\subsection{{\objname}: A new objective function}\label{subsec:quad}
In \Defref{def:natural_task_phi} we discuss how low dimensional softmax language models learn a linear projection of $\truep$, only solving tasks that lie in the row span of word embeddings $\Phi$.
Although $\Phi$ defines tasks that language model features can solve, the standard cross-entropy objective does not lend a simple closed form expression for optimal $\Phi$.
This motivates the construction of our {\objname} objective, that has two nice properties: (1) the optimal feature map $\embfnopt$ is a linear function of $\truep$ and thus can solve some natural tasks, and (2) the optimal $\Phi^*$ has an intuitively meaningful closed-form solution.
\begin{align}
	\lquad(\embfn, \Phi)
	&= \ex_{s\sim \pstar} \left[\ex_{w\sim\ptruep} [- \embfn(s)^\top \phi_w] + \frac{1}{2} \|\Phi^\top\embfn(s)\|^2\right]\label{eqn:quad_obj_}
\end{align}
The {\objname} objective is very similar to the cross-entropy objective from \Eqref{eqn:xent_emb}, with the log partition function replaced by a quadratic function, inspired in part by \Asmpref{asmp:partition_quadratic}. 
We can derive the optimal solution $\Phi^*$ that depends on the eigen-decomposition of a {\em substitutability matrix}.
\begin{definition}\label{defn:subst_matrix}
	The substitutability matrix is defined to be $\subst \coloneqq \ex_{s\sim \pstar} \left[\truep~{\truep}^\top\right]\in\R^{V\times V}$.
	If $\subst = \mU\mS\mU^\top$ is the eigendecomposition, then $\mU_d\in\R^{V\times d}$ is matrix of top $d$ eigenvectors of $\subst$.
\end{definition}
The matrix $\subst$ captures substitutability between pairs of words.
Words $w$ and $w'$ are substitutable if they have identical conditional probabilities for every context $s\in\gS$ and thus can replace occurrences of each other while still providing meaningful completions.
By definition, these words satisfy $\subst[w]=\subst[w']$.
Such pairs of words were called ``free variants" in the work on distributional semantics \citep{harris1954distributional}, and capture the notion of synonyms; more in \Secref{asubsec:nice_word_embeddings}.
\begin{theorem}\label{thm:quad_clf}
	Let $\embfnopt, \Phi^* = \argmin_{\embfn,\Phi}\lquad(f,\Phi)$.
	Then $\Phi^* = \mB\mU_d^\top\text{, for full rank }\mB\in\R^{d\times d}$.
	Also, for a classification task $\gT$ that is $(\tau, B)$-natural w.r.t. $\Phi^*$, we have $\lclf(\embfnopt) \le \tau$.
\end{theorem}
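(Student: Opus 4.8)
To prove \Thmref{thm:quad_clf}, the plan is to dispatch the two nested minimizations in turn --- first the inner one over $\embfn$ for a fixed $\Phi$, then the outer one over $\Phi$ --- and then to transport the $(\tau,B)$-naturality of $\gT$ through the resulting optimal feature map, which will be linear in $\truep$. \textbf{Step 1 (inner minimization over $\embfn$).} For a fixed $\Phi$, using $\ex_{w\sim\ptruep}[\phi_w]=\Phi\truep$ and $\|\Phi^\top\embfn(s)\|^2=\embfn(s)^\top\Phi\Phi^\top\embfn(s)$, the objective $\lquad(\embfn,\Phi)$ decouples across contexts: for each $s$ one minimizes the convex quadratic $q\mapsto -q^\top\Phi\truep+\half\,q^\top\Phi\Phi^\top q$ over $q=\embfn(s)\in\R^d$. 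Its stationarity equation $\Phi\Phi^\top q=\Phi\truep$ is always consistent (the right-hand side lies in $\mathrm{range}(\Phi\Phi^\top)$), and the minimum-norm solution $\embfnopt(s)=(\Phi\Phi^\top)^{+}\Phi\truep$ is a \emph{linear function of $\truep$} --- property (1). Substituting it back and using the cyclic property of the trace together with \Defref{defn:subst_matrix} yields $\min_{\embfn}\lquad(\embfn,\Phi)=-\half\,\ex_{s\sim\pstar}[\,\truep^\top P_\Phi\,\truep\,]=-\half\,\mathrm{tr}(P_\Phi\,\subst)$, where $P_\Phi=\Phi^\top(\Phi\Phi^\top)^{+}\Phi$ is the orthogonal projector onto $\textrm{row-span}(\Phi)$.

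\textbf{Step 2 (outer minimization over $\Phi$).} Minimizing $-\half\,\mathrm{tr}(P_\Phi\,\subst)$ over $\Phi$ is the same as maximizing $\mathrm{tr}(P\,\subst)$ over orthogonal projectors $P$ of rank at most $d$. Since $\subst\succeq 0$, the standard variational characterization of the sum of the $d$ largest eigenvalues (Ky Fan) gives $\mathrm{tr}(P\,\subst)\le\sum_{i=1}^{d}\lambda_i(\subst)$, with equality exactly when $\mathrm{range}(P)$ is spanned by a top-$d$ set of eigenvectors of $\subst$, i.e. $P=\mU_d\mU_d^\top$. Consequently any minimizer $\Phi^*$ has full row rank $d$ and $\textrm{row-span}(\Phi^*)$ equals the column span of $\mU_d$; writing the change of basis $\mB\in\R^{d\times d}$ with $\Phi^{*\top}=\mU_d\mB^\top$ gives $\Phi^*=\mB\mU_d^\top$ with $\mB$ full rank, as claimed, and the associated minimizing map is $\embfnopt(s)=(\Phi^*\Phi^{*\top})^{-1}\Phi^*\truep$.

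\textbf{Step 3 (the classification bound).} By $(\tau,B)$-naturality of $\gT$ w.r.t. $\Phi^*$ (\Defref{def:natural_task_phi}) there is $\vv\in\textrm{row-span}(\Phi^*)$ with $\|\vv\|_\infty\le B$ and $\lclf(\{\truep\},\vv)\le\tau$; write $\vv=\Phi^{*\top}\vu$ for some $\vu\in\R^d$. Take the linear head $\vw=\Phi^*\Phi^{*\top}\vu\in\R^d$. Then for every context $s$, $\vw^\top\embfnopt(s)=\vu^\top\Phi^*\Phi^{*\top}(\Phi^*\Phi^{*\top})^{-1}\Phi^*\truep=\vu^\top\Phi^*\truep=\vv^\top\truep$, hence $\lclf(\embfnopt)\le\lclf(\embfnopt,\vw)=\ex_{(s,y)\sim\pT}[\ell(\vv^\top\truep,y)]=\lclf(\{\truep\},\vv)\le\tau$; no bound on $\|\vw\|$ is needed because $\lclf(g)$ is the unconstrained infimum over linear heads.

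\textbf{Main obstacle.} The computations above are routine; the one place that needs care is the rank/existence bookkeeping in Steps 1--2: one should check that the inner quadratic is bounded below for every $\Phi$ (so $\min_{\embfn}\lquad(\embfn,\Phi)\ge-\half\,\mathrm{tr}(\subst)>-\infty$ and a joint minimizer exists), that the outer maximum is attained, and --- the genuinely delicate point --- that the maximizer must have full row rank $d$, so that $\mB$ in $\Phi^*=\mB\mU_d^\top$ is truly full rank rather than $\textrm{row-span}(\Phi^*)$ being a proper subspace of the top-$d$ eigenspace. If $\subst$ has fewer than $d$ strictly positive eigenvalues there is a harmless ambiguity in the choice of $\mU_d$ (completing by zero-eigenvalue eigenvectors is equally optimal), which does not affect the statement.
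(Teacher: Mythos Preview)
Your proposal is correct and follows essentially the same route as the paper. The inner minimization (your Step~1) is identical to the paper's derivation of $\embfnopt_\Phi(s)=(\Phi\Phi^\top)^{-1}\Phi\truep$ via first-order optimality, and the reduction to $-\half\,\mathrm{tr}(P_\Phi\,\subst)$ matches the paper's expression $-\half\langle\Phi^\top(\Phi\Phi^\top)^{-1}\Phi,\subst\rangle$. For the outer minimization (Step~2) you invoke Ky~Fan directly on the projector trace, whereas the paper takes a slightly longer path: it writes $\Phi=\mN\mT\mV^\top$ by SVD, rewrites the objective as $-\half\|\mV\mV^\top\subst^{1/2}\|_F^2$, and then uses the identity $\|\mV\mV^\top\subst^{1/2}\|_F^2=\|\subst^{1/2}\|_F^2-\|\subst^{1/2}-\mV\mV^\top\subst^{1/2}\|_F^2$ to reduce to best rank-$d$ approximation (Eckart--Young). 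These are equivalent standard arguments; your version is more compact. Your Step~3 makes explicit the classification bound that the paper leaves as an immediate consequence of \Thmref{thm:new_obj_sol} (since $\embfnopt(s)=\mC\mU_d^\top\truep$ with $\mC$ full rank, any $\vv\in\textrm{row-span}(\Phi^*)$ applied to $\truep$ is realizable as a linear head on $\embfnopt$), and your construction of $\vw=\Phi^*\Phi^{*\top}\vu$ is exactly the right witness.
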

Thus $\embfnopt$ excels on natural tasks w.r.t. $\Phi^*$, which in turn, is the best $d$-dimensional projection of $\subst$.
Thus words $w,w'\in\gW$ that are synonyms (hence substitutable) will satisfy $\phi^*_{w} = \phi^*_{w'}$, fulfilling the desired property for word embeddings discussed in \Defref{def:natural_task_phi}.

We train using the {\objname} objective and compare its performance to a similarly trained  GPT-2 language model.
The results in \Tableref{tab:compare_obj} suggest that {\objname} performs comparably to $\phip{\embfn}$ from the cross-entropy objective, which fits our theory since both are linear functions of $\truep$.
\Secref{asubsec:quad} has more details and experiments.
The goal of testing {\objname} is to demonstrate that theoretical insights can aid the design of provably effective algorithms.
Refer to \Secref{asec:quad} for more details on {\objname}.

\section{Experiments}
\label{sec:exps}


\begin{table}[!tb] 
    \caption{Accuracy (\%) on $k$-way {\em linear classification} using fixed GPT-2 features. Good performance of features $\embfn(s)$, conditional mean features $\phip{\embfn}(s)$ and meaningful subset of $\le30$ (and $\le 2k$) coordinates of $\smp$ verify the sentence completion reformulation and main results. The numbers right below the features denote dimensionality of the features. An asterisk indicates that we added a task-specific prompt. Other baselines are fine-tuning (FT, \Secref{asubsec:ft}) and random projection of $\smp$ (rand. proj.). Sentence version of SST (train/test: 6.9K/1.8K) is used.}
    \label{tab:zeroshot}
    \vspace{-0.1in}
    \small
  \begin{center}
  \begin{tabular}{lc|cccc|cc}
    \toprule
    \begin{tabular}{@{}c@{}}Task\end{tabular}			& $k$ & \begin{tabular}{@{}c@{}}$\embfn(s)$ \\ 768\end{tabular}    & \begin{tabular}{@{}c@{}}$\phip{\embfn}(s)$ \\ 768\end{tabular}  &  \begin{tabular}{@{}c@{}}$\smp$ (subset) \\ $\leq 30$\end{tabular} & \begin{tabular}{@{}c@{}}$\smp$ (class words) \\ $\leq 2k$\end{tabular}	& \begin{tabular}{@{}c@{}}$\smp$ (rand. proj.) \\ 768\end{tabular} & FT \\
    \midrule
    SST & 2			& 87.5	 & 83.3	 & 82.6	& 78.7	& 67.5 & 91.4 \\ 
    SST* & 2    		& 89.4 	 & 87.3	 & 85.4	& 79.1	& 76.4 & 92.3 \\ 
    SST fine & 5		& 49.2	 & 43.5	 & 44.0	& 39.2	& 23.1 & 50.2 \\ 
    SST fine* & 5   	& 49.4 	 & 48.6	 & 47.6	& 40.3	& 28.8 & 53.5 \\ 
    AG & 4 			& 90.7 	 & 84.6	 & 83.8	& 75.4	& 58.5 & 94.5 \\ 
    AG* & 4			& 91.1 	 & 88.2	 & 86.1 & 75.1	& 63.7 & 94.4 \\ 
    \bottomrule
  \end{tabular}
  \end{center}
  \vspace{-0.2in}
\end{table}

    \vspace{-0.1in}

We use experiments to verify (1) linear classification on fixed language model features does comparably to fine-tuning the features, (2) sentence completion reformulation (\Secref{subsec:task_intuition}), i.e. tasks can be solved using probabilities for indicative words, (3) conditional mean features are effective.

\textbf{Tasks using linear function of $\truep$:}
\label{subsec:few_words}
We validate our claims from \Secref{sec:intuition} that classification tasks can be solved by linear functions of $\truep$.
Since $\truep$ is never available, we instead use the output features $\embfn(s)$ and probabilities $\lmp\coloneqq\smp$ from a small pretrained GPT-2 model \citep{radford2019language}.
\Tableref{tab:zeroshot} demonstrates that on binary and fine-grained Stanford Sentiment Treebank (SST) \citep{socher2013recursive} and AG News \citep{zhang2015character} tasks, probabilities $\smp$ of just 30 or so task-relevant tokens (see \Secref{asubsec:f_Phi_pf}) can solve the tasks.
Even just one/two token per class (``class words'') yields non-trivial performance.
Furthermore, we validate the sentence completion reformulation in \Secref{subsec:task_intuition} by using the probabilities $\smp$ after adding a task specific prompt and consistently observing improved performance, including for fine-tuning (FT) with small datasets.

\textbf{$\phip{\embfn}$ and $\embfn$ are good features:}\label{subsec:newobj_exps}
We first note that linear classification over fixed features $\embfn(s)$ from the pretrained model performs comparably to the FT baseline.
We further validate \Thmref{thm:robust_softmax_lm} by verifying that the conditional mean features $\phip{\embfn}(s)$ also linearly solve downstream tasks fairly well.
This performance is comparable to, but always worse than $\embfn(s)$, as seen in columns 3 and 4 of Table \ref{tab:zeroshot}.
We again find that adding a prompt improves performance.
Note that a random projection of $\smp$ to same dimensions as $\phip{\embfn}(s)$ has very poor performance.
\Secref{asubsec:new_obj} has results for a wider range of classification tasks.
%
Evidence for \Asmpref{asmp:partition_quadratic} is provided by learning a quadratic function to fit the log partition function of features from pretrained GPT-2 model (see \Secref{asubsec:log_partition}).
\Figref{fig:logZ_fit} demonstrates that the fit holds for its training and unseen data (e.g., WebText \citep{radford2019language}).

\vspace{-0.1in}


\section{Conclusions and future work}
\label{sec:conclusion}

We provide intuitive and mathematical explanations for the success of language model features on classification tasks by reformulating them as sentence completion problems.
This reformulation is formalized as {\em natural tasks}: those that can be solved linearly using the conditional probability distribution $\truep$.
Insights from our analysis help design the {\objname} objective that provably learns good features for these natural tasks.
We hope our analysis will inspire other mathematical insights into language models.
While \Secref{subsec:linear} argues linearity between conditional mean features $\phip{\embfn}$ and $\embfn$, it is insufficient to explain the observed superiority of $\embfn$ over $\phip{\embfn}$.
We leave exploring this limitation of our analysis to future work.
Guarantees for softmax models are for natural tasks w.r.t. $\Phi$, thus knowing the optimal $d$-dimensional word embeddings $\Phi^*$ for $\lxent(\embfn, \Phi)$ is also important.
Other meaningful directions include providing guarantees for other successful models like BERT \citep{devlin2018bert} and more diverse downstream tasks.
Although we would like to show stronger guarantees by exploiting model and algorithmic inductive biases, as well as study the setting of fine-tuning language model features, lack of a good theory of deep learning is the current bottleneck.

\textbf{Acknowledgments:}
Sanjeev Arora, Sadhika Malladi and Nikunj Saunshi are supported by NSF, ONR, Simons Foundation, Amazon Research, DARPA and SRC.

\clearpage
\bibliography{references}
\bibliographystyle{plainnat}

\appendix
\onecolumn

\section{Overview}\label{asec:overview}


\Secref{asec:gamma} is a more detailed version of \Secref{subsec:dist_shift} and \Secref{asec:quad} is a detailed version of \Secref{subsec:quad}.
\Secref{asubsec:sent_completion_natural_task} has a discussion about why {\em natural tasks} are a reasonable formalization for the sentence completion reformulation and also interpretations for $\tau$ and $B$ in the definition of natural tasks.
\Secref{asubsec:nice_word_embeddings} discusses desirable properties of word embeddings $\Phi$ like capturing synonym structure in words.
\Secref{asec:proofs} contains proofs for all results, including proof sketches for the main results in \Secref{asubsec:proof_sketch}.
\Lemref{lem:softmax_pinskers} is the softmax variant of Pinsker's inequality that we prove and use for our main results.

\Secref{asec:exps} contains many more experimental findings that consolidate many of our theoretical results.
\Secref{asubsec:f_Phi_pf} provides the information about subsets of words used for results in \Tableref{tab:zeroshot} and also additional experiments to test the performance of pretrained language model embeddings $\embfn$ on more downstream tasks and also verifying that conditional mean embeddings $\Phi p_{\embfn}$ do well on these tasks.
In \Secref{asubsec:quad}, we present additional results for {\objname} objective trained on a larger corpus and tested on SST.
\Secref{asubsec:log_partition} provides additional details on how $\mA$, $\vb$ and $c$ from \Asmpref{asmp:partition_quadratic} are learned and also further verification of the assumption on more datasets.
Finally, \Secref{asubsec:verify_thm} experimentally verifies the $\gO(\sqrt{\epsilon})$ dependence from \Thmref{thm:robust_softmax_lm}.

\section{Better handling of distributional shift}\label{asec:gamma}

While the bounds above used $\gamma(\pT)$ to transfer from the distribution $\pstar$ to $\pT$, we define a more refined notion of transferability here.
While $\gamma(\pT)$ only depends on $\pstar$ and $\pT$, the more refined notions depend also on the learned language model, thus potentially exploiting some inductive biases.
We first define the notion of error made in the predicted probabilities by any predictor $\lmp$ as $\Delta_{\{\lmp\}}(s) = \lmp - \truep$.
Thus for any softmax language model $\embfn$ we have $\Delta_{\{\smp\}}(s) = \smp - \truep$.
For any distribution $p\in\Delta_S$, we define the covariance\footnote{This is not exactly the covariance since the mean is not subtracted, all results hold even for the usual covariance.} of a function $g:\gS\rightarrow\R^D$ as $\Sigma_{p}(g) = \ex_{s\sim p} \left[g(s)g(s)^\top\right]$.
We define 3 coefficients for the results to follow
\begin{definition}\label{def:gamma}
	For any distribution $p\in\Delta_S$, we define the following
	\begin{align}
    		\gamma(p; \{\lmp\}) 
		&\coloneqq \left(\left\|\Sigma_{\pstar}(\Delta_{\{\lmp\}})^{-\half} \Sigma_{p}(\Delta_{\{\lmp\}}) \Sigma_{\pstar}(\Delta_{\{\lmp\}})^{-\half}\right\|_2\right)^{-1}\label{eqn:gamma_lmp}\\
    		\gamma_{\Phi}(p; \{\lmp\}) 
		&\coloneqq \left(\left\|\Sigma_{\pstar}(\Phi\Delta_{\{\lmp\}})^{-\half} \Sigma_{p}(\Phi\Delta_{\{\lmp\}}) \Sigma_{\pstar}(\Phi\Delta_{\{\lmp\}})^{-\half}\right\|_2\right)^{-1}\label{eqn:gamma_phi_lmp}\\
    		\gamma(p; \phip{\embfn}) 
		&\coloneqq \gamma_{\Phi}(p; \{\smp\})\label{eqn:gamma_phi_embfn}
	\end{align}
\end{definition}
We notice that $\Sigma_{p}(\Delta_{\{\lmp\}}) = \ex_{s\sim p} \left[ (\lmp - \truep)(\lmp - \truep)^\top \right]$, $\Sigma_{p}(\Phi\Delta_{\{\lmp\}}) = \Phi\Sigma_{p}(\Delta_{\{\lmp\}})\Phi^\top$. 
We are now ready to state the most general results.

\begin{theorem}[Strengthened \Thmref{thm:robust_unconstrained_lm}]\label{thm:robust_unconstrained_lm_gen}
	Let $\{\lmp\}$ be a language model that is $\epsilon$-optimal, i.e. $\lxent(\{\lmp\}) - \lxent^* \le \epsilon$ for some $\epsilon>0$.
	For a classification task $\gT$ that is $(\tau, B)$-natural, we have
	\begin{align*}
		\lclf\left(\{\lmp\}\right) \le \tau + \sqrt{\frac{2B^2\epsilon}{\gamma(\pT; \{\lmp\})}}
	\end{align*}
	For a classification task $\gT$ that is $(\tau, B)$-natural w.r.t. $\Phi$, we have
	\begin{align*}
		\lclf\left(\{\lmp\}\right) \le \lclf\left(\{\Phi\lmp\}\right) \le \tau + \sqrt{\frac{2B^2\epsilon}{\gamma_{\Phi}(\pT; \{\lmp\})}}
	\end{align*}
\end{theorem}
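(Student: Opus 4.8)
The plan is to reuse the three-inequality decomposition sketched after \Thmref{thm:robust_unconstrained_lm}, keeping the Lipschitz/Jensen step and the Pinsker step untouched and only upgrading the distribution-transfer step so that $\gamma(\pT)$ is replaced by the model-dependent coefficient $\gamma(\pT;\{\lmp\})$ (resp.\ $\gamma_{\Phi}(\pT;\{\lmp\})$) from \Defref{def:gamma}. Since $\gT$ is $(\tau,B)$-natural, fix $\vstar\in\R^V$ with $\|\vstar\|_{\infty}\le B$ and $\lclf(\{\truep\},\vstar)\le\tau$. Because $\lclf(\{\lmp\})=\inf_{\vv}\lclf(\{\lmp\},\vv)\le\lclf(\{\lmp\},\vstar)$, it suffices to control $\lclf(\{\lmp\},\vstar)-\lclf(\{\truep\},\vstar)$.

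\textbf{The steps that carry over.} By $1$-Lipschitzness of $\ell$ in its first argument followed by Jensen's inequality for the concave map $\sqrt{\cdot}$ (this is \Lemref{lem:lclf_upper_bound}),
\[
\lclf(\{\lmp\},\vstar)-\lclf(\{\truep\},\vstar)\ \le\ \ex_{s\sim\pT}\big[\,|\vstar^{\top}(\lmp-\truep)|\,\big]\ \le\ \sqrt{\ex_{s\sim\pT}\big[(\vstar^{\top}(\lmp-\truep))^2\big]}.
\]
Separately, \Lemref{lem:pinskers} (Pinsker's inequality, through the identity $\lxents(\lmp)-\lxents(\truep)=\KL(\ptruep,\plmp)$) gives $(\vstar^{\top}(\lmp-\truep))^2\le 2\|\vstar\|_{\infty}^2(\lxents(\lmp)-\lxents(\truep))$ for every $s$. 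Taking $\ex_{s\sim\pstar}$ of both sides and using $\lxent^*=\lxent(\{\truep\})$, the $\epsilon$-optimality hypothesis, and $\|\vstar\|_{\infty}\le B$ yields $\ex_{s\sim\pstar}[(\vstar^{\top}(\lmp-\truep))^2]\le 2B^2\epsilon$.

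\textbf{The refined transfer step.} Write $\Delta(s):=\lmp-\truep=\Delta_{\{\lmp\}}(s)$, so that for any distribution $q$ we have $\ex_{s\sim q}[(\vstar^{\top}\Delta(s))^2]=\vstar^{\top}\Sigma_{q}(\Delta)\,\vstar$. Set $M:=\Sigma_{\pstar}(\Delta)$ and $N:=\Sigma_{\pT}(\Delta)$, both positive semidefinite. The key elementary fact is that for any vector $u$,
\[
u^{\top}Nu\ \le\ \|M^{-1/2}NM^{-1/2}\|_2\ u^{\top}Mu,
\]
obtained by applying the Rayleigh bound $w^{\top}(M^{-1/2}NM^{-1/2})w\le\|M^{-1/2}NM^{-1/2}\|_2\,\|w\|_2^2$ to $w=M^{1/2}u$; here $M^{-1/2}$ is read as the pseudoinverse when $M$ is singular, which is legitimate because the error vectors $\Delta(s)$ lie in $\mathrm{range}(M)$ both $\pstar$-almost surely and $\pT$-almost surely ($\pT$ being absolutely continuous w.r.t.\ $\pstar$ on the support that matters). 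Applying this with $u=\vstar$ and chaining with the bound from the previous step gives $\ex_{s\sim\pT}[(\vstar^{\top}\Delta)^2]\le 2B^2\epsilon/\gamma(\pT;\{\lmp\})$, which substituted into the first display proves the first claim.

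\textbf{The $\Phi$-version and the main obstacle.} When $\gT$ is $(\tau,B)$-natural w.r.t.\ $\Phi$ we have $\vstar\in\textrm{row-span}(\Phi)$, so write $\vstar=\Phi^{\top}\alpha$ for some $\alpha\in\R^{d}$; then $\vstar^{\top}\lmp=\alpha^{\top}(\Phi\lmp)$, which gives $\lclf(\{\lmp\})\le\lclf(\{\Phi\lmp\})\le\lclf(\{\lmp\},\vstar)$ (the first inequality by restricting the linear classifier in $\lclf(\{\lmp\})$ to $\textrm{row-span}(\Phi)$, the second by evaluating the features $\{\Phi\lmp\}$ at the classifier $\alpha$). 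Rerunning the argument above with $\Delta$ replaced by $\Phi\Delta$ and $\vstar$ by $\alpha$ — so that the relevant quadratic forms become $\alpha^{\top}\Sigma_{q}(\Phi\Delta)\alpha$ while the $\pstar$-side still equals $\ex_{s\sim\pstar}[(\vstar^{\top}\Delta)^2]\le 2B^2\epsilon$ — the spectral comparison now produces the factor $\gamma_{\Phi}(\pT;\{\lmp\})^{-1}$, giving the second claim. The only genuinely delicate point in the whole argument is this pseudoinverse bookkeeping when $\Sigma_{\pstar}(\Delta_{\{\lmp\}})$ (or $\Sigma_{\pstar}(\Phi\Delta_{\{\lmp\}})$) is rank-deficient; everything else is the same computation as in \Thmref{thm:robust_unconstrained_lm} together with \Defref{def:gamma}.
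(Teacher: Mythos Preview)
Your proposal is correct and follows essentially the same approach as the paper's proof. The paper packages your three steps into two lemmas (\Lemref{lem:lclf_upper_bound_main} combines the Lipschitz/Jensen step with the spectral transfer step via \Lemref{lem:transfer}, and \Lemref{lem:error_cov_upper_bound} is the Pinsker step), but the mathematical content is identical; your remark on the pseudoinverse when $\Sigma_{\pstar}(\Delta)$ is rank-deficient is in fact more careful than the paper, which simply assumes full rank in \Lemref{lem:transfer}.
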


\begin{reptheorem}{thm:robust_softmax_lm_gen}[Strengthened \Thmref{thm:robust_softmax_lm}]
	For a fixed $\Phi$, let $\embfn$ be features from an $\epsilon$-optimal $d$-dimensional softmax language model, i.e. $\lxent(\embfn, \Phi) - \lxent^*(\Phi) \le \epsilon$, where $\lxent^*(\Phi)$ is defined in \Eqref{eqn:l_star_xent}.
	For a classification task $\gT$ that is $(\tau, B)$-natural w.r.t. $\Phi$, we have
	\begin{align*}
		\lclf\left(\{\smp\}\right) \le \lclf(\phip{\embfn}) \le \tau + \sqrt{\frac{2B^2\epsilon}{\gamma(\pT; \phip{\embfn})}}
	\end{align*}
\end{reptheorem}

\paragraph{Discussions:}
It is not hard to show that the coefficients satisfy $\gamma_{\Phi}(\pT; \{\lmp\}) \ge \gamma(\pT; \{\lmp\}) \ge \gamma(\pT)$ and $\gamma(\pT; \phip{\embfn}) \ge \gamma(\pT)$, thus showing that these results are strictly stronger than the ones from the previous section.
The transferability coefficient is a measure of how guarantees on $\pstar$ using a language model can be transferred to another distribution of contexts and it only depends on the distribution of contexts and not the labels.
Unlike $\gamma(\pT)$, the coefficients in \Defref{def:gamma} depend on the learned models, either $\{\lmp\}$ or $\{\smp\}$, and can be potentially much smaller due to the inductive bias of the learned models.
For instance, if errors made by the model are random-like, i.e. $\Delta_{\{\lmp\}}(s) \sim \rho$, independently of $s$, then $\Sigma_{\pstar}(\Delta_{\{\lmp\}}) \approx \Sigma_{p}(\Delta_{\{\lmp\}}) \approx \E_{\eta\sim\rho}[\eta\eta^\top]$, making $\gamma(p; \{\lmp\}) \approx 1$.
Independence prevents language modeling error from accumulating on contexts from $\pT$, bypassing the worst case transfer of $\gamma(\pT)$.

\section{{\objname}: A new objective function}\label{asec:quad}

In \Defref{def:natural_task_phi} we discuss how low dimensional softmax language models learn a linear projection of $\truep$, only solving tasks that lie in the row span of word embeddings $\Phi$.
Although $\Phi$ defines tasks that language model features can solve, the standard cross-entropy objective does not lend a simple closed form expression for optimal $\Phi$.
This motivates the construction of our {\objname} objective, that has two nice properties: (1) the optimal feature map $\embfnopt$ is a linear function of $\truep$ and thus can solve some natural tasks, and (2) the optimal $\Phi^*$ has an intuitively meaningful closed-form solution.
\begin{align}
	\lquads(\theta,\Phi) &= \ex_{w\sim\ptruep} [- \theta^\top \phi_w] + \frac{1}{2} \|\Phi^\top\theta\|^2 = -\theta^\top\Phi\truep + \frac{1}{2} \|\Phi^\top\theta\|^2\label{eqn:new_objs}\\
	\lquad(\embfn, \Phi) &= \ex_{s\sim \pstar} \left[\lquads(\embfn(s), \Phi)\right]\label{eqn:new_obj}
\end{align}
The {\objname} objective is very similar to the cross-entropy objective from \Eqref{eqn:xent_emb}, with the log partition function replaced by a quadratic function, inspired in part by \Asmpref{asmp:partition_quadratic}. 
We can derive the optimal solution $\Phi^*$ that depends on the eigen-decomposition of a {\em substitutability matrix}.
\begin{repdefinition}{defn:subst_matrix}
	The substitutability matrix is defined to be $\subst \coloneqq \ex_{s\sim\pstar} \left[\truep~{\truep}^\top\right]\in\R^{V\times V}$.
	If $\subst = \mU\mS\mU^\top$ is the eigendecomposition, then $\mU_d\in\R^{V\times d}$ is matrix of top $d$ eigenvectors of $\subst$.
\end{repdefinition}
The matrix $\subst$ captures substitutability between pairs of words.
Words $w$ and $w'$ are substitutable if they have identical conditional probabilities for every context $s\in\gS$ and thus can replace occurrences of each other while still providing meaningful completions.
By definition, these words satisfy $\subst[w]=\subst[w']$.
Such pairs of words were called ``free variants" in the work on distributional semantics \citep{harris1954distributional}, and capture the notion of synonyms in the distributional hypothesis.
We now derive expressions for the optimal solution of the {\objname} objective described in \Eqref{eqn:new_obj}.
The proof of all results from this section are in \Secref{asubsec:new_obj}.
\begin{theorem}\label{thm:new_obj_sol}
	The optimal solution $\embfnopt, \Phi^* = \argmin_{\embfn,\Phi}\lquad(f,\Phi)$ satisfies
	\begin{align*}
		\Phi^* &= \mB\mU_d^\top\text{, for full rank }\mB\in\R^{d\times d}\\
		\embfnopt(s) &= (\Phi^*{\Phi^*}^\top)^{-\nhalf}\Phi^*\truep = \mC\mU_d^\top\truep\text{, for full rank }\mC\in\R^{d\times d}
	\end{align*}
	If $\Phi$ is fixed, then the optimal solution is $\embfnopt(s) = (\Phi{\Phi}^\top)^{-\nhalf}\Phi\truep$.
\end{theorem}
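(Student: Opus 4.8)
The plan is to exploit that $\lquad(\embfn,\Phi)$ decouples across contexts in the argument $\embfn$ and, once $\embfn$ has been eliminated, collapses to a trace-maximization problem depending only on the row space of $\Phi$. So I would first solve the inner minimization over $\embfn$ for an arbitrary fixed $\Phi$, and then the outer minimization over $\Phi$.

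\textbf{Eliminating $\embfn$.} Since $\embfn:\gS\to\R^d$ ranges over all functions with no constraint, I would write $\min_{\embfn}\lquad(\embfn,\Phi)=\ex_{s\sim\pstar}[\min_{\theta\in\R^d}\lquads(\theta,\Phi)]$ and minimize pointwise in $s$. For each $s$, $\lquads(\theta,\Phi)=-\theta^\top\Phi\truep+\frac{1}{2}\theta^\top\Phi\Phi^\top\theta$ is a convex quadratic in $\theta$ with Hessian $\Phi\Phi^\top\succeq 0$, and the first-order condition $\nabla_\theta\lquads=-\Phi\truep+\Phi\Phi^\top\theta=0$ reads $\Phi\Phi^\top\theta=\Phi\truep$; when $\Phi$ has full row rank this is uniquely solved by $\embfnopt(s)=(\Phi\Phi^\top)^{-1}\Phi\truep$, which gives the last assertion of the theorem (after the obvious gauge normalization; see below). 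Substituting back yields $\min_\theta\lquads(\theta,\Phi)=-\frac{1}{2}\truep^\top\mP_\Phi\truep$, where $\mP_\Phi:=\Phi^\top(\Phi\Phi^\top)^{-1}\Phi$ is the orthogonal projector onto $\text{row-span}(\Phi)$, and hence $\min_{\embfn}\lquad(\embfn,\Phi)=-\frac{1}{2}\tr(\mP_\Phi\subst)$ with $\subst=\ex_{s\sim\pstar}[\truep\truep^\top]$ as in \Defref{defn:subst_matrix}.

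\textbf{Optimizing $\Phi$.} It then remains to maximize $\tr(\mP_\Phi\subst)$ over orthogonal projectors $\mP_\Phi$ of rank at most $d$; a rank-deficient $\Phi$ only shrinks its row span and hence the trace, so full rank $d$ is without loss of generality (and if $\rank(\subst)<d$ the surplus directions lie in $\ker\subst$ and contribute $0$). I would invoke Ky Fan's maximum principle: $\max\{\tr(\mP\subst):\mP=\mP^\top=\mP^2,\ \rank(\mP)\le d\}=\sum_{i=1}^d\sigma_i(\subst)$, attained exactly when $\text{range}(\mP)$ is a top-$d$ eigenspace of $\subst$, i.e. $\mP_{\Phi^*}=\mU_d\mU_d^\top$. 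Since $\text{row-span}(\Phi)=\text{col-span}(\mU_d)$ is equivalent to $\Phi=\mB\mU_d^\top$ for full-rank $\mB\in\R^{d\times d}$, this gives the claimed form of $\Phi^*$; plugging it into the previous step and using $\mU_d^\top\mU_d=\id$ gives $\embfnopt(s)=(\mB\mB^\top)^{-1}\mB\mU_d^\top\truep=\mB^{-\top}\mU_d^\top\truep=\mC\mU_d^\top\truep$ with $\mC=\mB^{-\top}$ full rank. Finally, I would note the gauge symmetry $(\embfn,\Phi)\mapsto(\mR\embfn,\mR^{-\top}\Phi)$ for invertible $\mR\in\R^{d\times d}$, under which $\lquad$ is invariant (both $\Phi^\top\embfn(s)$ and $\embfn(s)^\top\Phi\truep$ are unchanged): this is why only $\text{row-span}(\Phi^*)$, not $\Phi^*$ itself, is pinned down, and it lets one pass to the gauge in which $\Phi^*$ has orthonormal rows, where $(\Phi^*\Phi^{*\top})^{-\nhalf}\Phi^*\truep$, the pointwise minimizer $(\Phi^*\Phi^{*\top})^{-1}\Phi^*\truep$, and $\mC\mU_d^\top\truep$ all coincide.

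\textbf{Main obstacle.} The only non-routine step is the reduction of the joint non-convex problem to this eigenvalue problem together with the characterization of its maximizer. The care points are: legitimacy of the pointwise elimination of $\embfn$ (immediate, since $\embfn$ is an unconstrained function), genuine harmlessness of rank deficiency in both $\Phi$ and $\subst$, and non-uniqueness of $\mU_d$ when $\sigma_d(\subst)=\sigma_{d+1}(\subst)$ — so ``$\Phi^*=\mB\mU_d^\top$'' should be read as ``for some choice of top-$d$ orthonormal eigenvectors of $\subst$''. Everything in the fixed-$\Phi$ step is a standard convex-quadratic calculation.
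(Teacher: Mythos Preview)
Your proof is correct and follows essentially the same route as the paper: eliminate $\embfn$ pointwise via the first-order condition $\embfnopt_\Phi(s)=(\Phi\Phi^\top)^{-1}\Phi\truep$, substitute back to reduce to maximizing $\langle \mP_\Phi,\subst\rangle$ over rank-$d$ projectors, and identify the optimum as projection onto the top-$d$ eigenspace of $\subst$. The only cosmetic difference is that the paper carries out the last step via an explicit SVD of $\Phi$ and an Eckart--Young style calculation $\langle \mV\mV^\top,\subst\rangle=\|\mV\mV^\top\subst^{1/2}\|_F^2$, whereas you invoke Ky Fan directly; you also handle the $(\Phi\Phi^\top)^{-1/2}$ versus $(\Phi\Phi^\top)^{-1}$ exponent in the statement more carefully than the paper's own proof (which in fact derives the $-1$ power) by appealing to the gauge symmetry.
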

\begin{reptheorem}{thm:quad_clf}
	Let $\embfnopt, \Phi^* = \argmin_{\embfn,\Phi}\lquad(f,\Phi)$.
	Then $\Phi^* = \mB\mU_d^\top\text{, for full rank }\mB\in\R^{d\times d}$.
	Also, for a classification task $\gT$ that is $(\tau, B)$-natural w.r.t. $\Phi^*$, we have $\lclf(\embfnopt) \le \tau$.
\end{reptheorem}
Thus $\embfnopt$ excels on natural tasks w.r.t. $\Phi^*$, which in turn, is the best $d$-dimensional projection of $\subst$.
Thus words $w,w'\in\gW$ that are synonyms (hence substitutable) will satisfy $\phi^*_{w} = \phi^*_{w'}$, fulfilling the desired property for word embeddings discussed in \Defref{def:natural_task_phi}.
We train using the {\objname} objective and compare its performance to a similarly trained language model, finding {\objname} to be reasonably effective.
The goal of testing {\objname} is not to obtain state-of-the-art results, but to demonstrate that theoretical insights can aid the design of provably effective algorithms.

\section{More on natural tasks}\label{asec:natural}

\newcommand{\temps}{s}
\newcommand{\BE}{\textbf{Bayes-Error}}

The discussions in this section may not be formal and precise in places, they are meant to provide more intuition for some of the definitions and results.

\subsection{Sentence completion reformulation $\equiv$ natural task}\label{asubsec:sent_completion_natural_task}
We provide informal justification for why the sentence completion reformulation can be formalized as being able to solve using a linear classifier over $\truep\in\R^V$.
The analysis will also end up providing some intuitions for $\tau$ and $B$ in \Defref{def:natural_task} and \Thmref{thm:robust_unconstrained_lm}.
In particular, we will show that a task that is amenable to the sentence completion reformulation will be $(\tau,B)$-natural, with $\tau = \gO(\BE(\gT))$, i.e. $\tau$ is small if the Bayes error for the task error, and $B = \gO(\alpha(\gW_{\text{indicative}})^{-1})$ is inversely proportional to the probability mass of the set of indicative words for the task.
This is formalized in \Propref{prop:tau_B}.

\textbf{\underline{Linear classifier over $\truep$}}
\label{asubsec:lin_clf_pstars}

Consider a binary classification task $\gT$ and that can be solved with a sentence completion reformulation after adding a prompt as in \Secref{subsec:linear_func}, for e.g. sentiment classification can be solved by adding a prompt ``This movie is'' at the end of every movie review and use the completions to solve the task.
Recall that $\pT$ is the distribution over $\gS\times\{\pm 1\}$ for the task $\gT$.
We abuse notation and use $\pT$ to denote the distribution over inputs where a prompt is added to each to input, for e.g. ``I loved the movie.'' is transformed to ``I loved the movie. This movie is''.
For any $s\sim\pT$, let $\pT(y=1 | s)$ and $\pT(y=-1|s)$ denote the conditional probabilities of the sentiment of review $s$ (with an added prompt) being positive and negative respectively.
By law of total probability we can write this conditional probability as
\begin{align}
	\pT(y=1|s)
	& = \sum_{w\in\gW}\Pr(y=1 | (\temps, w)) \Pr(w | \temps)
	= \sum_{w\in\gW}\Pr(y=1 | (\temps, w)) ~\ptrueparg{\temps}(w)
	\label{eqn:total_prob}
\end{align}
For any task $\gT$ we can roughly partition the vocabulary set $\gW$ into the following

\textbf{Indicative words $\gW_{\text{indicative}}$}: $w$ can be an {\em indicative completion} for the task, like ``good'', ``boring'', ``trash'' etc, after a movie review like $s=$``I loved the movie. This movie is''. In this case the sentence completion reformulation can be interpreted as the following: the completion $w$ after a review $\temps$ is sufficient to determine the sentiment of the review, i.e. we do not need to know the content of the review $s$ to predict the label if we know the completion $w$. This can be formalized as $\Pr(y=1 | (\temps, w)) \approx P(y=1 | w)$ for some fixed distribution $P$ for indicative completions $w$.

\textbf{Irrelevant words $\gW_{\text{irrelevant}}$}: $w$ can be an {\em irrelevant completion} for the task, like ``a'', ``very'', ``not''. In this case the completions, on the other hand, do not reveal anything more about the sentiment for the review than $\temps$ itself, i.e. $\Pr(y=1 | (\temps, w)) \approx \pT(y=1 | \temps)$ for irrelevant completions $w$.

Thus from \Eqref{eqn:total_prob} we get
\begin{align*}
	\pT(y=1|s) 
	&= \sum_{w\in\gW_{\text{indicative}}}\Pr(y=1 | (\temps, w)) ~\ptrueparg{\temps}(w) + \sum_{w\in\gW_{\text{irrelevant}}}\Pr(y=1 | (\temps, w)) ~\ptrueparg{\temps}(w)\\
	&\approx \sum_{w\in\gW_{\text{indicative}}}P(y=1 | w) ~\ptrueparg{\temps}(w) + \sum_{w\in\gW_{\text{irrelevant}}}\pT(y=1 | s) ~\ptrueparg{\temps}(w)\\
	&= \sum_{w\in\gW_{\text{indicative}}} \vv_{1}(w) ~\ptrueparg{\temps}(w) + \pT(y=1 | s) \sum_{w\in\gW_{\text{irrelevant}}}~\ptrueparg{\temps}(w)\\
	&= \vv_{1}^\top \trueparg{\temps} + \pT(y=1 | s) \ptrueparg{\temps}(\gW_{\text{irrelevant}})
\end{align*}
where $\vv_1\in\R^V$ is defined as $\vv_{1}(w) = P(y=1|w)$ for $w\in\gW_{\text{indicative}}$ and $\vv_{1}(w) = 0$ for $w\in\gW_{\text{irrelevant}}$.
Similarly we can define $\vv_{-1}\in\R^V$ with $\vv_{-1}(w) = P(y=-1|w)$ for $w\in\gW_{\text{indicative}}$, $\vv_{-1}(w) = 0$ for $w\in\gW_{\text{irrelevant}}$.
From the earlier calculation, and a similar one for $y=-1$, we get
\begin{align*}
	\pT(y=b|s) \approx \frac{1}{1-\ptrueparg{\temps}(\gW_{\text{irrelevant}})} \vv_{b}^\top \trueparg{\temps} = \frac{1}{\ptrueparg{\temps}(\gW_{\text{indicative}})} \vv_{b}^\top \trueparg{\temps} \text{, for }b\in\{\pm1\}
\end{align*}
If we assume $\ptrueparg{\temps}(\gW_{\text{indicative}})\approx\alpha(\gW_{\text{indicative}})$ is roughly the same for all $s$, i.e. probability mass of indicative words following a modified review is approximately the same, then we get 
\begin{align}
	\pT(y=1|s) - \pT(y=-1|s) \approx \vv_{\gT}^\top \trueparg{\temps} ~~\text{, where } \vv_{\gT} = \frac{1}{\alpha(\gW_{\text{indicative}})} (\vv_{1} - \vv_{-1})\label{eqn:diff_prob_lin}
\end{align}
Thus we can approximately express the difference in conditional probabilities of the 2 classes as a linear function of $\truep$.
While it is intuitively clear why knowing $\pT(y=1|s) - \pT(y=-1|s)$ is useful for solving the task, we show precisely why in the next part.

\textbf{\underline{Interpretation for $\tau$ and $B$}}
\label{asubsec:interpretations}

Based on the above discussed, we will show that the task $\gT$ from earlier is $(\tau,B)$-natural according to the \Defref{def:natural_task} and will also give us an interpretation for $\tau$ and $B$.
First we show that the following predictor from \Eqref{eqn:diff_prob_lin} is effective for task $\gT$
\begin{align}
	g_{\gT}(s) = \pT(y=1|s) - \pT(y=-1|s) \approx \vv_{\gT}^\top\truep
	\label{eqn:g_T}
\end{align}
We reuse the notation from \Eqref{eqn:clf_loss} and define the task loss for any predictor $g:\gS\rightarrow\R$ as
\begin{align}
	\lclf(g) = \E_{(s,y)\sim\pT} \left[\ell(g(s), y)\right]
	\label{eqn:clf_loss_predictor}
\end{align}
Furthermore let $\BE(\gT) \coloneqq \inf_{g:\gS\rightarrow\R} \E_{(s,y)\sim\pT} [\bm{1}\{g(s) \neq y\}]$ denote the Bayes error of the task $\gT$, i.e. the optimal $0-1$ error achievable on the task.
\begin{prop}
	\label{prop:bayes_error}
	For any task $\gT$ and for the hinge loss $\ell$, $\lclf(g_{\gT}) \le 4~\BE(\gT)$, where $g_{\gT}(s) = \pT(y=1|s) - \pT(y=-1|s)$.
\end{prop}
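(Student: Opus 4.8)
The plan is to reduce the claim to a pointwise computation in terms of the regression function $\eta(s) \coloneqq \pT(y=1\mid s)$, so that $\pT(y=-1\mid s) = 1-\eta(s)$ and $g_{\gT}(s) = \eta(s) - (1-\eta(s)) = 2\eta(s)-1 \in [-1,1]$. The key observation is that, since $g_{\gT}(s)$ lies in $[-1,1]$, the hinge loss is never clipped: for $y\in\{\pm1\}$ we have $y\,g_{\gT}(s)\in[-1,1]$, hence $\ell(g_{\gT}(s),y) = (1-y\,g_{\gT}(s))_+ = 1-y\,g_{\gT}(s)$. Taking the conditional expectation over $y\sim\pT(\cdot\mid s)$ then gives, for each $s$,
\begin{align*}
	\E_{y}\!\left[\ell(g_{\gT}(s),y)\mid s\right]
	&= \eta(s)\bigl(1-g_{\gT}(s)\bigr) + \bigl(1-\eta(s)\bigr)\bigl(1+g_{\gT}(s)\bigr)\\
	&= 1 - \bigl(2\eta(s)-1\bigr)^2 = 4\,\eta(s)\bigl(1-\eta(s)\bigr),
\end{align*}
and integrating over $s\sim\pT$ yields $\lclf(g_{\gT}) = 4\,\E_{s}\bigl[\eta(s)(1-\eta(s))\bigr]$.

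Next I would identify the Bayes error. Since the zero-one loss decomposes over $s$ and the optimal label to output at a given $s$ is the more likely one, the infimum defining $\BE(\gT)$ is attained (e.g.\ by any $g$ agreeing in sign with $2\eta(s)-1$) and equals $\BE(\gT) = \E_{s}\bigl[\min\{\eta(s),1-\eta(s)\}\bigr]$. It then remains to compare the two expressions pointwise via the elementary inequality $\eta(1-\eta) \le \min\{\eta,1-\eta\}$, valid for all $\eta\in[0,1]$: if $\eta\le 1/2$ it reads $1-\eta\le 1$, and if $\eta\ge 1/2$ it reads $\eta\le 1$. Multiplying by $4$ and taking expectations over $s$ gives
\[
\lclf(g_{\gT}) = 4\,\E_s\bigl[\eta(s)(1-\eta(s))\bigr] \le 4\,\E_s\bigl[\min\{\eta(s),1-\eta(s)\}\bigr] = 4\,\BE(\gT),
\]
which is the claim.

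There is no substantive obstacle here; the only points that need a little care are (i) checking that $g_{\gT}(s)\in[-1,1]$ so that the hinge loss is affine (not clipped) on the relevant range — this is exactly what makes the conditional risk collapse to $4\,\eta(1-\eta)$ — and (ii) writing $\BE(\gT)$ in the closed form $\E_s[\min\{\eta,1-\eta\}]$, independently of the particular predictor $g_{\gT}$, before invoking the pointwise bound. One may also note that the constant $4$ is tight: letting $\pT$ concentrate on inputs $s$ with $\eta(s)\to 0$ makes $\lclf(g_{\gT})/\BE(\gT)\to 4$.
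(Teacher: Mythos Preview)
Your proof is correct and follows essentially the same route as the paper's: both observe that $g_{\gT}(s)\in[-1,1]$ so the hinge loss is affine, compute the conditional risk as $4\eta(s)(1-\eta(s))$ (the paper writes this as $4p_{\min}(s)p_{\max}(s)$), identify $\BE(\gT)=\E_s[\min\{\eta,1-\eta\}]$, and finish with the pointwise bound $\eta(1-\eta)\le\min\{\eta,1-\eta\}$ (equivalently $p_{\max}\le1$). Your remark on the tightness of the constant $4$ is a nice addition not present in the paper.
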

Thus if a task is easily solvable, i.e. has small Bayes error, then it will be solvable by the predictor $g_{\gT}(s)$.
Since we argued above that sentence reformulation implies that $g_{\gT}(s)$ is a linear function of $\truep$, we can now show that $\gT$ is a {\em natural task} as formalized in \Defref{def:natural_task}.
\begin{prop}[Informal]
	\label{prop:tau_B}
	Task $\gT$ that can be reformulated as a sentence completion task (described above) is a $(\tau, B)$-natural task w.r.t. the hinge loss, with the follow parameters
	\begin{align*}
		\bm{\tau\le 4~\BE(\gT)} \text{ and } \bm{B=\alpha(\gW_{\text{\bf indicative}})^{-1}}
	\end{align*}
	Here $\BE(\gT)$ is the Bayes error of task $\gT$ and $\alpha(\gW_{\text{indicative}})$ is the total mass of the indicative words for the task.
\end{prop}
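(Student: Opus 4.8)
The plan is to read the conclusion off the derivation already carried out immediately before the statement, combined with \Propref{prop:bayes_error}. Recall that the sentence completion reformulation was formalized through three approximations: $\Pr(y=1\mid(\temps,w))\approx P(y=1\mid w)$ for indicative words $w\in\gW_{\text{indicative}}$; $\Pr(y=1\mid(\temps,w))\approx\pT(y=1\mid\temps)$ for irrelevant words; and $\ptrueparg{\temps}(\gW_{\text{indicative}})\approx\alpha(\gW_{\text{indicative}})$ uniformly over $s$. Plugging these into the law of total probability (as in \Secref{asubsec:lin_clf_pstars}) yields the identity $g_{\gT}(s):=\pT(y=1\mid s)-\pT(y=-1\mid s)=\vv_{\gT}^\top\truep$ with $\vv_{\gT}=\alpha(\gW_{\text{indicative}})^{-1}(\vv_1-\vv_{-1})$, where $\vv_b(w)=P(y=b\mid w)$ on $\gW_{\text{indicative}}$ and $\vv_b(w)=0$ otherwise; for this informal statement I would treat the three approximations as exact equalities.

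Given this, the first step is to bound $\tau$. Since the linear classifier $\vv_{\gT}$ applied to the feature map $\{\truep\}$ computes exactly $g_{\gT}$, we have $\lclf(\{\truep\},\vv_{\gT})=\lclf(g_{\gT})$, the hinge-loss risk of the real-valued predictor $g_{\gT}$ from \Eqref{eqn:clf_loss_predictor}. \Propref{prop:bayes_error} bounds this by $4~\BE(\gT)$, so $\lclf(\{\truep\},\vv_{\gT})\le 4~\BE(\gT)$, letting us take $\tau=4~\BE(\gT)$. The second step is to bound $B=\|\vv_{\gT}\|_\infty$. For $w\in\gW_{\text{indicative}}$ we have $\vv_1(w)+\vv_{-1}(w)=P(y=1\mid w)+P(y=-1\mid w)=1$, hence $|\vv_1(w)-\vv_{-1}(w)|=|2P(y=1\mid w)-1|\le 1$; off $\gW_{\text{indicative}}$ both coordinates vanish. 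Thus $\|\vv_1-\vv_{-1}\|_\infty\le 1$, so $\|\vv_{\gT}\|_\infty\le\alpha(\gW_{\text{indicative}})^{-1}=:B$. Combining, $\min_{\|\vv\|_\infty\le B}\lclf(\{\truep\},\vv)\le\lclf(\{\truep\},\vv_{\gT})\le\tau$, which is exactly the assertion that $\gT$ is $(\tau,B)$-natural with $\tau\le 4~\BE(\gT)$ and $B=\alpha(\gW_{\text{indicative}})^{-1}$.

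The main obstacle is not the (short) algebra but the informality of the three approximations. A fully rigorous version would introduce explicit slack parameters — e.g. $\sup_s|\ptrueparg{\temps}(\gW_{\text{indicative}})-\alpha(\gW_{\text{indicative}})|\le\delta$ together with comparable bounds on the two conditional-probability approximations — propagate them through the total-probability computation, and absorb the resulting additive error into $\tau$ (with a mild multiplicative distortion in $B$), giving something like $\tau\le 4~\BE(\gT)+\gO(\delta)$. Since the proposition is explicitly labeled informal, I would present the argument above under the idealized equalities and append a remark that the bound degrades gracefully when the approximations hold only up to error $\delta$, deferring any quantification of $\delta$ for concrete tasks and prompts to the experimental sections.
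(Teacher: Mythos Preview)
Your proposal is correct and follows essentially the same argument as the paper: bound $\|\vv_{\gT}\|_\infty$ by $\alpha(\gW_{\text{indicative}})^{-1}$ via $|P(y=1\mid w)-P(y=-1\mid w)|\le 1$, then use $\lclf(\{\truep\},\vv_{\gT})=\lclf(g_{\gT})\le 4~\BE(\gT)$ from \Propref{prop:bayes_error}. Your added paragraph on quantifying the approximation error via a slack $\delta$ is a reasonable elaboration that the paper does not include.
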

If the task $\gT$ can be reformulated as sentence completion, then $\gT$ is $(\tau, B)$-natural where
\begin{itemize}
	\item $\tau$ is small if the task is unambiguous, i.e. it has small Bayes error
	\item $B$ is small if the probability mass of the set of indicative words $\gW_{\text{indicative}}$ is large, i.e. the task depends on a large set of frequent words
\end{itemize}
Thus the upper bound in \Thmref{thm:robust_unconstrained_lm} is smaller if the task can be reformulated as sentence completion task with a large and frequent set of completions, and we can ever hope to solve it well (Bayes error is small).
The proofs for the above propositions are in \Secref{asubsec:sent_completion_natural_task}.

\newcommand{\Pproj}{P_{\Phi}}
\newcommand{\Pperp}{P^{\perp}_{\Phi}}
\newcommand{\bbeta}{{\bm \beta}}

\subsection{Nice properties of word embeddings $\Phi$}\label{asubsec:nice_word_embeddings}
We argue here that if the word embeddings $\Phi$ satisfy certain {\em nice properties}, then $(\tau, B)$-natural {\em tasks of interest} will be $(\tau',B')$-natural {\color{blue} w.r.t. $\Phi$}, where we will provide informal quantifications for the {\em nice properties} and {\em tasks of interest} that lead to a small value for $\tau'$ and $B'$.
The {\em nice property} will be related to $\Phi$ capturing the semantic meaning (synonym structure) of words and {\em tasks of interest} will be those that try to distinguish word completion (in the sentence completion reformulation) with very different meanings, i.e. tries to distinguish more coarse-grained semantic notions rather than very fine-grained ones.
Note that the results here are informal and qualitative, rather than quantitative.

Consider a task $\gT$ that is $(\tau,B)$-natural task and let $\vstar\in\R^V$ be the classifier such that $\lclf(\{\truep\},\vstar)\le \tau$ and $\|\vstar\|_{\infty}\le B$.
We want to find properties of $\Phi$ and $\vstar$ that will make $\gT$ to be $(\tau',B')$-natural {\color{blue} w.r.t. $\Phi$} such that $\tau'$ and $B'$ are not too large.\footnote{Note that the converse is trivially true, i.e. a $(\tau,B)$-natural task w.r.t. $\Phi$ is also $(\tau,B)$-natural.}

We will show that $\gT$ is $(\tau',B')$-natural w.r.t. $\Phi$ by finding a classifier $\vv$ such that $\vv=\Phi^\top\lambda\in\R^V$, $\|\vv\|_{\infty}\le B'$ and $\lclf(\{\truep\}, \vv)\le\tau'$.
First we define $\Pproj\coloneqq \Phi^{\dagger}\Phi \in\R^{V\times V}$ to be the projection matrix for the row-span of $\Phi$ and $\Pperp\coloneqq I_V - \Pproj$ to be orthogonal projection matrix.
We will show that the classifier $\vv=\Pproj\vstar$ suffices for our case, under some intuitive conditions on $\vstar$ and $\Phi$.

To compute $B'$, we first look at the $\ell_{\infty}$ norm of $\vv=\Pproj\vstar$
\begin{align*}
	B' = \|\vv\|_{\infty} = \|\Pproj\vstar\|_{\infty} = \|\vstar - \Pperp\vstar\|_{\infty} \le \|\vstar\|_{\infty} + \|\Pperp\vstar\|_{\infty} \le B + \|\Pperp\vstar\|_{2}
\end{align*}
To find the upper bound $\tau'$, we upper bound the classification loss of $\vv=\Pproj\vstar$. We first define the substitutability matrix $\subst_{p} = \ex_{s\sim p}\left[\truep{\truep}^\top\right]$, similar to the one in \Defref{defn:subst_matrix}. Then
\begin{align*}
	\lclf(\{\truep\}, \vv)
	&= \ex_{(s,y)\sim\pT} \left[\ell(\vv^\top\truep, y)\right] = \ex_{(s,y)\sim\pT} \left[\ell((\Pproj\vstar)^\top\truep, y)\right] \\
	&\le^{(a)} \ex_{(s,y)\sim\pT} \left[\ell(\vstar^\top\truep, y)\right] + \ex_{s\sim\pT}[|(\vstar - \Pproj\vstar)^\top\truep|]\\
	&= \lclf(\{\truep\}, \vstar) + \ex_{s\sim\pT}\left[|\vstar^\top\Pperp\truep|\right]\\
	&\le^{(b)} \tau + \sqrt{\ex_{s\sim\pT}\left[(\vstar^\top\Pperp\truep)^2\right]}
	= \tau + \sqrt{\ex_{s\sim\pT}\left[\vstar^\top\Pperp\truep{\truep}^\top\Pperp\vstar\right]}\\
	&=^{(c)} \tau + \sqrt{\vstar^\top\Pperp\subst_{\pT}\Pperp\vstar}
	\le^{(d)} \tau + \|\Pperp\vstar\|_{2}\sqrt{\left\|\Pperp\subst_{\pT}\Pperp\right\|_{2}}
\end{align*}
where $(a)$ follows from 1-Lipschitz property of $\ell$, $(b)$ from Jensen's inequality and that $\lclf(\{\truep\}, \vstar)\le \tau$, $(c)$ from the definition of substitutability matrix $\subst_{\pT}$ and $(d)$ by definition of spectral norm of a symmetric PSD matrix.

Thus we have shown that $\gT$ is $(\tau',B')$-natural w.r.t. $\Phi$, where
\begin{align}
	\tau' = \tau + \|\Pperp\vstar\|_{2}\sqrt{\left\|\Pperp\subst_{\pT}\Pperp\right\|_{2}}, ~B' = B + \|\Pperp\vstar\|_{2}\label{eqn:tau_B}
\end{align}
We will now show that if $\Phi$ captures the notion of synonyms, then $\left\|\Pperp\subst_{\pT}\Pperp\right\|_{2}$ will be small leading to $\tau'$ being small.
Furthermore we also shed some light on what it means for $\|\Pperp\vstar\|_{2}$ to be small, which will in turn make $B'$ small and $\tau'$ smaller.
We do so with the following arguments, 1) $\subst_{\pT}$ captures semantic meaning of words and thus its top eigen-directions will capture more dominant semantic concepts, 2) if $\Phi$ captures the ``top-$d$'' directions of meaning, i.e. the top-$d$ eigen-directions of $\subst_{\pT}$, then $\left\|\Pperp\subst_{\pT}\Pperp\right\|_{2} = \gO(1/d)$, 3) if additionally $\vstar$ cares about the ``top-$d$'' directions of meaning, i.e. top-$d$ eigen-directions of $\subst_{\pT}$ then $\|\Pperp\vstar\|_{2}$ will be small.
We expand on these points below

\begin{enumerate}[wide, labelwidth=!, labelindent=10pt]
	\item {\bf Substitutability matrix ($\subst_{\pT}$)} captures semantic meaning: We use a similar argument to the one in \Secref{subsec:quad} right after \Defref{defn:subst_matrix} that is based on distributional semantics \citep{harris1954distributional}.
	\cite{harris1954distributional} posits that meaning for elements (words) can be derived from the environments (contexts) in which they occur.
	Thus \cite{harris1954distributional} argues that words that occur in almost identical set of contexts have the same meaning, i.e. are synonyms.
	On the other hand, if two words share some contexts but not all, then they have different meanings and the amount of difference in meaning roughly corresponds to amount of difference in contexts.
	In our setting, the similarity of words $w$ and $w'$ can then be determined by the probabilities assigned to them by different contexts $s$.
	In particular, if $\truep(w) = \truep(w')$ for all or most $s\in\text{supp}(\pT)$, then $w$ and $w'$ have essentially the same meaning w.r.t. the distribution of contexts $\pT$ and the closer $[\truep(w)]_{s\in\text{supp}(\pT)}$ and $[\truep(w')]_{s\in\text{supp}(\pT)}$ are, the closer the meaning of $w$ and $w'$ are.
	For the substitutability matrix $\subst_{\pT} = \ex_{s\sim\pT}[\truep{\truep}^\top]\in\R^{V\times V}$, it is not hard to show that $\subst_{\pT}(w) = \subst_{\pT}(w')$ is equivalent to $\truep(w) = \truep(w') ~\forall s\sim\pT$, where $\subst_{\pT}(w)$ is the row of $\subst_{\pT}$ corresponding to word $w$.
	To show this, we can define $\bbeta_w\in\R^{|\text{supp}(\pT)|}$ to be an embedding of $w$ that looks like $\bbeta_w = [\truep(w)\sqrt{\pT(s)}]_{s\in\text{supp}(\pT)}$.
	It is easy to see that $\bbeta_{w_1}^\top\bbeta_{w_2} = \ex_{s\sim\pT} \left[\truep(w_1)\truep(w_2)\right] = \subst_{\pT}(w_1,w_2)$.
	Thus $\bbeta_{w} = \bbeta_{w'} \implies \subst_{\pT}(w) = \subst_{\pT}(w')$ is straightforward to see.
	For the converse,
	\begin{align}
		\subst_{\pT}(w) = \subst_{\pT}(w')
		&\implies \subst_{\pT}(w,w) = \subst_{\pT}(w',w) = \subst_{\pT}(w,w') = \subst_{\pT}(w', w')\\
		&\implies \bbeta_{w}^\top\bbeta_{w} = \bbeta_{w}^\top\bbeta_{w'} = \bbeta_{w'}^\top\bbeta_{w'}
		\implies \bbeta_{w} = \bbeta_{w'}
	\end{align}

	Thus $\subst_{\pT}$ indeed does capture the synonyms structure between words, and the top eigen-directions of it capture the most significant ``semantic meaning'' directions.

	\item {\bf $\Phi$ has nice properties}: if $\Phi$ roughly respects this synonym structure by aligning with the top-$d$ eigen-directions of $\subst_{\pT}$, we have
	\begin{align}
		\left\|\Pperp\subst_{\pT}\Pperp\right\|_{2} 
		&\le \lambda_{d+1}(\subst_{\pT}) \le \frac{1}{d+1}\sum_{i=1}^{d+1} \lambda_i(\subst_{\pT}) \le \frac{1}{d+1}\tr(\subst_{\pT})\\
		&\le \frac{1}{d+1}\ex_{s\sim\pT}\tr(\truep{\truep}^\top)
		\le \frac{1}{d+1}
	\end{align}
	From \Eqref{eqn:tau_B}, we then have $\tau'\le\tau + \frac{\|\Pperp\vstar\|_2}{\sqrt{d}}$

	\item {\bf Tasks of interest}: It is more likely for a classifier $\vstar$ to separate words with big differences in meaning rather than small differences. For e.g., it is more likely for a task to separate word completions ``good'' and ``bad'' rather than ``good'' and ``nice''.
	Since top eigen-directions of $\subst_{\pT}$ capture more dominant semantic meanings, this could correspond to $\vstar$ aligning with the top eigen-directions of $\subst_{\pT}$.
	In combination with the above property about $\Phi$, this could suggest that $\|\Pperp\vstar\|_2$ is small, thus leading to $\tau'$ and $B'$ being small.
\end{enumerate}

Note that they above arguments are informal and qualitative, and we leave exploring desirable properties of $\Phi$ more formally to future work.
 

\subsection{Proofs for \Secref{asubsec:sent_completion_natural_task}}
\label{asubsec:proofs_natural_tasks}

\begin{proof}[\Propref{prop:bayes_error}]
	Let $p_{b}(s) = \pT(y=b|s)$ for $b\in\{\pm 1\}$, $p_{min}(s) = \min_{b\in\{\pm1\}} p_{b}(s)$, $p_{max}(s) = \max_{b\in\{\pm1\}} p_{b}(s)$ and $g^*(s) = \argmax_{b\in\{\pm1\}} p_{b}(s)$ denote the Bayes optimal predictor. We first notice that there is a simple well-known closed form expression for the Bayes risk
	\begin{align*}
		\BE(\gT)
		&= \ex_{(s,y)\sim\pT}\left[\bm{1}\left\{g^*(s)\neq y\right\}\right]\\
		&= \ex_{(s,y)\sim\pT}\left[\bm{1}\left\{\argmax_{b\in\{\pm1\}} p_{b}(s)\neq y\right\}\right]
		= \ex_{s\sim\pT}\left[p_{min}(s)\right]
	\end{align*}
\end{proof}
We now analyze the hinge loss of the predictor $g_{\pT}$ defined in \Eqref{eqn:g_T}.
Note that since $g_{\pT}(s)\le1$, the hinge loss $\ell(g_{\pT}(s), y) = (1 - y g_{\pT}(s))_+ = 1 - y g_{\pT}(s)$ for every $s,y$.
Thus the total loss is
\begin{align*}
	g_{\pT}(s)
	&= \ex_{(s,y)\sim\pT} \left[(1 - y g_{\pT}(s))_+\right]
	= \ex_{(s,y)\sim\pT} \left[(1 - y g_{\pT}(s))\right]\\
	&=^{(a)} \ex_{s\sim\pT} \left[p_{1}(s)\left(1 - g_{\pT}(s)\right) + p_{-1}(s)\left(1 + g_{\pT}(s)\right)\right]
	= \ex_{s\sim\pT} \left[1 - (p_{1}(s) - p_{-1}(s))g_{\pT}(s)\right]\\
	&=^{(b)} \ex_{s\sim\pT} \left[1 - (p_{1}(s) - p_{-1}(s))^2\right]
	= \ex_{s\sim\pT} \left[(p_{1}(s) + p_{-1}(s))^2 - (p_{1}(s) - p_{-1}(s))^2\right]\\
	&= \ex_{s\sim\pT} \left[4p_{1}(s)p_{-1}(s)\right]
	= 4\ex_{s\sim\pT} \left[p_{min}(s)p_{max}(s)\right]\\
	&\le^{(c)} 4\ex_{s\sim\pT} \left[p_{min}(s)\right]
	= 4~\BE(\gT)
\end{align*}
where $(a)$ follows by splitting the expectation over $y|s$, $(b)$ follows from the definition of $g_{\pT}(s)$ in \Eqref{eqn:g_T} and $(c)$ follows from $p_{max}(s) \le 1$.
This completes the proof.

\begin{proof}[\Propref{prop:tau_B}]
	Let $B=\alpha(\gW_{\text{indicative}})^{-1}$.
	We first note the following using the definition of $\vv$ from \Eqref{eqn:diff_prob_lin}.
	\begin{align}
		\|\vv_{\gT}\|_{\infty} 
		&= \alpha(\gW_{\text{indicative}})^{-1}\max_{w\in\gW} |\vv_{1}(w) - \vv_{-1}(w)|
		=B\max_{w\in\gW} |P(y=1|w) - P(y=-1|w)|
		\le B
		\label{eqn:infty_norm_calc}
	\end{align}
	To find the value of $\tau$ that makes the task $(\tau, B)$-natural (\Defref{def:natural_task}), we observe the following
	\begin{align*}
		\min_{\vv\in\R^V,\|\vv\|\le B} \lclf(\{\truep\},\vv)
		&=^{(a)} \lclf(\{\truep\},\vv_{\gT})
		= \ex_{(s,y)\sim\pT} [\ell(\vv_{\gT}^\top \truep, y)]\\
		&=^{(b)} \ex_{(s,y)\sim\pT} [\ell(g_{\gT}(s), y)] = \lclf(g_{\gT})\\
		&\le^{(c)} 4~\BE(\gT)
	\end{align*}
	where $(a)$ follows from the calculation in \Eqref{eqn:infty_norm_calc}, $(b)$ follows from \Eqref{eqn:diff_prob_lin} and $(c)$ follows from \Propref{prop:bayes_error}.
\end{proof}

\section{Proofs}\label{asec:proofs}


\subsection{Proof sketch}\label{asubsec:proof_sketch}
We first present a sketch of the arguments that help us show our main results, \twothmrefs{thm:robust_unconstrained_lm}{thm:robust_softmax_lm}. The subsections after the next one contain the full proofs for strengthened versions of these results.

\subsubsection{Proof sketch for arbitrary language models: \Thmref{thm:robust_unconstrained_lm}}

Here we want to show guarantees for features $\{\lmp\}$ on a $(\tau, B)$-natural task $\gT$.
From the definition of natural tasks, we know
\begin{align}
	\exists\vstar\in\R^V, \|\vstar\|_{\infty}\le B \text{ s.t. }\lclf(\{\truep\}, \vstar) \le \tau
\end{align}
We wish to upper bound the classification error $\lclf(\{\lmp\})$ and do so using the following sequence of inequalities.
\begin{align}
	&\lclf(\{\lmp\}) - \tau
	= \inf_{\vv\in\R^V} \lclf(\{\lmp\}, \vv) - \tau
	\le \lclf(\{\lmp\}, \vstar) - \lclf(\{\truep\}, \vstar)\nonumber\\
	&= \frac{\lclf(\{\lmp\}, \vstar) - \lclf(\{\truep\}, \vstar)}{\sqrt{\ex_{s\sim\pT}[(\vstar^\top(\lmp - \truep))^2]}}
	\cdot \sqrt{\frac{\ex_{s\sim\pT}[(\vstar^\top(\lmp - \truep))^2]}{\ex_{s\sim\pstar}[(\vstar^\top(\lmp - \truep))^2]}}
	\cdot \sqrt{\ex_{s\sim\pstar}[(\vstar^\top(\lmp - \truep))^2]}\nonumber\\
	&= \underbrace{\frac{\lclf(\{\lmp\}, \vstar) - \lclf(\{\truep\}, \vstar)}{\sqrt{\vstar^\top\Sigma_{\pT}(\Delta_{\{\lmp\}})\vstar}}}_{\substack{{\Large\bm{\bm{\alpha_1}(\vstar)}}\\\text{Classification loss $\rightarrow$ error covariance on $\pT$}\\\text{Use Lipschitzness of $\ell$ and Jensen's inequality}}}
	\cdot \underbrace{\sqrt{\frac{\vstar^\top\Sigma_{\pT}(\Delta_{\{\lmp\}})\vstar}{\vstar^\top\Sigma_{\pstar}(\Delta_{\{\lmp\}})\vstar}}}_{\substack{{\Large\bm{\bm{\alpha_2}(\vstar)}}\\\text{Error covariance from $\pT\rightarrow\pstar$}\\\text{Use transferability coefficient}}}
	\cdot \underbrace{\sqrt{\ex_{s\sim\pstar}[(\vstar^\top(\lmp - \truep))^2]}}_{\substack{{\Large\bm{\bm{\alpha_3}(\vstar)}}\\\text{Error covariance $\rightarrow$ cross-entropy loss}\\\text{Use (modified) Pinsker's inequality}}}
	\label{eqn:decomposition}
\end{align}
where $\Sigma_{p}(g) \coloneqq \ex_{s\sim p}[g(s)g(s)^\top]$ is the uncentered covariance of $g$ w.r.t. distribution $p\in\Delta_{\gS}$, as defined in \Secref{subsec:dist_shift}.
We upper bound $\lclf(\{\lmp\}) - \tau$ by upper bounding each of $\bm{\alpha_1}(\vstar), \bm{\alpha_2}(\vstar), \bm{\alpha_3}(\vstar)$ as follows
\begin{itemize}
	\item \textbf{Classification loss $\rightarrow$ prediction error covariance:} $\bm{\alpha_1}(\vstar)$ is upper bounded by using Lipschitzness of the loss $\ell$ used in the definition of $\lclf$, e.g. hinge loss or logistic loss, and then followed by an application of Jensen's inequality
	\begin{center}
		\Lemref{lem:lclf_upper_bound} $\implies$ $\bm{\alpha_1}(\vv) \le 1$ for all $\vv\in\R^V$
	\end{center}
	\item \textbf{Error covariance from $\pT\rightarrow\pstar$:} $\bm{\alpha_2}(\vstar)$ handles the mismatch in distributions $\pT$ and $\pstar$ over which the classification loss and cross-entropy losses are measured respectively. It is upper bounded by the transferability coefficient
	\begin{center}
		\Lemref{lem:transfer} and \Lemref{lem:gamma_bound} $\implies$ $\bm{\alpha_2}(\vv) \le \sqrt{\gamma(\pT)^{-1}}$ for all $\vv\in\R^V$
	\end{center}
	\item \textbf{Error covariance $\rightarrow$ cross-entropy loss (arbitrary language models):} This is arguably the most important step that connects the error in prediction to the cross-entropy loss. For the arbitrary language model case, this is proved using Pinsker's inequality and taking expectation over the distribution $\pstar$.
	\begin{center}
		\Lemref{lem:pinskers} $\implies$ $\bm{\alpha_3}(\vv) \le \sqrt{2\|\vv\|^2_{\infty}(\lxent(\{\lmp\}) - \lxent(\truep))}$ for all $\vv\in\R^V$
	\end{center}
\end{itemize}

\subsubsection{Proof sketch for softmax language models: \Thmref{thm:robust_softmax_lm}}

Here we want to show guarantees for features $\phip{\embfn} = \{\Phi\smp\}$ on a $(\tau, B)$-natural task $\gT$ w.r.t $\Phi$.
From the definition of natural tasks w.r.t. $\Phi$, we know
\begin{align}
	\exists{\color{blue}\vstar=\Phi^\top\lambda}\in\R^V, \|\vstar\|_{\infty}\le B \text{ s.t. }\lclf(\{\truep\}, \vstar) \le \tau
\end{align}
Note that the difference here is that $\vstar$ is in the span of $\Phi$ rather than an arbitrary vector in $\R^V$.
We wish to upper bound the classification error $\lclf(\{\Phi\smp\})$ and do so using the following sequence of inequalities.
\begin{align}
	\lclf(\{\Phi\smp\}) - \tau
	&=\inf_{\lambda\in\R^d} \lclf(\{\Phi\smp\}, \lambda) - \tau\nonumber\\
	&= \inf_{{\color{blue}\vv=\Phi^\top\lambda}\in\R^V} \lclf(\{\smp\}, \vv) - \tau\nonumber\\
	&\le \lclf(\{\smp\}, \vstar) - \lclf(\{\truep\}, \vstar)\nonumber\\
	&\le\bm{\alpha_1}(\vstar)\cdot\bm{\alpha_2}(\vstar)\cdot\bm{\alpha_3}(\vstar)
\end{align}
where the first inequality follows because $\vstar$ is in the span of $\Phi$ and second inequality follows from \Eqref{eqn:decomposition}.
The bounds for $\bm{\alpha_1}(\vstar)$ and $\bm{\alpha_2}(\vstar)$ are the same as arbitrary language models.
The main difference is the bound on $\bm{\alpha_3}(\vstar)$ which will be a stronger bound for softmax models.
\begin{itemize}
	\item \textbf{Error covariance $\rightarrow$ cross-entropy loss (softmax language models):} For softmax language models, we need to prove a modified version of Pinsker's inequality specifically for softmax models.
	This version will show a bound that only works when $\vstar$ is in the span of $\Phi$ and if the evaluated model $\smp$ computes softmax using $\Phi$ as well.
	\begin{center}
		{\color{blue}\Lemref{lem:softmax_pinskers}} $\implies$ $\bm{\alpha_3}(\vv) \le \sqrt{2\|\vv\|^2_{\infty}(\lxent(\{\smp\}) - {\color{blue}\inf\limits_{\embfnopt}(\{\smoptp\}))}}$ $~\forall{\color{blue}\vv=\Phi^\top\lambda}\in\R^V$
	\end{center}
\end{itemize}
Thus we suffer the suboptimality of the language model $\{\smp\}$ w.r.t. the best softmax model $\{\smoptp\}$ rather than the absolute best language model $\{\truep\}$.
This is done using the softmax variant of Pinsker's inequality in \Lemref{lem:softmax_pinskers}.
We now present the detailed proofs for all results.

\newcommand{\ptrueq}{{q^*}}
\newcommand{\trueq}{{\bm{q}^*}}

\subsection{Proofs for arbitrary language models}\label{asubsec:unconstrained_lm}
\begin{reptheorem}{thm:robust_unconstrained_lm_gen}[Strengthened \Thmref{thm:robust_unconstrained_lm}]
	Let $\{\lmp\}$ be a language model that is $\epsilon$-optimal, i.e. $\lxent(\{\lmp\}) - \lxent^* \le \epsilon$ for some $\epsilon>0$.
	For a classification task $\gT$ that is $(\tau, B)$-natural, we have
	\begin{align*}
		\lclf\left(\{\lmp\}\right) \le \tau + \sqrt{\frac{2B^2\epsilon}{\gamma(\pT; \{\lmp\})}}
	\end{align*}
	For a classification task $\gT$ that is $(\tau, B)$-natural w.r.t. $\Phi$, we have
	\begin{align*}
		\lclf\left(\{\lmp\}\right) \le \lclf\left(\{\Phi\lmp\}\right) \le \tau + \sqrt{\frac{2B^2\epsilon}{\gamma_{\Phi}(\pT; \{\lmp\})}}
	\end{align*}
\end{reptheorem}
\begin{proof}
	The proof has two main steps that we summarize by the following two lemmas.
	The first one upper bounds the downstream performance on natural tasks with the covariance of errors.
	\begin{replemma}{lem:lclf_upper_bound_main}
		For a language model $\{\lmp\}$, if $\gT$ is $(\tau,B)$-natural,
		\begin{align*}
			\lclf(\{\lmp\}) \le \tau + \sup\limits_{\substack{\vv\in\R^V,\|\vv\|_{\infty}\le B}} \sqrt{\frac{\vv^\top\Sigma_{\pstar}(\Delta_{\{\lmp\}})\vv}{\gamma(\pT; \{\lmp\})}}
		\end{align*}
		If $\gT$ is $(\tau,B)$-natural w.r.t. $\Phi\in\R^{d\times V}$,
		\begin{align*}
			\lclf(\{\Phi\lmp\}) \le \tau + \sup\limits_{\substack{\vv=\Phi^\top\lambda\in\R^{V},\\\|\vv\|_{\infty}\le B}} \sqrt{\frac{\vv^\top\Sigma_{\pstar}(\Delta_{\{\lmp\}})\vv}{\gamma_{\Phi}(\pT; \{\lmp\})}}
		\end{align*}
		where $\gamma(\cdot)$ and $\gamma_{\Phi}(\cdot)$ are from \Defref{def:gamma}.
	\end{replemma}
	
	The second lemma upper bounds the covariance of error with the suboptimality of the language model.
	\begin{replemma}{lem:error_cov_upper_bound}
		For a language model $\{\lmp\}$ and classifier $\vv\in\R^V$,
		\begin{align*}
			\vv^\top\Sigma_{\pstar}(\Delta_{\{\lmp\}})\vv
			\le 2\|\vv\|^2_{\infty} \left(\lxent(\{\lmp\}) - \lxent^*\right)
		\end{align*}
		where $\Sigma_{\pstar}(\Delta_{\{\lmp\}}) = \ex_{s\sim\pstar} \left[(\lmp-\truep)(\lmp-\truep)^\top\right]$ as defined in \Secref{asec:gamma}.
	\end{replemma}
	We prove both the above lemmas in \Secref{asubsec:unconstrained_lm}.
	We first use these to prove the main result.

	Combining the two lemmas, we get the following inequality
	\begin{align*}
		\lclf(\{\lmp\})
		&\le^{(a)} \tau + \sup\limits_{\substack{\vv\in\R^V,\|\vv\|_{\infty}\le B}} \sqrt{\frac{\vv^\top\Sigma_{\pstar}(\Delta_{\{\lmp\}})\vv}{\gamma(\pT; \{\lmp\})}}\\
		&\le^{(b)} \tau + \sup\limits_{\substack{\vv\in\R^V,\|\vv\|_{\infty}\le B}} \sqrt{\frac{2\|\vv\|^2_{\infty}\left(\lxent(\{\lmp\}) - \lxent^*\right)}{\gamma(\pT; \{\lmp\})}}\\
		&\le^{(c)}  \tau + \sqrt{\frac{2B^2\epsilon}{\gamma(\pT; \{\lmp\})}}
	\end{align*}
	where $(a)$ uses first part of \Lemref{lem:lclf_upper_bound_main}, $(b)$ uses \Lemref{lem:error_cov_upper_bound} and $(c)$ uses the $\epsilon$-optimality of $\{\lmp\}$.
	This proves the first part of the result.
	The second part can also be proved similarly.
	\begin{align*}
		\lclf(\{\Phi\lmp\})
		&\le^{(a)} \tau + \sup\limits_{\substack{\vv=\Phi^\top\lambda\in\R^{V},\\\|\vv\|_{\infty}\le B}} \sqrt{\frac{\vv^\top\Sigma_{\pstar}(\Delta_{\{\lmp\}})\vv}{\gamma_{\Phi}(\pT; \{\lmp\})}}\\
		&\le^{(b)} \tau + \sup\limits_{\substack{\vv=\Phi^\top\lambda\in\R^{V},\\\|\vv\|_{\infty}\le B}} \sqrt{\frac{2\|\vv\|^2_{\infty}\left(\lxent(\{\lmp\}) - \lxent^*\right)}{\gamma_{\Phi}(\pT; \{\lmp\})}}\\
		&\le \tau + \sup\limits_{\substack{\vv\in\R^V,\|\vv\|_{\infty}\le B}} \sqrt{\frac{2\|\vv\|^2_{\infty}\left(\lxent(\{\lmp\}) - \lxent^*\right)}{\gamma_{\Phi}(\pT; \{\lmp\})}}
		\le^{(c)}  \tau + \sqrt{\frac{2B^2\epsilon}{\gamma_{\Phi}(\pT; \{\lmp\})}}
	\end{align*}
	where $(a)$ uses second part of \Lemref{lem:lclf_upper_bound_main}, $(b)$ uses \Lemref{lem:error_cov_upper_bound} and $(c)$ uses the $\epsilon$-optimality of $\{\lmp\}$.
	The proof of the lemmas can be found in \Secref{asubsec:unconstrained_lm}.
\end{proof}


\begin{reptheorem}{thm:robust_unconstrained_lm}
	Let $\{\lmp\}$ be a language model that is $\epsilon$-optimal, i.e. $\lxent(\{\lmp\}) - \lxent^* \le \epsilon$, for some $\epsilon>0$.
	For a classification task $\gT$ that is $(\tau, B)$-natural, we have
	\begin{align*}
		\lclf\left(\{\lmp\}\right) \le \tau + \sqrt{\frac{2B^2\epsilon}{\gamma(\pT)}}
	\end{align*}
\end{reptheorem}
\begin{proof}
	This follows from the first part of \Thmref{thm:robust_unconstrained_lm_gen} if we can also show that $\gamma(\pT; \{\lmp\})^{-1} \le \gamma(\pT)^{-1}$.
	For that we use the following lemma that we prove in \Secref{asubsec:unconstrained_lm}.
	\begin{replemma}{lem:gamma_bound}
		For any $g:\gS\rightarrow\R^D$ and $\pT\in\Delta_{\gS}$, we have $\|\Sigma_{\pstar}(g)^{-\half} \Sigma_{\pT}(g) \Sigma_{\pstar}(g)^{-\half}\|_2 \le \gamma(\pT)^{-1}$
	\end{replemma}
	Instantiating this for $g=\Delta_{\{\lmp\}}$ and using \Eqref{eqn:gamma_lmp}, we get $\gamma(\pT; \{\lmp\})^{-1} \le \gamma(\pT)^{-1}$, which completes the proof.
\end{proof}

\subsection{Proofs for softmax language models}\label{asubsec:sofmax_lm}

\begin{reptheorem}{thm:robust_softmax_lm_gen}[Strengthened \Thmref{thm:robust_softmax_lm}]
	For a fixed $\Phi$, let $\embfn$ be features from an $\epsilon$-optimal $d$-dimensional softmax language model, i.e. $\lxent(\embfn, \Phi) - \lxent^*(\Phi) \le \epsilon$, where $\lxent^*(\Phi)$ is defined in \Eqref{eqn:l_star_xent}.
	For a classification task $\gT$ that is $(\tau, B)$-natural w.r.t. $\Phi$, we have
	\begin{align*}
		\lclf\left(\{\smp\}\right) \le \lclf(\phip{\embfn}) \le \tau + \sqrt{\frac{2B^2\epsilon}{\gamma(\pT; \phip{\embfn})}}
	\end{align*}
\end{reptheorem}
\begin{proof}
	Instantiating \Lemref{lem:lclf_upper_bound_main} for $\lmp = \smp$, we get
	\begin{align*}
		\lclf(\{\Phi \smp\})
		&\le \tau + \sup\limits_{\substack{\vv=\Phi^\top\lambda\in\R^{V},\\\|\vv\|_{\infty}\le B}} \sqrt{\frac{\vv^\top\Sigma_{\pstar}(\Delta_{\{\smp\}})\vv}{\gamma_{\Phi}(\pT; \{\smp\})}}\\
		&=^{(a)} \tau + \sqrt{\frac{\sup\limits_{\substack{\|\Phi^\top\lambda\|_{\infty}\le B}} \lambda^\top\Phi\Sigma_{\pstar}(\Delta_{\{\smp\}})\Phi^\top\lambda}{\gamma(\pT; \phip{\embfn})}}\\
		&= \tau + \sqrt{\frac{\sup\limits_{\substack{\|\Phi^\top\lambda\|_{\infty}\le B}} \lambda^\top\Sigma_{\pstar}(\Phi\Delta_{\{\smp\}})\lambda}{\gamma(\pT; \phip{\embfn})}}
	\end{align*}
	where $(a)$ follows from \Eqref{eqn:gamma_phi_embfn} that says $\gamma(\pT; \phip{\embfn}) = \gamma_{\Phi}(\pT; \{\smp\})$.
	We now prove a similar result for the second term in the following lemma that we prove in \Secref{asubsec:unconstrained_lm}.
	\begin{replemma}{lem:error_phi_cov_upper_bound}
		For a fixed $\Phi$ and a softmax language model with features $\embfn$ and $\lambda\in\R^d$,
		\begin{align*}
			\lambda^\top\Sigma_{\pstar}(\Phi\Delta_{\{\smp\}})\lambda
		\le 2\|\Phi^\top\lambda\|^2_{\infty} \left(\lxent(\embfn, \Phi) - \lxent^*(\Phi)\right)
		\end{align*}
		where $\Sigma_{\pstar}(\Phi\Delta_{\{\smp\}}) = \ex_{s\sim\pstar}\left[(\Phi \smp - \Phi\truep)(\Phi \smp - \Phi\truep)^\top\right]$ as defined in \Secref{asec:gamma}.
	\end{replemma}
	Using \Lemref{lem:error_phi_cov_upper_bound} directly gives us $\lclf(\phip{\embfn}) = \lclf(\{\Phi \smp\}) \le \tau + \sqrt{\frac{B^2\left(\lxent(\embfn, \Phi) - \lxent^*(\Phi)\right)}{\gamma_{\Phi}(\pT; \phip{\embfn})}}$, and the $\epsilon$-optimality almost completes the proof.
	The only thing remaining to show is that $\lclf(\{\smp\}) \le \lclf(\phip{\embfn})$ which follows from the following sequence.
	\begin{align*}
		\lclf(\{\smp\})
		&= \inf_{\substack{\vv\in\R^{V}, b\in\R}} \lclf(\{\smp\}, \vv)
		\le \inf_{\substack{\Phi^\top\lambda\in\R^{V}, b\in\R}} \lclf(\{\smp\}, (\Phi^\top\lambda,b))\\
		&= \inf_{\substack{\lambda\in\R^{d}, b\in\R}} \lclf(\{\Phi \smp\}, (\lambda,b)) = \lclf(\phip{\embfn})
	\end{align*}
 \end{proof}

\begin{reptheorem}{thm:robust_softmax_lm}
	For a fixed $\Phi$, let $\embfn$ be features from an $\epsilon$-optimal $d$-dimensional softmax language model, i.e. $\lxent(\embfn, \Phi) - \lxent^*(\Phi) \le \epsilon$, where $\lxent^*(\Phi)$ is defined in \Eqref{eqn:l_star_xent}.
	For a classification task $\gT$ that is $(\tau, B)$-natural w.r.t. $\Phi$, we have
	\begin{align*}
		\lclf\left(\{\smp\}\right) \le \lclf(\phip{\embfn}) \le \tau + \sqrt{\frac{2B^2\epsilon}{\gamma(\pT)}}
	\end{align*}
\end{reptheorem}
\begin{proof}
	This result follows directly from \Thmref{thm:robust_softmax_lm_gen}, if we can also show that $\gamma(\pT; \phip{\embfn})^{-1} \le \gamma(\pT)^{-1}$ just like in the proof of \Thmref{thm:robust_unconstrained_lm}.
	For that we again use \Lemref{lem:gamma_bound} with $g=\Phi\Delta_{\{\smp\}}$ and \Eqref{eqn:gamma_phi_embfn} and this completes the proof.
\end{proof}

\subsection{Proofs for \Secref{subsec:linear}}

We first show why \Asmpref{asmp:partition_quadratic} is approximately true when word embeddings are gaussian like.
\begin{lemma}\label{lem:assumption}
	Suppose word embeddings $\phi_w$ are independent samples from the distribution $\gN(\mu, \Sigma)$. Then for any $\theta\in\R^d$ such that $\lambda^2 = \theta^\top\Sigma\theta = O(1) $ we have that $|\log(Z_\theta) - \half\theta^\top \Sigma\theta - \theta^\top\mu - \log(V)| \le \epsilon$ with probability $1-\delta$ for $\epsilon = \tilde{O}\left(\frac{e^{\lambda^2}}{\sqrt{V}}\right)$ and $\delta = 1-\exp(-\Omega(\log^2(V)))$.
\end{lemma}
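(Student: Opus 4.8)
Write $Z_\theta = \sum_{w\in\gW} X_w$ with $X_w := e^{\theta^\top\phi_w}$. Since $\phi_w\sim\gN(\mu,\Sigma)$, the scalar $\theta^\top\phi_w$ is $\gN(\theta^\top\mu,\lambda^2)$ with $\lambda^2=\theta^\top\Sigma\theta$, so each $X_w$ is lognormal with mean $\mu_X:=\E[X_w]=e^{\theta^\top\mu+\lambda^2/2}$, and hence $\E[Z_\theta]=V\mu_X$. Taking logs, $\log\E[Z_\theta]=\log V+\theta^\top\mu+\half\lambda^2=\log V+\theta^\top\mu+\half\theta^\top\Sigma\theta$, which is exactly the target quantity. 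So it suffices to prove a \emph{multiplicative} concentration of $Z_\theta$ around its mean: if $|Z_\theta/\E[Z_\theta]-1|\le\eta$ with $\eta\le\half$ then $|\log Z_\theta-\log\E[Z_\theta]|\le 2\eta$, and I will arrange $\eta=\tilde O(e^{\lambda^2}/\sqrt V)$ with the stated quasi-polynomially small failure probability.

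The obstacle is that the summands $X_w$ are heavy-tailed (a lognormal is not sub-exponential), so Bernstein-type bounds do not apply to $Z_\theta$ directly. I would handle this by truncation. Fix a small constant $c>0$ and set $T:=e^{\theta^\top\mu+c\lambda\log V}$. By the Gaussian tail bound, $\Pr[X_w>T]=\Pr[\gN(0,1)>c\log V]\le e^{-c^2\log^2 V/2}$, so a union bound over the $V$ words gives $\Pr[\gE^c]\le V e^{-c^2\log^2 V/2}=\exp(-\Omega(\log^2 V))$ for the event $\gE:=\{X_w\le T\ \forall w\}$. On $\gE$ we have $Z_\theta=\tilde Z:=\sum_w\tilde X_w$ where $\tilde X_w:=X_w\,\mathbf{1}\{X_w\le T\}\in[0,T]$ are i.i.d.\ and bounded.

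It then remains to apply Bernstein's inequality to $\tilde Z$, which requires two routine estimates. First, truncation barely perturbs the mean: the integrand $y\mapsto e^y$ against the $\gN(\theta^\top\mu,\lambda^2)$ density that defines $\E[X_w]$ is maximized at $y=\theta^\top\mu+\lambda^2$, far below $\log T=\theta^\top\mu+c\lambda\log V$ since $\lambda=O(1)$; a completing-the-square computation gives $0\le\mu_X-\E[\tilde X_w]=\E[X_w\mathbf{1}\{X_w>T\}]=\mu_X\Pr[\gN(0,1)>c\log V-\lambda]\le\mu_X\exp(-\Omega(\log^2V))$, hence $\E[\tilde Z]=V\mu_X(1-\exp(-\Omega(\log^2 V)))$. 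Second, $\Var(\tilde X_w)\le\E[X_w^2]=\mu_X^2 e^{\lambda^2}$, so $\sigma^2:=\Var(\tilde Z)\le V\mu_X^2 e^{\lambda^2}$. Bernstein then gives $\Pr[|\tilde Z-\E\tilde Z|>t]\le 2\exp\bigl(-(t^2/2)/(\sigma^2+Tt/3)\bigr)$; choosing $t=c'\sqrt{V}\,\mu_X e^{\lambda^2/2}\log V$ for a suitable constant $c'$, and using $c<\half$ so that $Tt\lesssim\sigma^2$ (the sub-Gaussian term dominates), makes this probability $\le 2\exp(-\Omega(\log^2 V))$ while $t/\E[Z_\theta]=\tilde O(e^{\lambda^2/2}/\sqrt V)$, which is within the claimed $\tilde O(e^{\lambda^2}/\sqrt V)$. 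A final union bound over $\gE^c$, the negligible mean shift, and the Bernstein event yields $|Z_\theta/\E[Z_\theta]-1|=\tilde O(e^{\lambda^2}/\sqrt V)$ with probability $1-\exp(-\Omega(\log^2 V))$, and the log-comparison from the first paragraph finishes the proof.

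\textbf{Main obstacle.} The one genuinely delicate point is the heavy tail of the lognormal summands: one must truncate at a level $T=e^{\theta^\top\mu+c\lambda\log V}$ chosen so that \emph{simultaneously} (i) no $X_w$ exceeds $T$ except with probability $\exp(-\Omega(\log^2 V))$ and (ii) the truncation distorts $\E[X_w]$ by only a $\exp(-\Omega(\log^2 V))$ relative amount — and then check that with this $T$ the bounded-variable Bernstein bound is still governed by the variance term rather than the range term. Everything else (the MGF identity for the mean, the final log comparison) is elementary.
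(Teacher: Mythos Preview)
Your proposal is correct and follows essentially the same route as the paper: reduce to concentrating $Z_\theta=\sum_w e^{\theta^\top\phi_w}$ around its mean $Ve^{\theta^\top\mu+\lambda^2/2}$, observe the lognormal summands are too heavy-tailed for a direct Bernstein bound, truncate at a threshold of the form $e^{\theta^\top\mu+c\lambda\log V}$ (the paper, working in the centered case, conditions on $\theta^\top\phi_w\le\tfrac12\lambda\log V$), and then apply Bernstein to the bounded variables. The only minor slip is that your stated condition ``$c<\tfrac12$'' for the variance term to dominate should really read $c\lambda<\tfrac12$, i.e.\ $c$ small enough relative to the $O(1)$ bound on $\lambda$; otherwise your argument is both correct and more detailed than the paper's own sketch, which simply defers to \cite{arora2016latent} at this step.
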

\begin{proof}
	We first note that $\log(Z_\theta) = \log\left(\sum_{w}e^{\theta^\top\phi_w}\right) = \theta^\top\mu + \log\left(\sum_{w}e^{\theta^\top(\phi_w-\mu)}\right)$, thus we can simply deal with the case where $\phi_w$ are sampled from $\gN(0, \Sigma)$.
	Furthermore the only random variable of interest is $X_w=\theta^\top\phi_w$ which is a gaussian variable $\gN(0, \theta^\top\Sigma\theta) = \gN(0, \lambda^2)$.
	Thus the problem reduces to showing that for $V$ samples of $X_w\sim\gN(0, \lambda^2)$, $\log(Z)$ is concentrated around $\lambda^2 + \log(V)$ where $Z = \sum_w \exp(X_w)$.
	This can be proved similarly to the proof of Lemma 2.1 in \cite{arora2016latent}.
	It is easy to see that $\ex_{X_w\sim\gN(0,\lambda^2)} [\exp(X_w)] = e^{\lambda^2}$.
	However the variable $\exp(X_w)$ is neither sub-gaussian nor sub-exponential and thus standard inequalities cannot be used directly.
	We use the same technique as \cite{arora2016latent} to first observe that $\ex_{} [Z] = Ve^{\half\lambda^2}$ and $\Var_{} [Z] \le \ex_{}[\exp(2X_w)] = Ve^{2\lambda^2}$.
	After conditioning on the event that $X_w \le \half \lambda\log(V)$ and applying Berstein's inequality just like in \cite{arora2016latent} completes the proof.
\end{proof}

We next prove \Lemref{lem:theta_star} that establishes a linear relationship between $\phip{\embfn}$ and $\embfn$ (under \Asmpref{asmp:partition_quadratic}) and also the guarantees for $\embfn$ on natural tasks.
\begin{replemma}{lem:theta_star}
	Under \Asmpref{asmp:partition_quadratic}, any feature map $\embfn:\gS\rightarrow\R^d$ satisfies $\phip{\embfn}(s) = \mA\embfn(s) + \vb$, for all $s\in\gS$.
\end{replemma}
\begin{proof}
	\Asmpref{asmp:partition_quadratic} gives us that $\log(Z_\theta) = \half \theta^\top \mA \theta + \theta^\top \vb + c$.
	We prove this lemma by matching the gradients of $\log(Z_\theta)$ and the quadratic function on the R.H.S.
	\begin{align*}
		\nabla_{\theta} \log(Z_\theta)
		&= \frac{\nabla_{\theta} Z_\theta}{Z_\theta}
		= \frac{\sum_{w\in\gW} e^{\phi_w^\top\theta}\phi_w}{Z_\theta}
		= \sum_{w\in\gW} p_{\theta}(w)\phi_w
		= \Phi p_{\theta}
	\end{align*}
	Whereas the gradient of the quadratic part is $\nabla_\theta [\half \theta^\top \mA \theta + \theta^\top \vb + c] = \mA \theta + \vb$.
	Matching the two for $\theta = \embfn(s)$ gives us $\phip{\embfn}(s) = \Phi \smp = \mA \embfn(s) + \vb$.
\end{proof}

\begin{repcorollary}{cor:f_star_good}
	Using \Lemref{lem:theta_star}, for any $\epsilon$-optimal $\embfn$, as defined in \Thmref{thm:robust_softmax_lm}, for classification tasks that are $(\tau, B)$-natural w.r.t. $\Phi$ we have $\lclf(\embfn) \le \tau + \gO(\sqrt{\epsilon})$.
\end{repcorollary}
\begin{proof}
	The main idea is that \Lemref{lem:theta_star} gives us that $\phip{\embfn}(s) = \mA\embfn(s) + \vb$ and thus any linear function of $\phip{\embfn}$ will also be a linear function of $\embfn(s)$.
	From \Thmref{thm:robust_softmax_lm_gen} (or \Thmref{thm:robust_softmax_lm}), we also know that $\phip{\embfn}$ will do well on $\gT$, i.e. $\lclf(\phip{\embfn}) \le \tau + \gO(B\sqrt{\epsilon})$.
	We formalize\footnote{Note that here we assume that we learn both a linear classifier and an intercept for a downstream classification task. All results in the paper essentially remain the same with an intercept in the definition of classification loss.} the intuition as
	\begin{align*}
		\lclf(\phip{\embfn})
		&= \inf_{\lambda\in\R^d,b} \lclf(\phip{\embfn}, (\lambda,b))
		= \inf_{\lambda\in\R^d,b} \lclf(\mA\embfn+\vb, (\lambda,b))
		= \inf_{\lambda\in\R^d,b} \lclf(\embfn, (\mA^\top\lambda,b+\lambda^\top\vb))\\
		&\ge \inf_{\vv\in\R^d,b'} \lclf(\embfn, (\vv,b'))
		= \lclf(\embfn)
	\end{align*}
	This shows that $\lclf(\embfn) \le \lclf(\phip{\embfn}) \le \tau + \gO(B\sqrt{\epsilon})$ and completes the proof.
\end{proof}

\subsection{Proofs for \Secref{asec:quad}}\label{asubsec:new_obj}

\begin{reptheorem}{thm:new_obj_sol}
	The optimal solution $\embfnopt, \Phi^* = \argmin_{\embfn,\Phi}\lquad(f,\Phi)$ satisfies
	\begin{align*}
		\Phi^* &= \mB\mU_d^\top\text{, for full rank }\mB\in\R^{d\times d}\\
		\embfnopt(s) &= (\Phi^*{\Phi^*}^\top)^{-\nhalf}\Phi^*\truep = \mC\mU_d^\top\truep\text{, for full rank }\mC\in\R^{d\times d}
	\end{align*}
	If $\Phi$ is fixed, then the optimal solution is $\embfnopt(s) = (\Phi{\Phi}^\top)^{-\nhalf}\Phi\truep$.
\end{reptheorem}
\begin{proof}
	From \twoEqref{eqn:new_objs}{eqn:new_obj} we know that, $\lquads(\theta,\Phi) = -\theta^\top\Phi\truep + \frac{1}{2} \|\Phi^\top\theta\|^2$ and $\lquad(\embfn, \Phi) = \ex_{s\sim\pstar}[\lquads(\embfn(s), \Phi)]$.
	For a fixed $\Phi$, we define $\embfnopt_\Phi(s) = \argmin_{\theta\in\R^d} \lquads(\theta, \Phi)$.

	We use the first-order optimality condition to get $\embfnopt_\Phi(s)$, by using the fact that $\nabla_\theta \lquads(\theta,\Phi) = -\Phi\truep + \Phi\Phi^\top\theta$.
	Setting the gradient to zero, we get $\embfnopt_\Phi(s) = (\Phi\Phi^\top)^{-1}\Phi\truep$\footnote{It will be clear later that the optimal solution will have as high a rank as possible $\Phi$. All inverses can be replaced by pseudo-inverses for low-rank matrices.}.
	To get the optimal $\Phi^*$ for this objective, we plug in this expression for $\embfnopt_\Phi$ in $\lquad$ and find $\Phi^* = \argmin_{\Phi} \lquad(\embfnopt_{\Phi}, \Phi)$.
	\begin{align*}
		\lquad(\embfnopt_{\Phi}, \Phi)
		&= \ex_{s\sim p^*} \left[\lquads(\embfnopt_{\Phi}(s), \Phi)\right] = \ex_{s\sim p^*} \left[ -\embfnopt_{\Phi}(s)^\top\Phi\truep + \half\|\Phi^\top\embfnopt_\Phi(s)\|^2\right]\\
		&= \ex_{s\sim p^*} \left[ -((\Phi\Phi^\top)^{-1}\Phi\truep)^\top\Phi\truep + \half\|\Phi^\top(\Phi\Phi^\top)^{-1}\Phi\truep\|^2\right]\\
		&= \ex_{s\sim p^*} \left[ -{\truep}^\top\Phi^\top(\Phi\Phi^\top)^{-1}\Phi\truep + \half{\truep}^\top\Phi^\top(\Phi\Phi^\top)^{-1}\Phi\Phi^\top(\Phi\Phi^\top)^{-1}\Phi\truep\right]\\
		&= \ex_{s\sim p^*} \left[ -\half {\truep}^\top\Phi^\top(\Phi\Phi^\top)^{-1}\Phi\truep \right] = -\half \ex_{s\sim p^*} \left[\tr \left({\truep}^\top\Phi^\top(\Phi\Phi^\top)^{-1}\Phi\truep\right)\right] \\
		&= -\half \tr \left(\Phi^\top(\Phi\Phi^\top)^{-1}\Phi\ex_{s\sim p^*} \left[\truep{\truep}^\top\right]\right)\\
		&= -\half \left\langle \Phi^\top(\Phi\Phi^\top)^{-1}\Phi, \ex_{s\sim p^*} \left[\truep{\truep}^\top\right] \right\rangle = -\half \left\langle \Phi^\top(\Phi\Phi^\top)^{-1}\Phi, \subst \right\rangle\\
	\end{align*}
	where $\subst$ is the substitutability matrix defined in \Defref{defn:subst_matrix}.
	Let $\Phi = \mN\mT\mV^\top$ be the SVD.
	Then the above objective reduces to $\lquad(\embfnopt_{\Phi}, \Phi) = -\half \left\langle \mV\mV^\top, \subst \right\rangle$
	And hence learning the optimal $\Phi^*$ reduces to learning an optimal $\mV^*$ such that 
	\begin{align*}
		\mV^* = \argmin\limits_{\mV\in\R^{V\times d}, \mV^\top\mV=I_d} -\langle \mV\mV^\top, \subst \rangle
	\end{align*}
	We will now show that the best such matrix is the matrix of top $d$ eigenvectors of $\subst$, i.e. $\mV^* = \mU_d$ (cf. \Defref{defn:subst_matrix}).
	Here we will assume that the eigenvalues of $\subst$ are all distinct for simplicity of presentation.
	First we note that $\langle \mV\mV^\top, \subst \rangle = \|\mV\mV^\top{\subst}^{\half}\|_F^2$, where ${\subst}^{\half} = \mU\mS^{\half}\mU^\top$, with $\mU$, $\mU_d$ and $\mS$ define in \Defref{defn:subst_matrix}.
	This can be shown by the following sequence of steps
	\begin{align*}
		\langle \mV\mV^\top, \subst \rangle 
		&= \tr(\mV\mV^\top\subst) = \tr(\mV\mV^\top\mV\mV^\top\subst) = \tr(\mV\mV^\top\subst\mV\mV^\top)\\
		&= \tr(\mV\mV^\top \mU\mS\mU^\top \mV\mV^\top) = \tr(\mV\mV^\top \mU\mS^{\half}\mU^\top \mU\mS^{\half}\mU^\top \mV\mV^\top)\\
		&= \tr(\mV\mV^\top {\subst}^{\half} {\subst}^{\half} \mV\mV^\top) = \langle \mV\mV^\top {\subst}^{\half}, \mV\mV^\top {\subst}^{\half} \rangle\\
		&= \|\mV\mV^\top{\subst}^{\half}\|_F^2
	\end{align*}
	Furthermore, we notice that $\|\mV\mV^\top{\subst}^{\half}\|_F^2 = \|{\subst}^{\half}\|_F^2 - \|{\subst}^{\half}-\mV\mV^\top{\subst}^{\half}\|_F^2$ as shown below
	\begin{align*}
		\|{\subst}^{\half}-\mV\mV^\top{\subst}^{\half}\|_F^2
		&= \|{\subst}^{\half}\|_F^2 + \|\mV\mV^\top{\subst}^{\half}\|_F^2 - 2\tr({\subst}^{\half}\mV\mV^\top{\subst}^{\half})\\
		&= \|{\subst}^{\half}\|_F^2 + \|\mV\mV^\top{\subst}^{\half}\|_F^2 - 2\tr({\subst}^{\half}\mV\mV^\top\mV\mV^\top{\subst}^{\half})\\
		&= \|{\subst}^{\half}\|_F^2 + \|\mV\mV^\top{\subst}^{\half}\|_F^2 - 2\|\mV\mV^\top{\subst}^{\half}\|_F^2\\
		&= \|{\subst}^{\half}\|_F^2 - \|\mV\mV^\top{\subst}^{\half}\|_F^2\\
	\end{align*}
	Thus we get $\argmin\limits_{\mV\in\R^{V\times d}, \mV^\top\mV=I_d} -\langle \mV\mV^\top, \subst \rangle = \argmin\limits_{\mV\in\R^{V\times d}, \mV^\top\mV=I_d} \|{\subst}^{\half}-\mV\mV^\top{\subst}^{\half}\|_F^2$.
	
	Note that $\mV\mV^\top{\subst}^{\half}$ has columns that are columns of ${\subst}^{\half}$ projected on the space spanned by columns $\mV$.
	It is folklore that the best such subspace $\mV^*$ is the subspace spanned by the top $d$ eigenvectors of ${\subst}^{\half}$, which is the same as top $d$ eigenvectors of $\subst$, thus giving us $\mV^*{\mV^*}^\top = \mU_d\mU_d^\top$.
	Thus we get $\mV^* = \mU_d\mM$ for $\mM = \mU_d^\top\mV^*$.
	
	This tells us that the optimal solution $\Phi^*$ will have SVD of the form $\Phi^* = \mN^*\mT^*{\mV^*}^\top$, thus giving us $\Phi^*=\mB\mU_d^\top$ for matrix $\mB=\mN^*\mT^*\mM^\top\in\R^{d\times d}$.
	This directly gives $\embfnopt = \embfnopt_{\Phi^*} = (\Phi^*{\Phi^*}^\top)^{-1}\Phi^*\truep = \mN^*\mT^{-1}{\mV^*}^\top\truep = \mC\mU_d^\top\truep$ for $\mC = \mN^*{\mT^*}^{-1}\mM^\top$.

\end{proof}

\subsection{Proofs for supporting lemmas}\label{asubsec:unconstrained_lm}

\begin{lemma}\label{lem:lclf_upper_bound_main}
	For a language model $\{\lmp\}$, if $\gT$ is $(\tau,B)$-natural,
	\begin{align*}
		\lclf(\{\lmp\}) \le \tau + \sup\limits_{\substack{\vv\in\R^V,\|\vv\|_{\infty}\le B}} \sqrt{\frac{\vv^\top\Sigma_{\pstar}(\Delta_{\{\lmp\}})\vv}{\gamma(\pT; \{\lmp\})}}
	\end{align*}
	If $\gT$ is $(\tau,B)$-natural w.r.t. $\Phi\in\R^{d\times V}$,
	\begin{align*}
		\lclf(\{\Phi\lmp\}) \le \tau + \sup\limits_{\substack{\vv=\Phi^\top\lambda\in\R^{V},\\\|\vv\|_{\infty}\le B}} \sqrt{\frac{\vv^\top\Sigma_{\pstar}(\Delta_{\{\lmp\}})\vv}{\gamma_{\Phi}(\pT; \{\lmp\})}}
	\end{align*}
	where $\gamma(\cdot)$ and $\gamma_{\Phi}(\cdot)$ are from \Defref{def:gamma}.
\end{lemma}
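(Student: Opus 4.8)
The plan is to recognize this lemma as exactly the first two steps ($\bm{\alpha_1}$ and $\bm{\alpha_2}$) of the decomposition in \eqref{eqn:decomposition}, run against the classifier supplied by the natural-task hypothesis, with the $\bm{\alpha_3}$ factor left symbolic. First I would invoke \Defref{def:natural_task} to fix $\vstar\in\R^V$ with $\|\vstar\|_\infty\le B$ and $\lclf(\{\truep\},\vstar)\le\tau$; since $\lclf(\{\lmp\})=\inf_{\vv}\lclf(\{\lmp\},\vv)\le\lclf(\{\lmp\},\vstar)$, it then suffices to bound $\lclf(\{\lmp\},\vstar)-\lclf(\{\truep\},\vstar)$ and add $\tau$.

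Next, writing both losses as expectations over $(s,y)\sim\pT$ and using that $\ell$ is $1$-Lipschitz in its first argument, this difference is at most $\ex_{s\sim\pT}[\,|\vstar^\top(\lmp-\truep)|\,]$, which by Jensen's inequality (concavity of $\sqrt{\cdot}$) is at most $\sqrt{\ex_{s\sim\pT}[(\vstar^\top(\lmp-\truep))^2]}=\sqrt{\vstar^\top\Sigma_{\pT}(\Delta_{\{\lmp\}})\vstar}$, unpacking $\Delta_{\{\lmp\}}(s)=\lmp-\truep$ and the definition of the uncentered covariance $\Sigma_{\pT}(\Delta_{\{\lmp\}})=\ex_{s\sim\pT}[\Delta_{\{\lmp\}}(s)\Delta_{\{\lmp\}}(s)^\top]$. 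To move this from $\pT$ to $\pstar$ I would read off from \Defref{def:gamma} that $\gamma(\pT;\{\lmp\})^{-1}$ is the spectral norm of the PSD matrix $\Sigma_{\pstar}(\Delta_{\{\lmp\}})^{-\half}\Sigma_{\pT}(\Delta_{\{\lmp\}})\Sigma_{\pstar}(\Delta_{\{\lmp\}})^{-\half}$, hence $\Sigma_{\pT}(\Delta_{\{\lmp\}})\preceq\gamma(\pT;\{\lmp\})^{-1}\Sigma_{\pstar}(\Delta_{\{\lmp\}})$; taking the quadratic form against $\vstar$ and then the supremum over all $\vv$ with $\|\vv\|_\infty\le B$ gives the first claim.

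For the statement w.r.t.\ $\Phi$ (\Defref{def:natural_task_phi}), the reference classifier now has the form $\vstar=\Phi^\top\lambda^*$. Choosing $\lambda=\lambda^*$ in $\lclf(\{\Phi\lmp\})=\inf_\lambda\lclf(\{\Phi\lmp\},\lambda)$ and using $\lambda^{*\top}\Phi\lmp=\vstar^\top\lmp$ shows $\lclf(\{\Phi\lmp\})\le\lclf(\{\lmp\},\vstar)$, so the previous two steps reapply verbatim; I would then rewrite $\vstar^\top\Delta_{\{\lmp\}}(s)=\lambda^{*\top}\Phi\Delta_{\{\lmp\}}(s)$ so that the relevant covariance becomes $\Sigma_{\pT}(\Phi\Delta_{\{\lmp\}})$, transfer it with $\gamma_\Phi(\pT;\{\lmp\})$ in place of $\gamma(\pT;\{\lmp\})$, use $\Sigma_{\pstar}(\Phi\Delta_{\{\lmp\}})=\Phi\Sigma_{\pstar}(\Delta_{\{\lmp\}})\Phi^\top$ to return to $\vstar^\top\Sigma_{\pstar}(\Delta_{\{\lmp\}})\vstar$, and take the supremum over $\vv=\Phi^\top\lambda$ with $\|\vv\|_\infty\le B$.

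The one genuinely delicate point is the matrix inequality $\Sigma_{\pT}(\Delta_{\{\lmp\}})\preceq\gamma(\pT;\{\lmp\})^{-1}\Sigma_{\pstar}(\Delta_{\{\lmp\}})$ when the covariances are rank-deficient: one must interpret $\Sigma_{\pstar}(\cdot)^{-\half}$ as a pseudo-inverse and verify that the error vectors $\Delta_{\{\lmp\}}(s)$ lie in $\mathrm{range}(\Sigma_{\pstar}(\Delta_{\{\lmp\}}))$ for $\pstar$-a.e.\ $s$ — and hence, since $\pT$ is absolutely continuous w.r.t.\ $\pstar$ whenever $\gamma(\pT;\{\lmp\})$ is finite, also for $\pT$-a.e.\ $s$ — so that no mass is lost on the kernel where the quotient is ill-defined. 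Everything else (Lipschitz, Jensen, and the identity $\Sigma_{\pstar}(\Phi\Delta_{\{\lmp\}})=\Phi\Sigma_{\pstar}(\Delta_{\{\lmp\}})\Phi^\top$) is routine; the coarser bounds $\gamma(\pT;\{\lmp\})\ge\gamma(\pT)$ and $\gamma_\Phi(\pT;\{\lmp\})\ge\gamma(\pT)$ needed to recover the clean forms used in \Thmref{thm:robust_unconstrained_lm} and \Thmref{thm:robust_softmax_lm} I would peel off into the separate \Lemref{lem:gamma_bound}.
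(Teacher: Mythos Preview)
Your proposal is correct and matches the paper's proof essentially step for step: the paper also fixes the classifier guaranteed by the natural-task definition, applies Lipschitzness of $\ell$ plus Jensen to bound $\lclf(\{\lmp\},\vv)-\lclf(\{\truep\},\vv)$ by $\sqrt{\vv^\top\Sigma_{\pT}(\Delta_{\{\lmp\}})\vv}$ (their \Lemref{lem:lclf_upper_bound}), then transfers $\Sigma_{\pT}$ to $\Sigma_{\pstar}$ via the spectral-norm characterization of $\gamma$ (their \Lemref{lem:transfer}), and finally passes to the supremum over admissible $\vv$. The only cosmetic difference is that the paper phrases the first step as ``infimum on both sides'' rather than working with a single $\vstar$, and it does not explicitly discuss the rank-deficient pseudo-inverse issue you flag (it is handled by a generic footnote elsewhere).
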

\begin{proof}
	We note the following upper bounds on $\lclf(\{\lmp\})$ and $\lclf(\{\Phi\lmp\})$.
	\begin{align}
		\lclf(\{\lmp\})
		&= \inf\limits_{\vv\in\R^V}\left\{\lclf(\{\lmp\}, \vv)\right\}
		\le \inf\limits_{\substack{\vv\in\R^V,\\\|\vv\|_{\infty}\le B}}\left\{\lclf(\{\lmp\}, \vv)\right\}\label{eqn:some_eq1}\\
		 \lclf(\{\Phi\lmp\})
		 &= \inf\limits_{\vv=\Phi^\top\lambda\in\R^{V}}\left\{\lclf(\{\lmp\}, \vv)\right\}
		 \le \inf\limits_{\substack{\vv=\Phi^\top\lambda\in\R^{V}, \biasb\in\R,\\\|\vv\|_{\infty}\le B}} \left\{\lclf(\{\lmp\}, \vv)\right\}\label{eqn:some_eq5}
	\end{align}
	When $\gT$ is $(\tau, B)$-natural, by \Defref{def:natural_task} we know that $\inf\limits_{\substack{\vv\in\R^V\\\|\vv\|_{\infty}\le B}}\left[\lclf(\{\truep\}, \vv)\right]\le\tau$.
	We now upper bound $\lclf(\{\lmp\}, \vv)$ using \Lemref{lem:lclf_upper_bound}.
	Taking infimum w.r.t. $\vv\in\R^V,\|\vv\|_{\infty}\le B$ from the inequality in \Lemref{lem:lclf_upper_bound}.
	\begin{align}
		\lclf(\{\lmp\}, \vv)
		&\le \lclf(\{\truep\}, \vv) + \sqrt{\vv^\top\Sigma_{\pT}(\Delta_{\{\lmp\}})\vv}\nonumber\\
		\inf\limits_{\substack{\vv\in\R^V\\\|\vv\|_{\infty}\le B}} \lclf(\{\lmp\}, \vv)
		&\le \inf\limits_{\substack{\vv\in\R^V\\\|\vv\|_{\infty}\le B}} \lclf(\{\truep\}, \vv) + \sup\limits_{\substack{\vv\in\R^V,\|\vv\|_{\infty}\le B}} \sqrt{\vv^\top\Sigma_{\pT}(\Delta_{\{\lmp\}})\vv}\nonumber
	\end{align}
	This, combined with \Eqref{eqn:some_eq1}, gives us
	\begin{align}
		\lclf(\{\lmp\}) \le \tau + \sup\limits_{\substack{\vv\in\R^V,\|\vv\|_{\infty}\le B}} \sqrt{\vv^\top\Sigma_{\pT}(\Delta_{\{\lmp\}})\vv}\label{eqn:some_eq2}
	\end{align}
	
	Using \Lemref{lem:transfer} and the definition of $\gamma(\pT; \{\lmp\})$ in \Eqref{eqn:gamma_lmp}, we get that 
	\begin{align}
		\vv^\top\Sigma_{\pT}(\Delta_{\{\lmp\}})\vv
		&\le \left\|\Sigma_{\pstar}(\Delta_{\{\lmp\}})^{-\half} \Sigma_{\pT}(\Delta_{\{\lmp\}}) \Sigma_{\pstar}(\Delta_{\{\lmp\}})^{-\half}\right\|_2 \left(\vv^\top\Sigma_{\pstar}(\Delta_{\{\lmp\}})\vv\right)\nonumber\\
		&= \frac{\vv^\top\Sigma_{\pstar}(\Delta_{\{\lmp\}})\vv}{\gamma(\pT; \{\lmp\})}\label{eqn:err_cov_bound}
	\end{align}
	We have thus successfully transferred the bound from the distribution $\pT$ to $\pstar$.
	Combining this with \Eqref{eqn:some_eq2} completes the proof of the first part of the lemma.

	We now prove the second part of the lemma where we only assume that $\gT$ is $(\tau,B)$-natural w.r.t. $\Phi$.
	Here we instead take the infimum over classifiers in the span of $\Phi$ in \Lemref{lem:lclf_upper_bound} to get 
	\begin{align}
		\inf\limits_{\substack{\vv=\Phi^\top\lambda\in\R^{V}, \biasb\in\R,\\\|\vv\|_{\infty}\le B}} \left\{\lclf(\{\lmp\}, \vv)\right\}
		&\le \inf\limits_{\substack{\vv=\Phi^\top\lambda\in\R^{V}, \biasb\in\R,\\\|\vv\|_{\infty}\le B}} \left\{\lclf(\{\truep\}, \vv)\right\} +\nonumber\\
		&\sup\limits_{\substack{\vv=\Phi^\top\lambda\in\R^{V},\\\|\vv\|_{\infty}\le B}} \sqrt{\vv^\top\Sigma_{\pT}(\Delta_{\{\lmp\}})\vv}\label{eqn:some_eq4}
	\end{align}
	This, combined with definition of $(\tau,B)$-natural task w.r.t. $\Phi$ and \Eqref{eqn:some_eq5} gives us
	\begin{align}
		\lclf(\{\Phi\lmp\}) \le \tau + \sup\limits_{\substack{\vv=\Phi^\top\lambda\in\R^{V},\\\|\vv\|_{\infty}\le B}} \sqrt{\vv^\top\Sigma_{\pT}(\Delta_{\{\lmp\}})\vv}\label{eqn:some_eq6}
	\end{align}
	For the last term, for any $\vv=\Phi^\top\lambda, \lambda\in\R^d$ we notice that 
	\begin{align*}
		\vv^\top\Sigma_{\pT}&(\Delta_{\{\lmp\}})\vv
		= \lambda^\top\Phi\Sigma_{\pT}(\Delta_{\{\lmp\}})\Phi^\top\lambda
		= \lambda^\top\Sigma_{\pT}(\Phi\Delta_{\{\lmp\}})\lambda\\
		&\le^{(a)} \left\|\Sigma_{\pstar}(\Phi\Delta_{\{\lmp\}})^{-\half} \Sigma_{\pT}(\Phi\Delta_{\{\lmp\}}) \Sigma_{\pstar}(\Phi\Delta_{\{\lmp\}})^{-\half}\right\|_2 \left(\lambda^\top\Sigma_{\pstar}(\Phi\Delta_{\{\lmp\}})\lambda\right)\\
		&= \frac{\lambda^\top\Sigma_{\pstar}(\Phi\Delta_{\{\lmp\}})\lambda}{\gamma_{\Phi}(\pT; \{\lmp\})}
		= \frac{\vv^\top\Sigma_{\pstar}(\Delta_{\{\lmp\}})\vv}{\gamma_{\Phi}(\pT; \{\lmp\})}
	\end{align*}
	This combined with \Eqref{eqn:some_eq6}, we get
	\begin{align*}
		\lclf(\{\Phi\lmp\}) \le \tau + \inf\limits_{\substack{\vv=\Phi^\top\lambda\in\R^{V},\\\|\vv\|_{\infty}\le B}} \sqrt{\frac{\vv^\top\Sigma_{\pstar}(\Delta_{\{\lmp\}})\vv}{\gamma_{\Phi}(\pT; \{\lmp\})}}
	\end{align*}
\end{proof}

\begin{lemma}[Pinsker's inequality]
\label{lem:pinskers}
	For discrete distributions $q,\ptrueq\in\Delta_V$, let $\bm{q},\bm{\ptrueq}\in\R^V$ be the corresponding vector of probabilities.
	Then we have
	\begin{align*}
		\max_{\|\vv\|_{\infty}\le 1} |\vv^\top(\bm{q} - \bm{\ptrueq})| \le \sqrt{2\KL(\ptrueq, q)}
	\end{align*}
\end{lemma}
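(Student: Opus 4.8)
The plan is to first identify the left-hand side as an $\ell_1$ distance and then give a self-contained proof of the classical Pinsker inequality. For any vector $\bm{u}\in\R^V$ one has $\max_{\|\vv\|_{\infty}\le1}|\vv^\top\bm{u}| = \|\bm{u}\|_1$: the inequality $\le$ is H\"older's inequality, and the coordinatewise choice $\vv=\sign(\bm{u})$ attains it. Applying this with $\bm{u}=\bm{q}-\bm{\ptrueq}$ reduces the claim to $\|\bm{q}-\bm{\ptrueq}\|_1 \le \sqrt{2\,\KL(\ptrueq,q)}$. If $\KL(\ptrueq,q)=\infty$ there is nothing to prove, so we may assume it is finite (in particular $q(w)>0$ whenever $\ptrueq(w)>0$).

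Next I would reduce the multi-outcome case to a two-outcome distribution by a coarsening argument. Let $A=\{w\in\gW:\ptrueq(w)\ge q(w)\}$, and set $a=\ptrueq(A)=\sum_{w\in A}\ptrueq(w)$ and $b=q(A)$, so that $a\ge b$. By considering the positive and negative parts of $\bm{q}-\bm{\ptrueq}$ one checks that each has total mass $a-b$, hence $\|\bm{q}-\bm{\ptrueq}\|_1 = 2(a-b)$. Applying the log-sum inequality (equivalently, the data-processing inequality for KL divergence under the two-cell partition $\{A,\gW\setminus A\}$) to the pairs $\{(\ptrueq(w),q(w))\}_{w\in A}$ and $\{(\ptrueq(w),q(w))\}_{w\notin A}$ separately and adding gives $\KL(\ptrueq,q)\ge a\log\tfrac{a}{b}+(1-a)\log\tfrac{1-a}{1-b}=:\kappa(a,b)$. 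Thus it suffices to establish the scalar inequality $2(a-b)^2\le\kappa(a,b)$; the degenerate cases $b\in\{0,1\}$ force either $\|\bm{q}-\bm{\ptrueq}\|_1=0$ or $\KL=\infty$, so we may take $a,b\in(0,1)$.

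Finally, I would prove the scalar inequality by one-variable calculus: fix $b$ and put $f(a)=\kappa(a,b)-2(a-b)^2$ on $(0,1)$. Then $f(b)=0$; $f'(a)=\log\tfrac{a}{1-a}-\log\tfrac{b}{1-b}-4(a-b)$, so $f'(b)=0$; and $f''(a)=\tfrac{1}{a(1-a)}-4\ge0$ since $a(1-a)\le\tfrac14$. Hence $f$ is convex with a stationary point at $a=b$, so $f\ge f(b)=0$ on $(0,1)$, i.e. $2(a-b)^2\le\kappa(a,b)$. Chaining this with $\|\bm{q}-\bm{\ptrueq}\|_1=2(a-b)$ and $\KL(\ptrueq,q)\ge\kappa(a,b)$ yields $\|\bm{q}-\bm{\ptrueq}\|_1^2=4(a-b)^2\le2\,\KL(\ptrueq,q)$, and combining with the duality identity from the first step completes the proof.

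The main obstacle is purely one of bookkeeping in the reduction: justifying the coarsening step via the log-sum / data-processing inequality and correctly disposing of the degenerate cases ($b=0$, $b=1$, or $\KL$ infinite). The duality step and the convexity computation for $f$ are routine. An alternative that bounds $|\ptrueq(w)-q(w)|$ termwise and applies Cauchy–Schwarz is possible but still needs essentially the same scalar estimate, so the binary reduction is the cleanest route.
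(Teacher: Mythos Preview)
Your proof is correct. Both you and the paper begin identically, identifying $\max_{\|\vv\|_\infty\le 1}|\vv^\top(\bm{q}-\bm{\ptrueq})| = \|\bm{q}-\bm{\ptrueq}\|_1 = 2\,\TV(\ptrueq,q)$ via the $\ell_\infty/\ell_1$ duality. From there, however, the paper simply invokes the classical Pinsker inequality $2\,\TV(\ptrueq,q)\le\sqrt{2\,\KL(\ptrueq,q)}$ as a known fact and stops, whereas you supply a full self-contained proof of that inequality via the standard two-point coarsening (data-processing / log-sum) and the convexity computation for $f(a)=\kappa(a,b)-2(a-b)^2$. Your argument is the textbook proof of Pinsker and is entirely correct; it is just considerably more than the paper actually does, since the paper treats the lemma as essentially a restatement of a classical result rather than something requiring proof.
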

\begin{proof}
	This basically follows from Pinsker's inequality which upper bounds the total variation distance between distributions by their KL-divergence
	\begin{align*}
		\max_{\|\vv\|_{\infty}\le 1} |\vv^\top(\bm{q} - \bm{\ptrueq})|
		&= \|\bm{q} - \bm{\ptrueq}\|_{1}
		= 2~ \TV(\ptrueq,q)
		\le \sqrt{2 \KL(\ptrueq,q)}
	\end{align*}
\end{proof}

We remind the reader that for an embedding matrix $\Phi\in\R^{d\times V}$, $\pthetaphip \coloneqq \textrm{softmax}(\Phi^\top\theta)$
\begin{lemma}[Softmax variant of Pinsker's inequality]
\label{lem:softmax_pinskers}
	Consider a matrix $\Phi\in\R^{d\times V}$ with $d\le V$.
	For any discrete distribution $\ptrueq\in\Delta_V$ and softmax distribution $\pthetaphip = \textrm{softmax}(\Phi^\top\theta)\in\Delta_V$ for $\theta\in\R^d$, let $\bm{\ptrueq}, \thetaphip\in\R^V$ be the corresponding vector of probabilities.
	Then we have
	\begin{align}
		\max_{\substack{{\color{blue}\vv=\Phi^\top\lambda},\\\|\vv\|_{\infty}\le 1}} |\vv^\top(\thetaphip - \bm{\ptrueq})| \le \sqrt{2\left(\KL(\pthetaphip, \ptrueq){\color{blue} - \inf_{\theta^*\in\R^d}\KL(p_{\theta^*,\Phi}, \ptrueq)}\right)}
		\label{eqn:softmax_pinskers}
	\end{align}
	Pinsker's inequality (\Lemref{lem:pinskers}), on the other hand, gives
	\begin{align*}
		\max_{\substack{\\\|\vv\|_{\infty}\le 1}} |\vv^\top(\thetaphip - \bm{\ptrueq})| \le \sqrt{2\KL(\pthetaphip, \ptrueq)}
	\end{align*}
\end{lemma}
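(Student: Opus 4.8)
The plan is to reduce the claimed inequality to a one-dimensional convexity estimate along the direction $\lambda$. Fix $\vv = \Phi^\top\lambda$ with $\|\vv\|_\infty \le 1$ and regard the cross-entropy of a softmax model against $\ptrueq$ as a function of the context embedding: define $g(\theta')\coloneqq \ex_{w\sim\ptrueq}\big[-\log(p_{\theta',\Phi}(w))\big] = -{\theta'}^{\top}\Phi\trueq + \log(Z_{\theta'})$ for $\theta'\in\R^d$. As the sum of an affine term and a log-sum-of-exponentials, $g$ is convex and $C^\infty$ on $\R^d$, and $\inf_{\theta^*\in\R^d} g(\theta^*) \ge 0$ since $g$ is a cross-entropy. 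Because the entropy of $\ptrueq$ is independent of the model, $g(\theta)-\inf_{\theta^*} g(\theta^*)$ is exactly the bracket appearing in the statement, $\KL(\pthetaphip,\ptrueq)-\inf_{\theta^*\in\R^d}\KL(p_{\theta^*,\Phi},\ptrueq)$. Exactly as in the gradient computation in the proof of \Propref{prop:softmax_lm}, $\nabla_{\theta'} g(\theta') = -\Phi\trueq + \Phi\bm{p}_{\theta',\Phi}$, so the quantity to be bounded is a directional derivative of $g$: $\vv^\top(\thetaphip - \trueq) = \langle\lambda, \nabla g(\theta)\rangle$.

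First I would restrict $g$ to the line $t\mapsto\theta+t\lambda$ and set $\psi(t)\coloneqq g(\theta+t\lambda)$, which is convex, $C^\infty$, with $\psi'(0) = \langle\lambda,\nabla g(\theta)\rangle = \vv^\top(\thetaphip-\trueq)$. Its curvature is bounded uniformly: $\psi''(t) = \lambda^\top\big(\nabla^2\log Z_{\theta+t\lambda}\big)\lambda = \Var_{w\sim p_{\theta+t\lambda,\Phi}}\big((\Phi^\top\lambda)_w\big) = \Var_{w\sim p_{\theta+t\lambda,\Phi}}(\vv_w)$, using the standard fact that the Hessian of a log-partition function is the covariance of $\phi_w$ under the induced softmax distribution; since $\vv_w \in [-\|\vv\|_\infty,\|\vv\|_\infty]\subseteq[-1,1]$, its variance is at most its second moment, so $\psi''(t)\le\|\vv\|_\infty^2\le 1$ for every $t$.

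Next I would prove and apply the following elementary one-dimensional fact: \emph{if $\psi\colon\R\to\R$ is convex, $C^2$, and $\psi''\le 1$ everywhere, then $\psi'(0)^2 \le 2\big(\psi(0)-\inf_{t\in\R}\psi(t)\big)$.} Put $a=\psi'(0)$; we may assume $a\ge 0$ (replace $\lambda$ by $-\lambda$ otherwise, the case $a=0$ being trivial since the right side is nonnegative). For $t\le 0$ we have $\psi'(0)-\psi'(t)=\int_t^0\psi''(u)\,du\le -t$, hence $\psi'(t)\ge a+t$, which is $\ge 0$ on $[-a,0]$; therefore $\psi(0)-\psi(-a)=\int_{-a}^0\psi'(t)\,dt \ge \int_{-a}^0 (a+t)\,dt = \tfrac{1}{2}a^2$, and combined with $\inf_t\psi(t)\le\psi(-a)$ this gives $\psi(0)-\inf_t\psi(t)\ge\tfrac{1}{2}\psi'(0)^2$.

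Finally I would chain the pieces: $\big(\vv^\top(\thetaphip-\trueq)\big)^2 = \psi'(0)^2 \le 2\big(\psi(0)-\inf_t\psi(t)\big) = 2\big(g(\theta)-\inf_t g(\theta+t\lambda)\big) \le 2\big(g(\theta)-\inf_{\theta^*\in\R^d} g(\theta^*)\big)$, which by the first paragraph equals $2\big(\KL(\pthetaphip,\ptrueq)-\inf_{\theta^*}\KL(p_{\theta^*,\Phi},\ptrueq)\big)$; taking square roots and then the supremum over all $\vv=\Phi^\top\lambda$ with $\|\vv\|_\infty\le 1$ gives the lemma, and the $\|\vv\|_\infty$-homogeneous form used inside \Thmref{thm:robust_softmax_lm} follows by rescaling $\vv$. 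The comparison with ordinary Pinsker's inequality is then immediate from \Lemref{lem:pinskers} after discarding the nonnegative subtracted infimum. The step I expect to be the crux is the global curvature bound $\psi''\le 1$ together with the \emph{one-sided} ``half-step'' to $t=-a$: $g$ is not strongly convex, so no uniform quadratic-growth lower bound is available, and it is precisely the constraint $\vv\in\textrm{row-span}(\Phi)$ that realizes $\psi$ as a slice of the very log-partition function whose Hessian we control, which is exactly what allows the deficit to be measured against the best $d$-dimensional softmax cross-entropy rather than against the (generally unattained) entropy of $\ptrueq$.
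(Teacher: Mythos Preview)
Your proof is correct and follows essentially the same route as the paper: both identify $\vv^\top(\thetaphip-\trueq)=\lambda^\top\nabla\rho(\theta)$ for the cross-entropy $\rho(\theta)=-\theta^\top\Phi\trueq+\log Z_\theta$, bound the directional curvature by $\|\Phi^\top\lambda\|_\infty^2$ via the covariance form of the log-partition Hessian, and then invoke the ``smoothness $\Rightarrow$ gradient bounded by suboptimality'' inequality. Your one-dimensional integration to $t=-a$ is just a minor stylistic variant of the paper's Taylor-expansion-and-optimize proof of its \Lemref{lem:smoothness}; otherwise the arguments coincide.
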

\begin{proof}
	Define the loss $\rho(\theta) \coloneqq \KL(\pthetaphip, \ptrueq)$.
	The statement in \Eqref{eqn:softmax_pinskers} to prove reduces to
	\begin{align}
		\max_{\substack{\|\Phi^\top\lambda\|_{\infty}\le 1}} |\lambda^\top(\Phi\thetaphip - \Phi\bm{\ptrueq})| \le \sqrt{2\left(\rho(\theta)- \inf_{\theta^*\in\R^d}\rho(\theta^*)\right)}
		\label{eqn:softmax_pinskers_rho}
	\end{align}
	To prove this, we compute the gradient and hessian of $\rho(\theta)$ w.r.t. $\theta$.
	We can simplify $\rho(\theta)$ as follows
	\begin{align*}
		\rho(\theta)
		&= \KL(\pthetaphip, \ptrueq)
		= \ex_{w\sim \ptrueq} [-\log(\pthetaphip(w))]
		= \ex_{w\sim \ptrueq} \left[-\log\left(\frac{e^{\theta^\top\phi_w}}{\sum_{w'}e^{\theta^\top\phi_{w'}}}\right)\right]\\
		&= -\theta^\top\Phi \trueq + \log\left(\sum_{w'}e^{\theta^\top\phi_{w'}}\right)
		= -\theta^\top\Phi \trueq + \log\left(Z_{\theta}\right)
	\end{align*}
	The gradient is
	\begin{align*}
		\nabla \rho(\theta)
		&= \nabla \left[-\theta^\top\Phi \ptrueq + \log(Z_{\theta})\right] = -\Phi\trueq + \frac{\nabla Z_{\theta}}{Z_{\theta}}\\
		&= -\Phi\trueq + \frac{\nabla \sum_{w}e^{\theta^\top \phi_{w}}}{Z_{\theta}} = -\Phi\trueq + \frac{\sum_{w}e^{\theta^\top \phi_w}\phi_w}{Z_{\theta}}\\
		&= -\Phi\trueq + \Phi \pthetaphip
	\end{align*}
	Similarly the Hessian can be computed
	\begin{align*}
		\nabla^2 \rho(\theta) 
		&= \nabla (\nabla \rho(\theta)) = \nabla [-\Phi\trueq + \Phi \pthetaphip]
		= \nabla \sum_{w\in\gW} \pthetaphip(w) \phi_w = \sum_{w\in\gW} \nabla \frac{e^{\theta^\top\phi_w}}{Z_\theta} \phi_w\\
		&= \sum_{w\in\gW} \frac{e^{\theta^\top\phi_w}}{Z_\theta} \phi_w\phi_w^\top - \frac{e^{\theta^\top\phi_w}}{Z_\theta^2} \phi_w \left(\sum_{w'}e^{\theta^\top\phi_{w'}}\phi_{w'}\right)^\top\\
		&= \ex_{w\sim \pthetaphip}[\phi_w\phi_w^\top] - \left(\ex_{w\sim p_{\theta,\Phi}}[\phi_w]\right)\left(\ex_{w\sim \pthetaphip}[\phi_w]\right)^\top
		= \textrm{Cov}_{w\sim \pthetaphip}[\phi_w]
	\end{align*}
	Where $\textrm{Cov}_{w\sim \pthetaphip}[\phi_w]$ denotes the covariance of the word embeddings $\phi_w$ when measured w.r.t. the distribution $\pthetaphip$.
	This directly gives us that $\nabla^2 \rho(\theta) \psd 0$, since the covariance is always psd, and thus $\rho$ is convex in $\theta$.

	We return to the statement in \Eqref{eqn:softmax_pinskers_rho} that we need to prove.
	With the expression for gradient of $\rho$ at hand, we can rewrite \Eqref{eqn:softmax_pinskers_rho} as trying to prove
	\begin{align}
		|\lambda^\top\nabla \rho(\theta)| \le \|\Phi^\top\lambda\|_{\infty}\sqrt{2\left(\rho(\theta) - \inf_{\theta^*\in\R^d}\rho(\theta^*)\right)}
		\label{eqn:final_softmax_pinskers}
	\end{align}
	Furthermore, using the definition of the Hessian, it is not hard to see for some $\lambda,\tilde{\theta}\in\R^d$ that $\lambda^\top\nabla^2 \rho(\tilde{\theta}) \lambda = \textrm{Cov}_{w\sim p_{\tilde{\theta},\Phi}} [\lambda^\top\phi_w] \le \ex_{w\sim p_{\tilde{\theta},\Phi}} [(\lambda^\top\phi_w)^2] \le \|\Phi^\top\lambda\|^2_{\infty}$.
	Thus we can evoke \Lemref{lem:smoothness} with $\ell=\rho$ and $L = \|\Phi^\top\lambda\|^2_{\infty}$ to prove \Eqref{eqn:final_softmax_pinskers} and thus completing the proof.
	 Intuitively \Lemref{lem:smoothness} exploits the smoothness of the function to argue that small suboptimality (i.e. being close to optimal solution in function value) is sufficient to guarantee small norm of the gradient, a property that is well-known in the optimization literature.
	 We now present this lemma
\end{proof}

	\begin{lemma}\label{lem:smoothness}
		If a function $\ell:\R^d\rightarrow\R$ and $\lambda\in\R^d$ satisfy $\lambda^\top\nabla^2 \ell(\tilde{\theta})\lambda \le L, \forall \tilde{\theta}\in\R^d$ ($L$-smoothness in the direction of $\lambda$) and if $\ell^*=\inf_{\theta\in\R^d} \ell(\theta)$, then $|\lambda^\top\nabla \ell(\theta)|^2 \le 2L(\ell(\theta) - \ell^*)$
	\end{lemma}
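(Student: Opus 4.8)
The plan is to reduce the $d$-dimensional statement to an elementary one-dimensional smoothness estimate by restricting $\ell$ to the line through $\theta$ in the direction $\lambda$. Concretely, I would set $h:\R\to\R$, $h(t)=\ell(\theta+t\lambda)$, so that $h'(t)=\lambda^\top\nabla\ell(\theta+t\lambda)$ and $h''(t)=\lambda^\top\nabla^2\ell(\theta+t\lambda)\,\lambda\le L$ for every $t$, by hypothesis, and $h(t)\ge\ell^*$ for every $t$ since $\ell^*=\inf_{\theta'\in\R^d}\ell(\theta')$ is the global infimum of $\ell$. In this notation the claim $|\lambda^\top\nabla\ell(\theta)|^2\le 2L(\ell(\theta)-\ell^*)$ is precisely $h'(0)^2\le 2L(h(0)-\ell^*)$, so it suffices to prove this scalar inequality.

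For the scalar inequality I would first prove the quadratic majorization $h(t)\le h(0)+h'(0)\,t+\frac{L}{2}t^2$ for all $t\in\R$. Let $q(t)=h(0)+h'(0)t+\frac{L}{2}t^2$ and $g=q-h$; then $g(0)=0$, $g'(0)=0$, and $g''(t)=L-h''(t)\ge 0$, so $g$ is convex with a stationary point at $t=0$, hence $g(t)\ge g(0)=0$ for all $t$, which is the claimed bound. Assuming $L>0$, substituting the minimizer $t=-h'(0)/L$ of $q$ and using $h(t)\ge\ell^*$ gives $\ell^*\le h(0)-\frac{h'(0)^2}{2L}$, i.e. $h'(0)^2\le 2L(h(0)-\ell^*)$. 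The case $L=0$ is handled separately and trivially: the majorization degenerates to the affine bound $h(t)\le h(0)+h'(0)t$ valid for all $t$, which together with $h\ge\ell^*>-\infty$ forces $h'(0)=0$, so both sides of the claim are zero.

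I do not expect a genuine obstacle here: this is the textbook fact that, for a function that is smooth along $\lambda$, small suboptimality in value implies a small directional derivative, and it is invoked in the proof of Lemma~\ref{lem:softmax_pinskers} with $\ell=\rho$. The only mild point to be careful about is that we use \emph{only} the one-sided Hessian bound $h''\le L$ (never $h''\ge -L$), which is exactly what makes $g''=L-h''\ge 0$ and hence the convexity argument work, and that $\ell^*$ must be a global infimum for $h(t)\ge\ell^*$ to hold along the whole line. Translating back via $h'(0)=\lambda^\top\nabla\rho(\theta)$ and $h(0)=\rho(\theta)$ then yields \eqref{eqn:final_softmax_pinskers} at once.
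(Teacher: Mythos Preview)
Your proposal is correct and takes essentially the same approach as the paper: both restrict $\ell$ to the line $\theta+t\lambda$, establish the quadratic upper bound $h(t)\le h(0)+h'(0)t+\tfrac{L}{2}t^2$ from the directional Hessian bound, and then plug in the optimal step $t=-h'(0)/L$ together with $h(t)\ge\ell^*$ to obtain the inequality. Your version is slightly more careful than the paper's (you prove the majorization via convexity of $q-h$ rather than citing Taylor with Lagrange remainder, and you handle the $L=0$ case separately, which the paper's choice $\eta=\lambda^\top\nabla\ell(\theta)/L$ tacitly assumes away), but the argument is the same.
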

	\begin{proof}
		This is a variant of a classical result used in optimization and we prove it here for completeness.
		For any $\eta\in\R$ we have
		\begin{align*}
			\ell(\theta) - \ell^*
			&\ge^{(a)} \ell(\theta) - \ell(\theta - \eta\lambda)\\
			&\ge^{(b)} \ell(\theta) - \left(\ell(\theta) + \langle \nabla\ell(\theta), -\eta\lambda\rangle + \frac{\eta^2}{2}\lambda^\top\nabla^2\ell(\tilde{\theta})\lambda\right)\\
			&\ge^{(c)} \eta(\lambda^\top\nabla\ell(\theta)) - \frac{\eta^2L}{2}
		\end{align*}
		where $(a)$ follows from the definition of infimum and $(b)$ follows from Taylor's expansion for some $\tilde{\theta}\in[\theta-\eta\lambda, \theta]$ and $(c)$ follows from the smoothness condition in the statement of the lemma.
		Picking $\eta = \frac{\lambda^\top\nabla\ell(\theta)}{L}$ gives us $\ell(\theta) - \ell^* \ge \frac{1}{2L} |\lambda^\top\nabla\ell(\theta)|^2$, thus completing the proof.
	\end{proof}

\begin{lemma}\label{lem:error_cov_upper_bound}
	For a language model $\{\lmp\}$ and classifier $\vv\in\R^V$,
	\begin{align*}
		\vv^\top\Sigma_{\pstar}(\Delta_{\{\lmp\}})\vv
		\le 2\|\vv\|^2_{\infty} \left(\lxent(\{\lmp\}) - \lxent^*\right)
	\end{align*}
	where $\Sigma_{\pstar}(g) = \ex_{s\sim\pstar}[g(s)g(s)^\top]$ and $\Delta_{\{\lmp\}}(s) = \lmp - \truep$ are defined in \Secref{asec:gamma}
\end{lemma}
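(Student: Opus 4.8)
The plan is to unfold the definition of the uncentered covariance, reduce to a pointwise (per-context) inequality via Pinsker, and then average back over $\pstar$.

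First I would write, using $\Sigma_{\pstar}(g) = \ex_{s\sim\pstar}[g(s)g(s)^\top]$ with $g = \Delta_{\{\lmp\}}$, that
\begin{align*}
	\vv^\top\Sigma_{\pstar}(\Delta_{\{\lmp\}})\vv
	= \ex_{s\sim\pstar}\left[\left(\vv^\top(\lmp - \truep)\right)^2\right].
\end{align*}
So it suffices to control $(\vv^\top(\lmp - \truep))^2$ for each fixed context $s$. Here I would invoke Pinsker's inequality in the form of \Lemref{lem:pinskers}, applied to the discrete distributions $\plmp$ and $\ptruep$ on $\gW$: scaling $\vv$ by $\|\vv\|_\infty^{-1}$ so that $\|\vv/\|\vv\|_\infty\|_\infty \le 1$ gives $|\vv^\top(\lmp - \truep)| \le \|\vv\|_\infty \sqrt{2\KL(\ptruep,\plmp)}$, hence $(\vv^\top(\lmp - \truep))^2 \le 2\|\vv\|_\infty^2\, \KL(\ptruep,\plmp)$.

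Next I would identify $\KL(\ptruep,\plmp)$ with the per-context cross-entropy gap. Since $\lxents(\lmp) = \ex_{w\sim\ptruep}[-\log \lmp(w)] = \KL(\ptruep,\plmp) + H(\ptruep)$ and $\lxents(\truep) = \ex_{w\sim\ptruep}[-\log \truep(w)] = H(\ptruep)$, we get $\KL(\ptruep,\plmp) = \lxents(\lmp) - \lxents(\truep)$ (this is exactly the identity already noted in \Secref{subsec:unconstrained_lm} just before \Propref{prop:unconstrained_lm}). Substituting and taking $\ex_{s\sim\pstar}$ of both sides, using linearity of expectation and $\lxent(\{\lmp\}) = \ex_{s\sim\pstar}[\lxents(\lmp)]$ together with $\lxent^* = \lxent(\{\truep\}) = \ex_{s\sim\pstar}[\lxents(\truep)]$ from \Eqref{eqn:l_star_xent}, yields
\begin{align*}
	\vv^\top\Sigma_{\pstar}(\Delta_{\{\lmp\}})\vv
	\le 2\|\vv\|_\infty^2\, \ex_{s\sim\pstar}\left[\lxents(\lmp) - \lxents(\truep)\right]
	= 2\|\vv\|_\infty^2\left(\lxent(\{\lmp\}) - \lxent^*\right),
\end{align*}
which is the claim.

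There is no real obstacle here: the argument is a two-line composition of Pinsker's inequality with the standard cross-entropy/KL decomposition, then linearity of expectation. The only points requiring a word of care are (i) that the pointwise bound holds for \emph{every} $s$, including those outside $\mathrm{support}(\pstar)$ — but since we integrate against $\pstar$ this is irrelevant — and (ii) that $\KL(\ptruep,\plmp)$ is well-defined and nonnegative, which it is whenever $\lmp$ is a genuine probability vector, so that the expectation $\ex_{s\sim\pstar}[\lxents(\lmp)-\lxents(\truep)]$ is legitimately equal to the difference $\lxent(\{\lmp\})-\lxent^*$.
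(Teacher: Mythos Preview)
Your proof is correct and follows essentially the same approach as the paper: expand the covariance, apply Pinsker's inequality (\Lemref{lem:pinskers}) pointwise to bound $(\vv^\top(\lmp-\truep))^2$ by $2\|\vv\|_\infty^2\,\KL(\ptruep,\plmp)$, identify the KL with the per-context cross-entropy gap, and average over $\pstar$. The only cosmetic difference is that you spell out the entropy decomposition and the scaling of $\vv$ explicitly, whereas the paper absorbs these into a one-line citation of \Lemref{lem:pinskers}.
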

\begin{proof}
	We first note that
	\begin{align}
		\lxent(\{\lmp\}) - \lxent(\{\truep\}) = \ex_{s\sim\pstar} \ex_{w\sim\ptruep}\left[\log\left(\frac{\truep(w)}{\lmp(w)}\right)\right] = \ex_{s\sim\pstar} \left[\KL(\truep, \lmp)\right]\label{eqn:some_eq0}
	\end{align}
	
	We bound $\vv^\top\Sigma_{\pstar}(\Delta_{\{\lmp\}})\vv$ below
	\begin{align*}
		\vv^\top\Sigma_{\pstar}(\Delta_{\{\lmp\}})\vv
		&= \ex_{s\sim\pstar} \left[\left(\vv^\top(\lmp - \truep)\right)^2\right]\\
		&\le^{(a)} \|\vv\|^2_{\infty} \ex_{s\sim\pstar} \left[2\KL(\truep, \lmp)\right]\\
		&=^{(b)} 2\|\vv\|^2_{\infty} \left(\lxent(\{\lmp\}) - \lxent(\{\truep\})\right)
	\end{align*}
	where $(a)$ follows from \Lemref{lem:pinskers} (Pinsker's inequality), $(b)$ uses \Eqref{eqn:some_eq0}.
\end{proof}

\begin{lemma}\label{lem:error_phi_cov_upper_bound}
	For a fixed $\Phi$, a softmax language model with features $\embfn$ and $\lambda\in\R^d$,
	\begin{align*}
		\lambda^\top\Sigma_{\pstar}(\Phi\Delta_{\{\smp\}})\lambda
		\le 2\|\Phi^\top\lambda\|^2_{\infty} \left(\lxent(\embfn, \Phi) - \lxent^*(\Phi)\right)
	\end{align*}
	where $\Sigma_{\pstar}(\Phi\Delta_{\{\smp\}}) = \ex_{s\sim\pstar}\left[(\Phi \smp - \Phi\truep)(\Phi \smp - \Phi\truep)^\top\right]$ as defined in \Secref{asec:gamma}.
\end{lemma}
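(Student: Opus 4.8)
The plan is to follow the proof of \Lemref{lem:error_cov_upper_bound} almost verbatim, with ordinary Pinsker's inequality replaced by its softmax counterpart \Lemref{lem:softmax_pinskers}, so that the suboptimality on the right-hand side is measured against the best $d$-dimensional softmax model (i.e.\ against $\lxent^*(\Phi)$) rather than against $\truep$. First I would unpack the quadratic form as $\lambda^\top\Sigma_{\pstar}(\Phi\Delta_{\{\smp\}})\lambda = \ex_{s\sim\pstar}[(\vv^\top(\smp-\truep))^2]$, where $\vv \coloneqq \Phi^\top\lambda \in \textrm{row-span}(\Phi)$, which reduces the claim to a pointwise-in-$s$ estimate.

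Fix a context $s$ and assume $\vv \neq 0$ (otherwise both sides vanish). I would apply \Lemref{lem:softmax_pinskers} to the rescaled vector $\vv/\|\vv\|_\infty$, which is still of the form $\Phi^\top(\lambda/\|\vv\|_\infty)$ and has unit $\ell_\infty$ norm, taking the target distribution to be $\ptruep$ and $\theta = \embfn(s)$. Since the convex function $\rho$ appearing in \Lemref{lem:softmax_pinskers}, with target $\ptruep$, is exactly the per-context cross-entropy $\lxents(\theta,\Phi)$, this yields $|\vv^\top(\smp-\truep)| \le \|\vv\|_\infty \sqrt{2(\lxents(\embfn(s),\Phi) - \inf_{\theta^*\in\R^d}\lxents(\theta^*,\Phi))}$. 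Squaring gives the desired pointwise inequality $(\vv^\top(\smp-\truep))^2 \le 2\|\vv\|_\infty^2(\lxents(\embfn(s),\Phi) - \inf_{\theta^*\in\R^d}\lxents(\theta^*,\Phi))$.

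Finally I would take $\ex_{s\sim\pstar}[\,\cdot\,]$ of the squared inequality: the left side becomes $\lambda^\top\Sigma_{\pstar}(\Phi\Delta_{\{\smp\}})\lambda$, and by linearity of expectation the right side becomes $2\|\vv\|_\infty^2(\ex_{s\sim\pstar}[\lxents(\embfn(s),\Phi)] - \ex_{s\sim\pstar}[\inf_{\theta^*}\lxents(\theta^*,\Phi)]) = 2\|\Phi^\top\lambda\|_\infty^2(\lxent(\embfn,\Phi) - \lxent^*(\Phi))$, using the definition of $\lxent^*(\Phi)$ in \Eqref{eqn:l_star_xent}. The one step needing care is precisely this last identification: \Lemref{lem:softmax_pinskers} is invoked separately for each $s$ with its own context-optimal $\theta^*_s$, and one must observe that averaging the per-context infima produces $\lxent^*(\Phi)$ and not the global minimum $\lxent^*$ — which is exactly the reason the softmax variant of Pinsker, rather than the plain version used in \Lemref{lem:error_cov_upper_bound}, is needed here. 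Everything else (the $\|\cdot\|_\infty$-rescaling, staying in the row span of $\Phi$, and matching $\rho$ with $\lxents$) is routine.
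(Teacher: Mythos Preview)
The proposal is correct and follows essentially the same route as the paper: unpack the quadratic form as $\ex_{s\sim\pstar}[((\Phi^\top\lambda)^\top(\smp-\truep))^2]$, apply the softmax variant of Pinsker's inequality (\Lemref{lem:softmax_pinskers}) pointwise in $s$, and then average, using the identity $\lxent^*(\Phi)=\ex_{s\sim\pstar}[\inf_{\theta}\lxents(\theta,\Phi)]$ from \Eqref{eqn:l_star_xent}. Your explicit handling of the $\ell_\infty$-rescaling and of why the per-context infima aggregate to $\lxent^*(\Phi)$ rather than $\lxent^*$ matches the paper's reasoning exactly.
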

\begin{proof}
We start by nothing that
\begin{align*}
	\lambda^\top\Sigma_{\pstar}(\Phi\Delta_{\{\smp\}})\lambda
	&= \lambda^\top\ex_{s\sim\pstar}\left[(\Phi \smp - \Phi\truep)(\Phi \smp - \Phi\truep)^\top\right] \lambda\\
	&= \ex_{s\sim\pstar}[|\lambda^\top(\Phi \smp - \Phi\truep)|^2]
	= \ex_{s\sim\pstar}[|(\Phi^\top\lambda)^\top(\smp - \truep)|^2]
\end{align*}
We will use the variant of Pinsker's inequality from \Lemref{lem:softmax_pinskers} to bound each term on the right hand side.
Notice that $\lxent(\embfn, \Phi) - \lxent^*(\Phi) = \ex_{s\sim\pstar} [\lxents(\embfn(s), \Phi) - \inf\limits_{\theta\in\R^d} \lxents(\theta, \Phi)]$.
\begin{align*}
	\lambda^\top\Sigma_{\pstar}(\Phi\Delta_{\{\smp\}})\lambda
	&= \ex_{s\sim\pstar}[|(\Phi^\top\lambda)^\top(\smp - \truep)|^2]\\
	&\le^{(a)} 2\|\Phi^\top\lambda\|^2_{\infty}\ex_{s\sim\pstar}\left[\KL(\truep, \smphip) - \inf\limits_{\theta\in\R^d} \KL(\truep, \thetaphip)\right]\\
	&\le 2\|\Phi^\top\lambda\|^2_{\infty}\ex_{s\sim\pstar}\left[\lxents(\embfn(s), \Phi) - \inf\limits_{\theta\in\R^d} \lxents(\theta, \Phi)\right]\\
	&\le 2\|\Phi^\top\lambda\|^2_{\infty}\left(\lxent(\embfn, \Phi) - \lxent^*(\Phi)\right)
\end{align*}
where $(a)$ follows from \Lemref{lem:softmax_pinskers}.
This completes the proof.
\end{proof}

\subsubsection{Classification loss to covariance of error}

\begin{lemma}\label{lem:lclf_upper_bound}
	For any task $\gT$ and classifier $\vv\in\R^V$ and predicted probabilities $\{\lmp\}$
	\begin{align*}
		\lclf(\{\lmp\}, \vv)
		&\le \lclf(\{\truep\}, \vv) + \sqrt{\ex_{s\sim\pT}\left[(\vv^\top(\lmp - \truep))^2\right]}\\
		&= \lclf(\{\truep\}, \vv) + \sqrt{\vv^\top\Sigma_{\pT}(\Delta_{\{\lmp\}})\vv}
	\end{align*}
	where $\Sigma_{\pT}(g) = \ex_{s\sim\pT}[g(s)g(s)^\top]$ and $\Delta_{\{\lmp\}}(s) = \lmp - \truep$ are defined in \Secref{asec:gamma}.
\end{lemma}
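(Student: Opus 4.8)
The plan is to peel off the three ingredients advertised in the proof sketch of \Secref{asubsec:proof_sketch}: Lipschitzness of the surrogate loss $\ell$, Jensen's inequality, and a rewriting of the resulting second moment as a quadratic form in $\vv$. First I would write out the difference of classification losses directly from the definition in \Eqref{eqn:clf_loss}:
\begin{align*}
	\lclf(\{\lmp\}, \vv) - \lclf(\{\truep\}, \vv)
	= \E_{(s,y)\sim\pT}\left[\ell(\vv^\top\lmp, y) - \ell(\vv^\top\truep, y)\right].
\end{align*}
Since $\ell$ is $1$-Lipschitz in its first argument (as assumed right after \Eqref{eqn:clf_loss}, e.g. hinge or logistic loss), for every fixed $(s,y)$ we have $\ell(\vv^\top\lmp, y) - \ell(\vv^\top\truep, y) \le |\vv^\top\lmp - \vv^\top\truep| = |\vv^\top(\lmp - \truep)|$. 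Taking expectations gives
\begin{align*}
	\lclf(\{\lmp\}, \vv) - \lclf(\{\truep\}, \vv) \le \E_{s\sim\pT}\left[\,\bigl|\vv^\top(\lmp - \truep)\bigr|\,\right].
\end{align*}

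Next I would apply Jensen's inequality to the concave function $\sqrt{\cdot}$ (equivalently, Cauchy--Schwarz with the constant function $1$), giving $\E_{s\sim\pT}[\,|\vv^\top(\lmp - \truep)|\,] \le \sqrt{\E_{s\sim\pT}[(\vv^\top(\lmp - \truep))^2]}$. Rearranging the two displays yields the first claimed inequality $\lclf(\{\lmp\}, \vv) \le \lclf(\{\truep\}, \vv) + \sqrt{\E_{s\sim\pT}[(\vv^\top(\lmp - \truep))^2]}$. For the final equality I would simply note $(\vv^\top(\lmp - \truep))^2 = \vv^\top(\lmp - \truep)(\lmp - \truep)^\top\vv$ and pull $\vv$ out of the expectation, so that by the definitions $\Delta_{\{\lmp\}}(s) = \lmp - \truep$ and $\Sigma_{\pT}(g) = \E_{s\sim\pT}[g(s)g(s)^\top]$ from \Secref{asec:gamma} we get $\E_{s\sim\pT}[(\vv^\top(\lmp - \truep))^2] = \vv^\top\Sigma_{\pT}(\Delta_{\{\lmp\}})\vv$.

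There is essentially no hard step here; this lemma is the elementary ``$\bm{\alpha_1}$'' piece of the decomposition in \Eqref{eqn:decomposition}, and the only things to be careful about are (i) using Lipschitzness in the correct argument of $\ell$ and keeping the inequality one-directional, and (ii) making sure the expectation in $\lclf$ is over $(s,y)\sim\pT$ while the error term only depends on $s$, which is fine since $\vv^\top(\lmp-\truep)$ does not involve $y$. The genuinely substantive work (Pinsker-type control of $\vv^\top\Sigma_{\pstar}(\Delta_{\{\lmp\}})\vv$ by cross-entropy suboptimality, and the transfer from $\pT$ to $\pstar$) is deferred to \Lemref{lem:error_cov_upper_bound}, \Lemref{lem:softmax_pinskers}, and \Lemref{lem:transfer}, not to this statement.
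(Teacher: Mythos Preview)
Your proposal is correct and follows essentially the same approach as the paper: the paper's proof is a one-line chain that applies $1$-Lipschitzness of $\ell$ to get $\ell(\vv^\top\lmp,y)\le \ell(\vv^\top\truep,y)+|\vv^\top(\lmp-\truep)|$, then Jensen's inequality to pass from $\E|X|$ to $\sqrt{\E X^2}$, and finally rewrites the second moment as $\vv^\top\Sigma_{\pT}(\Delta_{\{\lmp\}})\vv$. Your write-up matches this step for step.
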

\begin{proof}
The following sequence of inequalities proves it
\begin{align*}
	\lclf(\{\lmp\}, \vv)
	&= \ex_{(s,y)\sim \pT} \left[\ell(\vv^\top \lmp, y)\right]
	\le^{(a)} \ex_{(s,y)\sim \pT} \left[\ell(\vv^\top \truep, y) + |\vv^\top(\truep - \lmp)|\right]\\
	&\le^{(b)} \ex_{(s,y)\sim \pT} \left[\ell(\vv^\top \truep, y)\right] + \sqrt{\ex_{s\sim \pT} \left[\left|\vv^\top(\truep - \lmp)\right|^2\right]}\\
	&= \lclf(\{\truep\}, \vv) + \sqrt{\vv^\top\left(\ex_{s\sim \pT}\left[(\truep - \lmp)(\truep - \lmp)^\top\right]\right)\vv}\\
	&= \lclf(\{\truep\}, \vv) + \sqrt{\vv^\top\Sigma_{\pT}(\Delta_{\{\lmp\}})\vv}
\end{align*}
where $(a)$ follows from 1-lipschitzness of $\ell$, $(b)$ follows from Jensen's inequality.
\end{proof}

\subsubsection{Handling distribution shift}

\begin{lemma}\label{lem:gamma_bound}
	For any $g:\gS\rightarrow\R^D$ and $\pT\in\Delta_{\gS}$, we have $\|\Sigma_{\pstar}(g)^{-\half} \Sigma_{\pT}(g) \Sigma_{\pstar}(g)^{-\half}\|_2 \le \gamma(\pT)^{-1}$
\end{lemma}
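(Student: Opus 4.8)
The plan is to reduce the spectral-norm bound to a Loewner (PSD) inequality between the two uncentered covariances, which itself is an immediate consequence of the pointwise density domination built into the definition of $\gamma(\pT)$.

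First I would record the elementary observation that $\pstar(s)\ge \gamma(\pT)\,\pT(s)$ holds for \emph{every} $s\in\gS$: in the definition of $\gamma(\pT)$ the feasible set of $\gamma$ is an interval of the form $(0,\gamma(\pT)]$ (the constraint is vacuous when $\pT(s)=0$ and reads $\gamma\le\pstar(s)/\pT(s)$ otherwise), so the inequality survives at the supremum value $\gamma=\gamma(\pT)$ even when the infimum over $s$ is not attained. If $\gamma(\pT)=0$ the claimed bound is vacuous, so I may assume $\gamma(\pT)>0$; this additionally forces $\mathrm{supp}(\pT)\subseteq\mathrm{supp}(\pstar)$. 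Then for any $\vv\in\R^D$, writing both quadratic forms as expectations and applying the pointwise inequality termwise gives $\vv^\top\Sigma_{\pstar}(g)\vv=\ex_{s\sim\pstar}[(\vv^\top g(s))^2]\ge\gamma(\pT)\,\ex_{s\sim\pT}[(\vv^\top g(s))^2]=\gamma(\pT)\,\vv^\top\Sigma_{\pT}(g)\vv$, i.e.\ $\Sigma_{\pstar}(g)\psd\gamma(\pT)\,\Sigma_{\pT}(g)$ (with the sums replaced by convergent series when $\gS$ is countably infinite; finiteness of $\Sigma_{\pstar}(g)$ implies that of $\Sigma_{\pT}(g)$ because $\gamma(\pT)>0$).

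To finish I would conjugate this inequality by $\Sigma_{\pstar}(g)^{-\half}$: letting $M\coloneqq\Sigma_{\pstar}(g)^{-\half}\Sigma_{\pT}(g)\Sigma_{\pstar}(g)^{-\half}$, we obtain $\gamma(\pT)\,M\psd 0$ and $\gamma(\pT)\,M\preceq\Sigma_{\pstar}(g)^{-\half}\Sigma_{\pstar}(g)\Sigma_{\pstar}(g)^{-\half}=\Pi$, where $\Pi$ is the orthogonal projection onto $\mathrm{range}(\Sigma_{\pstar}(g))$; since $\Pi\preceq I$, the largest eigenvalue of $M$ is at most $\gamma(\pT)^{-1}$, which is exactly the assertion $\|M\|_2\le\gamma(\pT)^{-1}$. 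The one point needing a touch of care — and the closest thing to an obstacle in an otherwise short argument — is that $\Sigma_{\pstar}(g)$ may be singular: there $\Sigma_{\pstar}(g)^{-\half}$ should be read as the Moore--Penrose pseudo-inverse, and one checks that the Loewner inequality $\Sigma_{\pstar}(g)\psd\gamma(\pT)\Sigma_{\pT}(g)$ itself forces $\mathrm{range}(\Sigma_{\pT}(g))\subseteq\mathrm{range}(\Sigma_{\pstar}(g))$ (equivalently, any $\vv\in\ker\Sigma_{\pstar}(g)$ annihilates $g(s)$ for all $s\in\mathrm{supp}(\pstar)\supseteq\mathrm{supp}(\pT)$, hence $\vv\in\ker\Sigma_{\pT}(g)$), so the conjugation loses nothing and the computation goes through on the common range of the two matrices.
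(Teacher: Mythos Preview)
Your proof is correct and follows essentially the same route as the paper: establish the Loewner inequality $\Sigma_{\pstar}(g)\psd\gamma(\pT)\,\Sigma_{\pT}(g)$ from the pointwise density bound, then conjugate by $\Sigma_{\pstar}(g)^{-\half}$ to obtain the spectral-norm bound. Your treatment is in fact more careful than the paper's on two technical points the paper glosses over---the attainment of the supremum in the definition of $\gamma(\pT)$ and the possible singularity of $\Sigma_{\pstar}(g)$ (handled via the pseudo-inverse and the range-containment argument)---but the underlying argument is the same.
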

\begin{proof}
	By definition of $\gamma(\pT)$, we have that
	\begin{align*}
		\Sigma_{\pstar}(g)
		&= \ex_{s\sim\pstar} [g(s)g(s)^\top]
		= \sum_{s\in\gS} \pstar(s) g(s)g(s)^\top\\
		&\psd \gamma(\pT) \sum_{s\in\gS} \pT(s) g(s)g(s)^\top
		= \gamma(\pT) \ex_{s\sim\pT} [g(s)g(s)^\top]
		= \gamma(\pT) \Sigma_{\pT}(g)
	\end{align*}
	Thus $\frac{1}{\gamma(\pT)}\Sigma_{\pstar}(g) \psd \Sigma_{\pT}(g)$ and hence $\frac{1}{\gamma(\pT)}\Sigma_{\pstar}(g)^{-\half}\Sigma_{\pstar}(g)\Sigma_{\pstar}(g)^{-\half} \psd \Sigma_{\pstar}(g)^{-\half}\Sigma_{\pT}(g)\Sigma_{\pstar}(g)^{-\half}$, which is equivalent to $\frac{1}{\gamma(\pT)} I_D \psd \Sigma_{\pstar}(g)^{-\half}\Sigma_{\pT}(g)\Sigma_{\pstar}(g)^{-\half}$.
	This finishes the proof.
\end{proof}

\begin{lemma}\label{lem:transfer}
	For matrices $\mX,\mY\in\R^{D\times D}$ s.t. $\mX,\mY\psd 0$ and $\mY$ is full rank, we have that $\max\limits_{\va\in\R^D,0<\|\va\| \le \lambda} \frac{\va^\top \mX \va}{\va^\top \mY \va} = \|\mY^{-\half} \mX \mY^{-\half}\|_2$ for any norm $\|\cdot\|$.
\end{lemma}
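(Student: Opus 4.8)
The plan is to exploit that the quotient $q(\va) \coloneqq \dfrac{\va^\top\mX\va}{\va^\top\mY\va}$ is well-defined on all of $\R^D\setminus\{\vzero\}$ --- the denominator is strictly positive because $\mY\psd 0$ is full rank --- and is invariant under rescaling $\va\mapsto c\va$ for any $c\neq 0$.

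First I would dispose of the norm constraint. Since $\lambda>0$, every nonzero $\va$ can be scaled into the feasible set $\{0<\|\va\|\le\lambda\}$ without changing $q(\va)$, so the constrained maximum equals $\sup_{\va\neq\vzero} q(\va)$ --- in particular it does not depend on which norm $\|\cdot\|$ or which $\lambda$ we picked. By the same homogeneity this supremum equals the supremum of $q$ over the Euclidean unit sphere, on which $q$ is continuous and the sphere is compact, so the supremum is attained and writing "$\max$" is justified.

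Next I would change variables. Because $\mY\psd 0$ is full rank, $\mY^{\half}$ is invertible; setting $\vb=\mY^{\half}\va$ gives $\va=\mY^{-\half}\vb$, a bijection of $\R^D\setminus\{\vzero\}$ onto itself. Then $\va^\top\mY\va=\|\vb\|_2^2$ and $\va^\top\mX\va=\vb^\top\mM\vb$ with $\mM\coloneqq\mY^{-\half}\mX\mY^{-\half}$, so
\begin{align*}
\sup_{\va\neq\vzero}\frac{\va^\top\mX\va}{\va^\top\mY\va}=\sup_{\vb\neq\vzero}\frac{\vb^\top\mM\vb}{\vb^\top\vb}.
\end{align*}

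Finally I would observe that $\mM$ is symmetric and $\psd 0$ (a congruence transform of $\mX\psd 0$), so the right-hand side is exactly its largest eigenvalue by the Rayleigh quotient characterization, and for a symmetric positive semidefinite matrix the largest eigenvalue equals the spectral norm, i.e. $\lambda_{\max}(\mM)=\|\mM\|_2=\|\mY^{-\half}\mX\mY^{-\half}\|_2$. Chaining the three displays yields the claim. There is essentially no obstacle in this argument; the only point requiring a little care is the first step --- recognizing that the $\|\cdot\|\le\lambda$ constraint is vacuous by homogeneity and that the optimum is genuinely attained --- after which everything is a routine reduction to the Rayleigh quotient.
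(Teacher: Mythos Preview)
Your proposal is correct and follows essentially the same approach as the paper: both use homogeneity of the quotient to drop the norm constraint, then the change of variables $\vb=\mY^{\half}\va$ to reduce to the Rayleigh quotient of $\mY^{-\half}\mX\mY^{-\half}$, whose maximum is its spectral norm. Your version is slightly more careful in justifying attainment via compactness and in noting that $\mM$ is symmetric PSD, but the argument is the same.
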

\begin{proof}
	Note that $\frac{\va^\top \mX \va}{\va^\top \mY \va}$ is independent of the scaling of $\va$.
	The following sequence of inequalities completes the proof
	\begin{align*}
		\max\limits_{\va\in\R^D,0<\|\va\| \le \lambda} \frac{\va^\top \mX \va}{\va^\top \mY \va}
		&= \max\limits_{\va\in\R^D} \frac{\va^\top \mX \va}{\va^\top \mY \va}
		= \max\limits_{\va\in\R^D} \frac{\va^\top \mX \va}{(\mY^{\half}\va)^\top(\mY^{\half}\va)}\\
		&= \max\limits_{\va\in\R^D, \|\mY^{\half}\va\|_2=1} \va^\top \mX \va
		= \max\limits_{\vb\in\R^D, \|\vb\|_2=1} (\mY^{-\half}\vb)^\top \mX (\mY^{-\half}\vb)\\
		&= \max\limits_{\vb\in\R^D, \|\vb\|_2=1} \vb^\top \mY^{-\half}\mX\mY^{-\half}\vb
		= \|\mY^{-\half}\mX\mY^{-\half}\|_2
	\end{align*}
\end{proof}

\section{Experiment Details}\label{asec:exps}

For all experiments\footnote{Link to code: {\scriptsize \url{https://github.com/sadhikamalladi/mathematical-exploration-downstream-tasks}}.}, we use the 117M parameter ``small'' GPT-2 model proposed in \cite{radford2019language} and implemented in HuggingFace \citep{wolf2019huggingface}.
Linear classification experiments (except for fine-tuning baseline in \Tableref{tab:zeroshot}) are performed on {\em fixed} output features from GPT-2.

We note that the binary SST-2 dataset used in all experiments is comprised of complete sentences, and there are 6,920 train examples and 1,821 test examples. 
In particular, this dataset is smaller than the version included with the GLUE benchmark \citep{wang2018glue}.
This smaller version of SST-2 better fits the sentence completion hypothesis we propose.

\subsection{Solving downstream tasks using $\embfn$ and $\Phi p_{\embfn}$}\label{asubsec:f_Phi_pf}
The features $\embfn$ from GPT-2 for any input sequence $(w_1,\dots,w_N)$ is the output embedding of the final token $w_N$ at the final layer, where $N$ is the input length and can be different for different inputs.
This is also the embedding that is directly multiplied by the word embeddings to get the softmax distribution for language modeling, as in the theoretical setting.
To use a prompt, the same prompt is added at the end of all inputs and the features are extracted for this modified input.

We use the \texttt{LogisticRegressionCV} class from the scikit-learn package to fit linear classifiers to all fixed features (i.e., no finetuning). We use the liblinear solver and one-vs-rest loss function unless it catastrophically fails (e.g., close to random performance) on a particular multi-class task. In that case, we use the stochastic average gradient (SAG) algorithm with multinomial loss. We use 5-fold cross validation for all experiments and test values for the regularization parameter $C$ between $1\mathrm{e}{-6}$ and $1\mathrm{e}{4}$ for small datasets (i.e., fewer than 10K examples) and between $1\mathrm{e}{-3}$ and $1\mathrm{e}{3}$ for larger datasets.

\paragraph{Details about word subsets:}
For all of the results presented in Table \ref{tab:zeroshot}, we use a pre-trained GPT-2 model. For SST, we use the prompt ``This movie is '' when indicated. For AG News, we use the prompt ``This article is about '' when indicated. 

We compute the conditional probability of selecting a subset of words to complete the sentence. For AG News, this subset is: 'world',  'politics', 'sports', 'business', 'science', 'financial',  'market', 'foreign', 'technology', 'international', 'stock', 'company', 'tech', 'technologies'. For SST, this subset is: ':)', ':(', 'great', 'charming', 'flawed', 'classic', 'interesting', 'boring', 'sad', 'happy', 'terrible', 'fantastic', 'exciting', 'strong'.
For AG News, the class words we use are: 'foreign', 'sports', 'financial', 'scientific'. For SST, the class words we use are `:)' and `:('. 

We account for BPE tokenization by using the encoding of the word directly and the encoding of the word with a space prepended. We then filter to use only words that encode to a single BPE token.

\paragraph{Tests on additional datasets:}
We also test the performance of pre-trained GPT-2 embeddings $\embfn$ and the conditional mean embeddings $\Phi p_{\embfn}$ on the DBPedia \citep{auer2007dbpedia}, Yahoo Answers \citep{zhang2015character}, TREC \citep{li2002learning}, IMDb \citep{maas2011learning}, Customer Review (CR) \citep{hu2004mining}, and MPQA polarity \citep{wilson2003annotating} datasets in Table \ref{tab:appendix_zeroshot}. We limited the training set size to 250K for larger datasets (i.e., DBPedia and Yahoo Answers). For CR and MPQA, we follow \cite{zhang2015character} and average the performance across 10 random 90-10 train-test splits of the dataset.

We find that $\phip{\embfn}$ consistently has comparable performance to $\embfn$ across non-sentiment and sentiment downstream classification tasks. We include baseline results of bag of $n$-grams (BonG) for most tasks and the mLSTM model \citep{radford2017learning} for sentiment tasks. 
BonG performs quite well on the larger datasets, but not as well on smaller datasets, due to the high dimensionality of features.

For sentiment tasks, adding a prompt almost always boosts performance. We also demonstrate that much of the performance can be recovered by only looking at ``positive'' and ``negative'' or ``:)'' and ``:('' as class words. Using these 2-dimensional features is even more sample-efficient than the standard 768-dimensional ones.

We also include results using the pre-trained BERT base cased model \citep{devlin2018bert, wolf2019huggingface}, using the embedding at the first token as input to the downstream task. We also tried using the mean embedding and last token embedding and found that the first token embedding is often the best. Moreover, the first token embedding is what is extracted in the traditional usage of BERT on downstream tasks, though we note that it is rare to use BERT without fine-tuning.

\begin{table}[t] 
  \caption{\label{tab:appendix_zeroshot}GPT-2 performance without fine-tuning on downstream task test sets with $k$ classes. We provide the performance of bag of $n$-grams (BonG) as an approximate baseline for these tasks. AG News, DBPedia and Yahoo performances were reported in \cite{zhang2015character}, and the other tasks were reported in \cite{khodak2018a}. We also include results from mLSTM (Sentiment Neuron) \citep{radford2017learning} for the sentiment-related classification tasks (SST, IMDb, CR, and MPQA) with numbers reported from \cite{khodak2018a}. Furthermore, we include results for BERT \citep{devlin2018bert} features without fine-tuning, where we use the output features for the first position of an input for linear classification. An asterisk indicates we add a standard sentiment prompt ``The sentiment is'' to each input, but for AG News we used the prompt ``This article is about''. We also tested the performance of the conditional probability distribution over ``positive'' and ``negative'' as well as ``:)'' and ``:('' on the sentiment-related tasks with and without the prompt.
  }
  \setlength{\tabcolsep}{3pt}
  \centering
  \begin{tabular}{lc|ccccccc}
    \toprule
    Task & $k$ & $\embfn(s)$ & $\phip{\embfn}(s)$ & $\lmp$: pos,neg & $\lmp$: :),:( & BonG & mLSTM & BERT\\
    \midrule
    \textit{Non-sentiment} \\
    \midrule
    \hspace{3mm}AG News			& 4   & 90.7   & 84.6  &-&- & 92.4 $(n=5)$ & - & 88.9 \\
    \hspace{3mm}AG News*			& 4   & 91.1   & 88.2  &-&- & - & - & 89.9 \\
    \hspace{3mm}DBPedia  		& 14  & 97.2 & 88.2 &-&- & 98.6 $(n=5)$ & - & 98.7 \\ 
    \hspace{3mm}Yahoo 			& 10 & 69.2 & 56.7 &-&-  & 68.5 $(n=5)$ & - & 65.0 \\
    \hspace{3mm}TREC 			& 6 &  93.6 & 87.8 &-&-	  & 89.8 $(n=3)$ & - & 90.6 \\
    \midrule
    \textit{Sentiment} \\
    \midrule
    \hspace{3mm}SST			& 2 & 87.5 & 83.3 & 74.9 & 78.7 & 80.9 $(n=2)$         & 91.8 & 85.8 \\
	\hspace{3mm}SST*			& 2 & 89.4 & 87.3 & 80.8 & 79.1 & -         & - & 84.1 \\
	\hspace{3mm}SST fine		& 5 & 49.2 & 43.5 & 37.5 & 39.2 & 42.3 $(n=3)$         & 52.9 & 43.5 \\
	\hspace{3mm}SST fine*	& 5 & 49.4 & 48.0 & 41.5 & 40.2 & -         & - & 43.3 \\
    \hspace{3mm}IMDb			& 2 & 88.1 & 82.7 & 73.8 & 76.2 & 89.8 $(n=3)$ & 92.3 & 82.2 \\
    \hspace{3mm}IMDb*		& - & 88.4 & 85.3 & 81.8 & 80.9 & - & - & 84.0 \\
    \hspace{3mm}CR    		& 2 & 86.8 & 84.6 & 74.9 & 80.0 & 78.3 $(n=3)$ & 91.4 & 85.5 \\
  	\hspace{3mm}CR*			& - & 87.9 & 87.1 & 82.5 & 79.4 & - & - & 84.6 \\ 
    \hspace{3mm}MPQA 		& 2 & 86.0 & 79.2 & 75.6 & 70.7 & 85.6 $(n=3)$ & 88.5 & 87.3 \\
    \hspace{3mm}MPQA* 		& - & 87.8 & 86.1 & 80.3 & 71.4 & - & - & 88.1\\ 
    \bottomrule
  \end{tabular}
\end{table}

\subsection{Finetuning Experiments}\label{asubsec:ft}
As a strong baseline, we finetune the GPT-2 features along with learning a linear classifier for the SST and AG News classification tasks and report accuracy numbers in \Tableref{tab:zeroshot}.
We use a maximum sequence length of 128 BPE tokens for downstream inputs of SST-2 and a maximum length of 400 BPE tokens for AG News inputs. We use the end of sentence token as the padding token. 
The datasets are described below.
\begin{enumerate}
\item AG News has 108K train examples, 12K dev examples, 7600 test examples. We split the train set for AG News into train and dev (90-10) and use the same test set as the non-finetuning experiments.
\item The sentence version of SST-2 has 6,920 train examples (same as non-finetuning), and 810 examples for dev and test each (split the original test set in half). 
\item Fine-grained SST-2 has 8,544 train examples (same as non-finetuning), and 1,105 examples each for the dev and test data (split the original test set in half).
\end{enumerate}


To select the best hyperparameter configuration, we run a grid search over learning rate and batch size. We train each model for 10 epochs. For all datasets, we test learning rates $5\mathrm{e}{-5}$, $1\mathrm{e}{-4}$, and $3\mathrm{e}{-4}$. For both version of SST-2, we try batch sizes 8, 16, and 32, and for AG News, we try batch sizes 8, 12, and 16. We note that the longer sequence length of AG News inputs required us to use parallelization across multiple GPUs to simulate larger batch sizes, which made batch size 32 prohibitively expensive to test.

We take the hyperparameter configuration that achieves the best performance on the dev set and then perform fine-tuning using those settings with three different random seeds: 8, 33, and 42. We then report the average performance on the test set in \Tableref{tab:zeroshot}.

We perform the hyperparameter grid search over the standard datasets and then perform fine-tuning using the best settings on the dataset with task-specific prompts added. For SST-2, we use the prompt ``This movie is '', and for AG News we use ``This article is about ''. 

\subsection{Testing {\objname} objective}\label{asubsec:quad}


\begin{table}[!tb] 
  


  \begin{minipage}{\linewidth}
    \caption{Comparing {\objname} features to cross-entropy features for GPT-2 trained on the IMDb unlabeled corpus \citep{maas2011learning}. In this experiment we fix $\Phi$ to be the word embeddings from prertained GPT-2 model for the cross-entropy objective. For the {\objname} objective, we initialize $\Phi$ to be the SVD of the pre-trained embeddings. An asterisk indicates that we added the prompt ``This movie is '' to each input.}
    \label{tab:compare_obj}
    \vspace{-0.1in}
  \begin{center}
  \begin{tabular}{lccc}
    \toprule
    	Task & $\embfn(s)$ (xent) & $\phip{\embfn}(s)$ (xent) & $\embfn(s)$ (\objname)\\
    \midrule
    
    SST  & 82.1\%  & 79.9\%  & 77.3\% \\ 
    SST*  & 83.1\%   & 81.1\%  & 80.7\% \\
    \bottomrule
  \end{tabular}
  \end{center}
  \end{minipage}
\end{table}

\begin{table}[t] 
  \caption{Comparing {\objname} features to cross-entropy features for GPT-2 trained on the Amazon corpus. An asterisk indicates that we added the prompt ``This movie is '' to each input. Note that the validation loss was still decreasing at the time of measurement.}
  \label{tab:amazon_phip}
  \centering
  \begin{tabular}{lccc}
    \toprule
    Task & $\embfn(s)$ (xent) & $\phip{\embfn}(s)$ (xent) &  $\embfn(s)$ (\objname, learned $\Phi$) \\
    \midrule
    
    SST  & 89.4\%  & 89.7\%  & 79.2\% \\
    SST* & 89.7\% & 89.2\% & 84.3\% \\
    \bottomrule
  \end{tabular}
\end{table}

We test two models with the same parametrizations, one trained using our {\objname} objective and another trained with the standard cross-entropy objective using the unlabeled IMDb corpus \citep{maas2011learning} and the Amazon product review corpus \citep{mcauley2015inferring}.
We slightly modify the standard architecture of GPT-2 to generate Tables \ref{tab:compare_obj} and \ref{tab:amazon_phip}. First we add a single linear layer (that is trained) on top of the output features of the standard Transformer architecture. Furthermore, instead of tying the input and output word (token) embeddings, we learn them separately so that $\embfn$ and $\Phi$ are independent functions; this is more in line with out theoretical setup. We fix the input embeddings and the positional embeddings to be the parameters from the pre-trained GPT-2. 

For Quad, we initialize $\Phi$, the output embeddings, using the singular vectors of the pre-trained word embeddings $\Phi$. For the cross-entropy models, we initialize $\Phi$ to be the full pre-trained word embeddings $\Phi$, because we found that initializing with the singular vectors harmed performance.
Given our parameterization, initializing with the singular vectors is as expressive as initializing with the pretrained embeddings $\Phi$ themselves; however it potentially lends a better optimization landscape and speeds up training for our new objective {\objname}.
As described in \Secref{subsec:quad}, we minimize the following objective
\begin{align}
	\lquad(\embfn, \Phi) = \ex_{(s,w)} \left[- \embfn(s)^\top \phi_w + \frac{1}{2} \|\Phi^\top\embfn(s)\|^2\right]
\end{align}
where $(s,w)$ are sampled from the text corpus.
The implementation of the {\objname} loss is the same as the standard cross-entropy loss, the main difference being the second term: it is $\frac{1}{2} \|\Phi^\top\embfn(s)\|^2$ for {\objname} instead of the log-partition function $\log\left(\sum_{w'}e^{\embfn(s)^\top\phi_{w'}}\right)$ in the cross-entropy objective.

Because IMDb is a smaller dataset, we fix $\Phi$ at its initialization and only train $f$ to generate Table \ref{tab:compare_obj}.
When training on the Amazon dataset, we initialized $\Phi$ the same way as we did for the IMDb dataset, but we allowed $f$ and $\Phi$ to both be trained, since more data was available. 
To train the models, we use the standard learning rate schedule as in  in \citet{radford2019language}. To learn a model on IMDb, we use a context size of 512 BPE tokens, and for the Amazon reviews dataset \citep{mcauley2015inferring}, we use the standard context length of 1,024 BPE tokens. 
 
We observe that training using {\objname}, in both cases, yields comparable performance to the language model on the SST task, but always slightly worse. According to the theory, features $\embfn(s)$ from {\objname} should learn $\truep$ on a subspace, just like $\phip{\embfn}$ from cross-entropy models, thus making the comparison between these two important.
Furthermore, adding a prompt consistently improves performance for both objectives.
While {\objname} did not beat the cross-entropy in either case, its good performs at least demonstrates that insights from the theoretical analysis can translate to practical algorithms.
We leave exploring the gap in performance between {\objname} and cross-entropy and a more extensive evaluation of {\objname} for future work.


\begin{figure}[t]
\centering
	\includegraphics[scale=0.4]{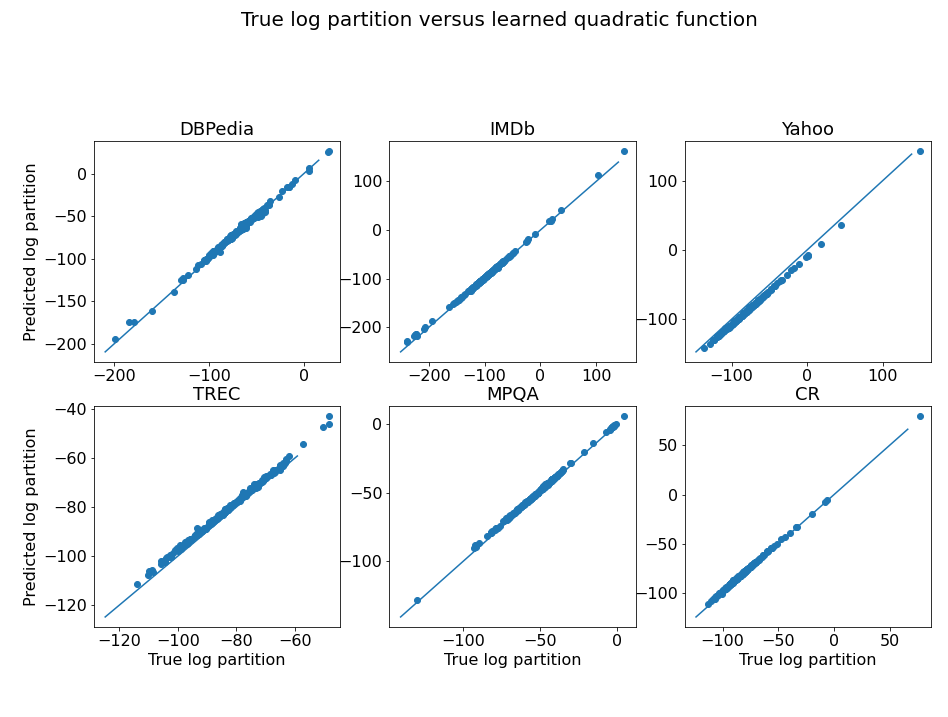}
	\caption{Fit of the learned quadratic function to the log partition function on various datasets for features computed by the full, pre-trained GPT-2. We also plot the $y=x$ line for reference. These plots are meant to verify \Asmpref{asmp:partition_quadratic}.}
	\label{fig:logZ_appendix}
\end{figure}

\subsection{Learning the quadratic approximation of the log-partition function}\label{asubsec:log_partition}

\begin{figure}[!t]
	\centering
	\begin{subfigure}{.45\textwidth}
	\includegraphics[width=\textwidth]{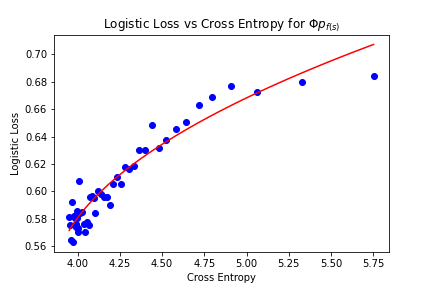}	
	\caption{Trained on IMDb \citep{maas2011learning}}
	\end{subfigure}
	\begin{subfigure}{.45\textwidth}
	\includegraphics[width=\textwidth]{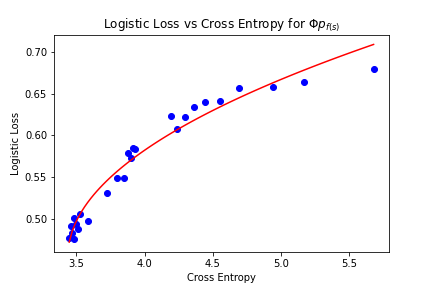}	
	\caption{Trained on Amazon \citep{mcauley2015inferring}}
	\end{subfigure}
	\caption{Logistic loss of conditional mean features on the SST-2 task for various checkpoints of a GPT-2 architecture trained on IMDb and Amazon. The reported cross-entropy is measured on the validation set. The red trend shows the fit of a square-root function, which is what the upper bound in \Thmref{thm:robust_softmax_lm} looks like.}
	\label{fig:xent_vs_logistic}
\end{figure}

In \Asmpref{asmp:partition_quadratic}, we assert that there is a quadratic fit for the log partition function, which allows us to show in \Lemref{lem:theta_star} that a linear relation holds between $\embfn$ and $\phip{\embfn}$. We validate these theoretical findings by fitting a quadratic function to the log partition function for a subset of embeddings from the IMDb, SST, and AG News datasets (\Figref{fig:logZ_fit}). Here, we describe how we learned $\mA$, $\vb$ and $c$. To ensure $\mA$ is symmetric and positive semi-definite as required, we parametrize $\mA = \mU \mU^T$. As defined earlier, the partition function $Z_{\theta} = \sum_{w'} e^{\theta^\top\phi_{w'}}$ and $\Phi p_{\theta} = \sum_{w'} \frac{e^{\theta^\top\phi_{w'}}}{Z_{\theta}} \phi_{w'}$ for any $\theta\in\R^d$. We minimize the following objective function:

\begin{align}
	\gL(\mU, \vb, c) =
	&\ex_{\theta} \left[\lambda_1 \left(\log(Z_\theta) - \half\theta^\top\mU\mU^\top\theta - \theta^\top\vb - c\right)^2+ \lambda_2 \left\|\Phi p_{\theta} - \mU\mU^\top\theta  - \vb\right\|^2\right]
\end{align}

In practice, we train only on the regression loss (i.e., $\lambda_1=0$, $\lambda_2 = 1$) for the most promising results.
Note that the regression term is trying to learn a linear relationship between between $\theta$ and $\phip{\theta}$ that \Lemref{lem:theta_star} aims to prove.
This ends up learning a matrix $\mA = \mU\mU^\top$ and vector $\vb$ that also satisfy the quadratic form of $\log(Z_\theta)$ from \Asmpref{asmp:partition_quadratic}.

We use 20,000 examples from a mix of IMDb, SST, and AG News embeddings as the training set.
Thus we sample $\theta$ by sampling $s$ from the aforementioned datasets and set $\theta = \embfn(s)$, $\embfn$ being the feature map from pretrained GPT-2.
We use the Adam \citep{kingma2014adam} optimizer with learning rate $1\mathrm{e}{-3}$ for $\mU$ and learning rate $1\mathrm{e}{-4}$ for $\vb$ and $c$. We decay the learning rate every 50 steps by a factor of 0.1. We use the $\mU$ obtained after 8 epochs of training.
We further demonstrate the quality of the learned fit by plotting the true log partition and estimated log partition function for embeddings from other datasets in \Figref{fig:logZ_appendix}.

\subsection{Experimentally Checking \Thmref{thm:robust_softmax_lm}}\label{asubsec:verify_thm}
\Thmref{thm:robust_softmax_lm} can be informally summarized as stating that an $\epsilon$ suboptimality in the cross-entropy of a $d$-dimensional language model propagates to a $\sqrt{\epsilon}$ increase in the logistic loss. We note that the $\tau, B,$ and $\gamma(\pT)$ factors are fixed for a given pre-training corpus and downstream task, so we can empirically test if this square root relationship holds in practice.
In particular, \Thmref{thm:robust_softmax_lm} says
\begin{align}
	\lclf(\phip{\embfn}) \le \tau + \sqrt{2B^2\left(\gamma(\pT)\right)^{-1}\left(\lxent(\embfn,\Phi) - \lxent^*\right)}
\end{align}
Of these, $\tau, B,\gamma(\pT)^{-1}$ and $\lxent^*$ are independent of the language model $(\embfn,\Phi)$ and only depend on the task $\gT$ and language modeling distribution.
Thus we can rewrite this as $\lclf(\phip{\embfn}) \le c + a\sqrt{\lxent(\embfn,\Phi) - b}$ for suitable constants $a,b,c\in\R$.
The left hand side, $\lclf(\phip{\embfn})$, is the logistic loss of conditional mean features from language model $(\embfn,\Phi)$ on task $\gT$ and $\lxent(\embfn,\Phi)$ is the cross-entropy loss of the language model, both of which can be measured in practice.

We train a 117M parameter GPT-2 model from scratch on the IMDb and Amazon corpora, described in \Secref{asubsec:quad}. We maintain checkpoints during training, and for each checkpoint, we measure the cross-entropy of the model on the validation set as well as the performance of the conditional mean features $\phip{\embfn}$ on SST-2. Plotting these values together yields \Figref{fig:xent_vs_logistic}.

We furthermore fit a square root trend, shown in red, to these points. We learn $a,b,c$ such that $y \approx a\sqrt{x-b} + c$, where $y=\lclf(\phip{\embfn})$ is the logistic loss and $x=\lxent(\embfn,\Phi)$ is the cross-entropy loss. For this, we perform a grid search over 100 evenly spaced valid values of $b$, and for each $b$, we perform linear regression on $\sqrt{x-b}$ to find $a$ and $c$. We choose the $a,b,c$ that maximizes the $r$-value of the regression.
While \Thmref{thm:robust_softmax_lm} only provides an upper bound on the logistic loss, this experiment shows that some square-root trend is observable in practice.

\end{document}

\fi